\documentclass[10pt,twocolumn]{article}
\usepackage[utf8]{inputenc}
\usepackage[letterpaper,margin=1in]{geometry}
\usepackage{microtype}
\usepackage{lmodern}
\usepackage[T1]{fontenc}

\usepackage{authblk}
\usepackage{amsmath,amssymb,amsfonts,mathrsfs, amsthm}
\usepackage{mathtools}
\usepackage{bm}
\usepackage{graphicx}
\usepackage{subcaption}
\usepackage{natbib}
\usepackage{hyperref}
\usepackage{url, bbm}
\usepackage{cleveref}
\usepackage{algorithm, algorithmic}
\crefformat{equation}{#2(#1)#3}
\Crefformat{equation}{#2(#1)#3}
\usepackage{enumitem}
\usepackage[rgb]{xcolor}
\usepackage{tikz}
\usepackage{tkz-graph}
\usepackage{authblk}
\usetikzlibrary{shapes.geometric}
\usetikzlibrary{backgrounds}
\usetikzlibrary{arrows.meta}
\usepackage[framemethod=TikZ]{mdframed}
\numberwithin{equation}{section}

\theoremstyle{plain}
\newtheorem{theorem}{Theorem}[section]
\newtheorem{example}[theorem]{Example}

\newtheorem{lemma}[theorem]{Lemma}
\newtheorem{proposition}[theorem]{Proposition}
\newtheorem{assumption}{Assumption}

\theoremstyle{definition}

\theoremstyle{remark}
\newtheorem{remark}[theorem]{Remark}

\DeclareMathOperator*{\argmin}{arg\,min}

\DeclarePairedDelimiterX{\norm}[1]{\lVert}{\rVert}{#1}
\DeclarePairedDelimiterX{\abs}[1]{\lvert}{\rvert}{#1}

\renewcommand{\epsilon}{\varepsilon}

\newcommand{\rank}[1]{\operatorname{rank}\!\left(#1\right)}

\newcommand{\independent}{\perp\!\!\!\perp}

\newcommand{\var}{\operatorname{Var}}
\newcommand{\cov}{\operatorname{Cov}}
\newcommand{\tr}{\operatorname{tr}}

\newcommand\ti[1]{{\tilde{#1}}}

\newcommand\ivs{{\hat{\beta}^{\mathrm{UP}}_{\mathrm{GMM}, \ell_1}}}
\newcommand\ivsf{{\tilde\beta}}
\newcommand\gmm{{\hat{\beta}^{\mathrm{UP}}_{\mathrm{GMM}}}}

\let\leq\leqslant
\let\geq\geqslant
\let\le\leqslant
\let\ge\geqslant

\newcommand\estn{{\textsc{SplitUP}}}
\newcommand\estnfd{{\textsc{UP-GMM}}}

\definecolor{col1}{RGB}{88,140,126}
\definecolor{col2}{RGB}{242,227,148}
\definecolor{col3}{RGB}{242,174,114}
\definecolor{col4}{RGB}{217,100,89}
\definecolor{col5}{RGB}{140,70,70}
\colorlet{lightgray}{black!15}

\title{
Many Experiments, Few Repetitions, Unpaired Data, 
  and Sparse Effects: Is Causal Inference Possible? }

\author[1]{Felix Schur\thanks{Corresponding author: felix.schur@stat.math.ethz.ch}}
\author[2]{Niklas Pfister}
\author[3]{Peng Ding}
\author[4,5]{Sach Mukherjee}
\author[1]{Jonas Peters}

\affil[1]{Department of Mathematics, ETH Zurich}
\affil[2]{Lakera AI}
\affil[3]{Department of Statistics, UC Berkeley}
\affil[4]{German Center for Neurodegenerative Diseases (DZNE) \& University of Bonn}
\affil[5]{MRC Biostatistics Unit, University of Cambridge}

\date{}

\begin{document}

\maketitle

\begin{abstract}
    We study the problem of estimating causal effects under hidden confounding in the following unpaired data setting: we observe some covariates $X$ and an outcome $Y$ under different experimental conditions (environments) but do not observe them jointly -- we either observe $X$ or $Y$. Under appropriate regularity conditions, the problem can be cast as an instrumental variable (IV) regression with the environment acting as a (possibly high-dimensional) instrument. When there are many environments but only a few observations per environment, standard two-sample IV estimators fail to be consistent. We propose a GMM-type estimator (\estn{}) based on cross-fold sample splitting of the instrument–covariate sample and prove that it is consistent as the number of environments grows but the sample size per environment remains constant. We further extend the method to sparse causal effects via $\ell_1$-regularized estimation and post-selection refitting. 
\end{abstract}

\section{Introduction} \label{sec:intro}
\subsection{A Motivating Example} \label{sec:motivex}
\paragraph{An experiment with unpaired data.} 
Suppose that we have some response $Y$ (e.g., a phenotype), 
some covariates $X$ (e.g., a genotype)
and that we observe the system under $m$ different experimental conditions (e.g., using gene modification technology).
Suppose we are interested in estimating the causal effect from $X$ to $Y$ but that we are facing the following two challenges: 
In each experiment, we may observe multiple i.i.d.\ repetitions but for each repetition, we can either measure $X$ or $Y$ but not both (see \Cref{tab:intro}) 
\begin{table}[!htbp]
    \centering
\small
\begin{tabular}{c||c|c|c|c|c|c|c|c}
Exp. & $1$ & $1$ & $\cdots$ & $2$ & $2$ & $2$ & $\cdots$ & $m$\\\hline \hline
$X_1$  & $1.8$ & $\times$ & $\cdots$ & $3.2$ & $\times$ & $\times$ & $\cdots$ & $0.6$\\\hline
$\vdots$ & $\vdots$ &$\vdots$ &$\vdots$ &$\vdots$ &$\vdots$ &$\vdots$ & $\vdots$ & $\vdots$ \\\hline
$X_d$  & $2.0$ & $\times$ & $\cdots$ & $0.2$ & $\times$ & $\times$ & $\cdots$ & $1.0$\\\hline
$Y$  & $\times$ & $2.5$ & $\cdots$ & $\times$ & $-3.7$ & $1.1$ & $\cdots$ & $\times$
\end{tabular}
    \caption{
    We consider the problem of estimating the causal effect between $X$ and $Y$ in the presence of hidden confounding when data are unpaired. The table shows an example dataset, where we observe the system under $m$ different experimental conditions.}
    \label{tab:intro}
\end{table}
and there may be unobserved confounding between $X$ and $Y$.
It may come as a surprise that in such a setting, consistent estimation of causal effects is 
possible under weak assumptions.

\paragraph{IV and regression of the means.}
These assumptions are satisfied, 
e.g., 
if the experimental conditions influence $X$ but do not influence $Y$ directly or via another causal path (we will make this precise in \Cref{sec:identifsparse}). 
This condition is well-studied in the literature on instrumental variables (IV) \citep{AngristKrueger1992, angrist1995identification, angrist1996identification, StaigerStock1997}.
Let us assume for a moment that for each experimental condition $j \in \{1, \ldots, m\}$, we observe all $X$ and all $Y$, so the data are paired. 
We can then apply IV 
estimators to the example above when using a one-hot encoding for the different experimental conditions. 
In this case (see \Cref{sec:categor} for a derivation), the classical two-stage least squares (TSLS) estimator corresponds to the following simple procedure:
within each experimental condition, we
separately average the $X$'s and $Y$'s,
resulting in a single (average) value for $X$ and a single value for $Y$ per experiment;
we then regress the $m$ values containing  $Y$ averages on the $m$ values containing $X$ averages.
Clearly, this procedure does not rely on the $X$ and $Y$ values to be paired, so it is sill applicable in the unpaired setting.
This then is not an  instantiation of IV anymore but of two-sample IV (\textsc{TS-IV}) \citep{AngristKrueger1992, inoue2010two, burgess2013summ},
a modification of IV, which has been developed 
for unpaired data (without a special focus on categorical instruments).
We regard these two algebraically trivial insights as important as the simple estimator of 
taking the average per experiment and then performing regression on the average is
often considered in practice -- both in paired and unpaired settings \citep{deaton1985panel, greenland1992methods, king2004ecological, peres2017linking}. The reasoning above provides a precise interpretation
of the target of inference: 
if the experimental condition
does not modify the causal effect, the averaging allows us to remove hidden confounding.

\paragraph{More repetitions versus more observations.}
Applying standard theory from instrumental variables now provides us with guarantees
such as consistent estimation if the number of observations per experiment converges to infinity. 
In practice, however, we may instead see many experiments and  
few repetitions per experiment, a setting that is not well-covered by these asymptotics. 
In this work, we propose a different estimator (\estn{})
that targets a setting, where
the number 
$m$ of experimental conditions (i.e., the number of instruments) 
is growing but the number of repetitions per experiment remains constant. 
We prove identifiability in the setting with $m \rightarrow \infty$ and prove consistency of the estimator. 
\Cref{fig:intro} 
shows that 
\estn{}
outperforms 
classical estimators such as \textsc{TS-IV}
on finite data. 
\begin{figure}[!htbp]
    \centering
        \includegraphics[width=0.99\linewidth]{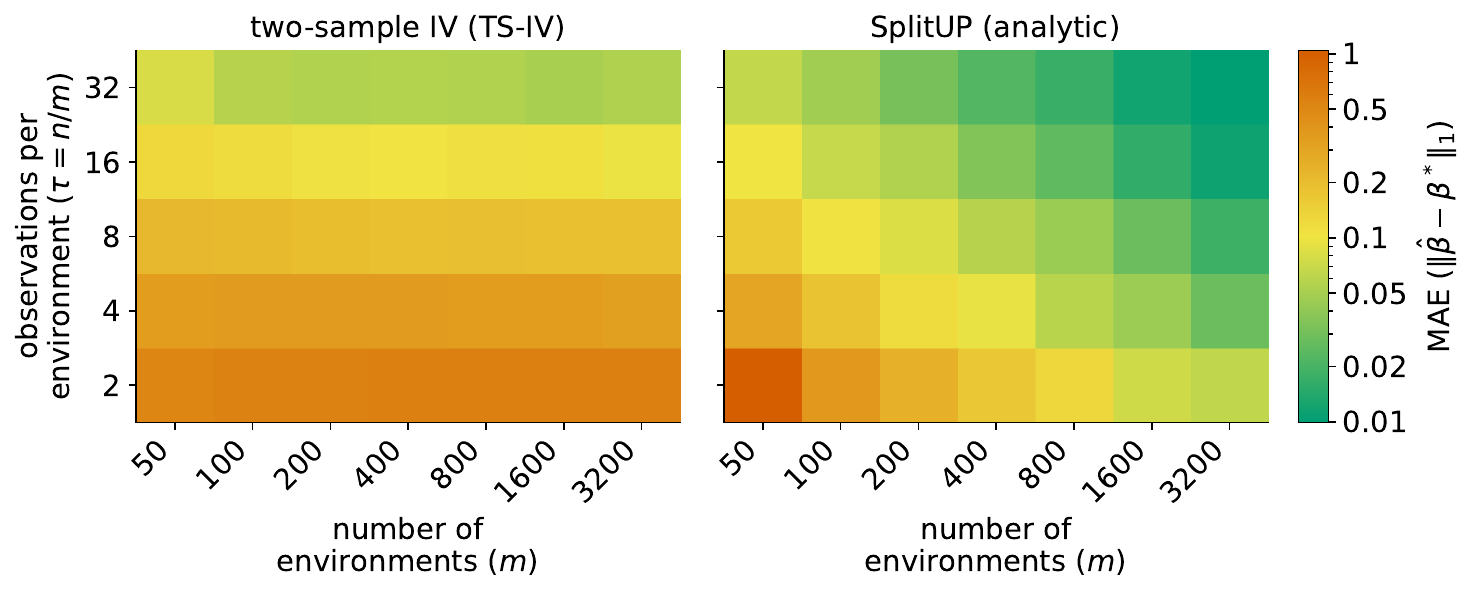}
    \caption{Estimating causal effects in a data set of the form of \Cref{tab:intro}. Our proposed estimator \estn{} (right) outperforms existing methods (left) if the number $m$ of experimental conditions increases. Indeed, we prove identifiability in the setting $m \to \infty$ in Section~\ref{sec:identifsparse}. 
    In Section~\ref{sec:estimation}, we show that unlike existing estimators (such as TS-IV on the left), \estn{} 
    is consistent as $n,m \to \infty$ and $n/m \to r \in (0,\infty)$.}
    \label{fig:intro}
\end{figure}

\paragraph{Motivation from biology.}
Applications in which data are obtained in an unpaired fashion under different causal environments abound, particularly in the biomedical domain. In contemporary biology, it is possible to measure molecular variables (such as gene or protein expression levels) at large-scale and under different causal regimes (such as defined interventions at the gene or protein level or via chemical compounds) 
and such data are the focus of much ongoing work in biotechnology and machine learning \citep[see, e.g.,][]{replogle2022,lopez2022large,lagemann2023}. Furthermore, 
it is increasingly common to consider many experimental conditions (e.g. intervention on many different genes).
Applied interest often focuses on the effects of such interventions on concrete phenotypes (such as growth rate or cellular behaviour) and it is therefore common to study phenotypes under intervention (`screening'). However, often cells have to be disturbed or destroyed in order to obtain gene or protein measurements, hence in studies in which both gene/protein levels $X$ and phenotypes $Y$ are of interest, the actual cells involved may be different and the data are therefore unpaired in the sense of this paper. %
These issues are also important in molecular medicine, where interventions on laboratory cells are used to shed light on patient responses to the same interventions (such as studying the effect of multiple cancer drugs on cells in the lab and linking these data to treatment outcomes \citep{yang2012GDSC,kirkham2025}). In such settings, the data are typically unpaired since the actual cells involved in the various steps are distinct.

\paragraph{Continuous instruments and many covariates.}
The estimators and theoretical results that we develop in this paper
apply to continuous instruments, too. 
We also allow for
settings with many covariates: 
identifiability 
and consistency still hold when 
the number of covariates is strictly larger than the number of instruments -- if the causal effect can assumed to be sparse (see Sections~\ref{sec:identifsparse} and~\ref{sec:estimation}). %

\subsection{Related work}
\paragraph{Two–sample IV and two-sample Mendelian randomization.}
Methods from the
two–sample IV and two-sample (summary-data) Mendelian randomization literature can be applied to our setting (even if they may not be consistent, see Section~\ref{sec:42}). 
Two-sample IV estimators combine instrument–exposure and instrument–outcome information from separate samples \citep{angrist1995split, inoue2010two, pacini2016robust, zhao2019two}; in 
econometrics,
such two-sample moment combinations have been used to study compulsory schooling and returns to education, for example in age-at-entry and split-sample IV 
designs \citep{AngristKrueger1992}.
Analogous designs underpin two–sample Mendelian randomization (MR) in genetics, where variants act as instruments \citep{hartwig2016two}.
Beyond these, two–sample MR theory highlights design issues such as winner’s-curse and sample overlap, which can bias naive estimators \citep{pierce2013efficient,burgess2016overlap}, as well as pervasive horizontal pleiotropy addressed by mode-based and mixture/likelihood approaches \citep{hartwig2017mode,qi2019mrmix,morrison2020cause}.
Our unpaired-sample formulation differs from previous work on two-sample IV and Mendelian randomization in that we allow for high-dimensional instruments and/or large number dimension of the treatment.
More concretely, we propose consistent estimators in regimes where the ratio of the sample size $n$ to the instrument dimension $m$ converges to a positive constant and/or the dimension of the treatment $d$ is larger than the dimension of the instrument $m$.

\paragraph{Sparse IV regression.}
A growing literature studies instrumental variable models in which the structural coefficient vector $\beta^*$ is sparse and the number of endogenous regressors and instruments can be large. One approach keeps the two-stage structure of \textsc{2SLS} while imposing an $\ell_1$-penalty in the structural equation. \citet{zhu2018sparse} analyzes $\ell_1$-regularized \textsc{2SLS} for triangular models, allowing both endogenous regressors and instruments to exceed the sample size, and establishes high-dimensional consistency and error bounds when the first-stage and structural parameters are sufficiently sparse. A related two-stage regularization framework is developed by \citet{lin2015regularization}, who use sparsity-inducing penalties in both stages to estimate fitted regressors and a sparse structural equation. In large systems of simultaneous equations, \citet{chen2018two} propose two-stage penalized least squares with ridge in the first stage and adaptive Lasso in the second, proving oracle-type results for support recovery and asymptotic normality of the selected nonzero effects. For inference in these settings, \citet{gold2020inference} construct a desparsified (one-step) GMM update for high-dimensional IV and show asymptotic normality of the debiased estimator, yielding valid confidence intervals for components of $\beta^*$ when initialized by two-stage Lasso-type estimators. Complementing these results, \citet{belloni2022manyendo} consider models with many endogenous variables and many instruments, using orthogonalized moments and Lasso only for nuisance estimation; they provide uniformly valid post-selection inference via multiplier bootstrap. 
More recently, \citet{pfister2022identifiability} establish identifiability of a sparse $\beta^*$ in linear IV models—which may be achievable when the instrument dimension is smaller than $d$—derive graphical conditions, and propose the \textsc{spaceIV} estimator 
(paired observations).
\citet{tang2023synthetic} also adopt the sparse causation premise and construct a synthetic instrument from $X$; under linear SEMs they show identifiability despite unmeasured confounding and cast estimation as $\ell_0$-penalization (paired observations).
Complementing these, \citet{huang2024sparse} study sparse causal effects with two-sample summary statistics (two-sample MR) when the variance of the instrument $\var(I)$ is invertible, adapting \textsc{spaceIV} to a summary-data regime. However, they do not provide theoretical guarantees such as consistency or asymptotic normality for their estimator.
A second strand regularizes IV estimation directly through penalized moment conditions, without an explicit two-stage Lasso second stage. \citet{caner2009lasso} introduces a Lasso-type GMM estimator that adds an $\ell_1$-style penalty to the GMM criterion and derives model selection and estimation guarantees under sparsity. Building on this idea, \citet{shi2016estimation} studies GMM-Lasso for linear structural models with many endogenous regressors and proves oracle inequalities for sparse $\beta^*$. In a related but distinct formulation, \citet{fan2014endogeneity} propose penalized focused GMM (FGMM) to address incidental endogeneity in high dimensions, establishing support recovery and asymptotic normality for the active coefficients under suitable conditions.

\paragraph{Many-instrument and weak-instruments.}
A large literature studies weak and many instruments in standard single-sample IV. We also consider the many instrument setting 
(or high-dimensional instrument)
setting. Since we assume that the 
norm 
of the instrument is constant with increasing sample size (which implies that the magnitude of each component of the instrument is decreasing in the sample size) our work also relates to the weak instrument literature. Early work by \citet{StaigerStock1997} showed that 
results in
conventional asymptotics can be misleading when instruments are weak, motivating alternative limiting frameworks and bias approximations. Many-instrument asymptotics (in the sense of the work by \citet{Bekker1994} and the subsequent contributions such as the ones by \citet{DonaldNewey2001} and \citet{HansenHausmanNewey2008})
analyze the behavior of IV estimators when the number of instruments grows with the sample size, and 
motivate
procedures that remain well behaved in that regime. Practical diagnostics and test procedures for weak instruments are surveyed by \citet{AndrewsStockSun2019}, who synthesize theory and practice for both weak and many instruments. \citet{StockYogo2005} develop widely used critical values for weak-IV tests based on first-stage $F$-statistics. High-dimensional extensions that allow for many controls and potential instruments include the post-double-selection methods of \citet{BelloniChernozhukovHansen2014}, which provide uniformly valid inference on treatment effects after model selection in sparse high-dimensional regressions, and related work on $\ell_1$-regularized IV and debiased machine learning for structural parameters \citep{BelloniChernozhukovHansen2012}.
\citet{ChoiGuShen2018} develop weak-instrument robust inference for two-sample IV, constructing Anderson--Rubin, Kleibergen--$K$, and conditional likelihood ratio-type tests whose validity does not rely on strong instruments and that remain robust to heterogeneous second moments across samples.
We analyze the many-instrument regime 
for unpaired observations,
a setting that appears to be largely unexplored in the existing literature. We show that naively constructed estimators in this regime are asymptotically biased due to an measurement-error effect that is specific to the two-sample, high-dimensional instrument (i.e., many-instrument) setting and does not arise in standard 
paired-sample
estimators. We show how to remove this bias and obtain consistent, asymptotically normal estimators.

Going beyond existing literature, we characterize identifiability for unpaired IV estimators in the regime 
$d>m$ (with a sparse structural parameter)
and  establish consistency and asymptotic normality of straight-forward estimators. 
Additionally, we 
characterize identifiability 
 when the instrument is high-dimensional ($m \to \infty$).
 For this setting, we propose novel estimators and  
prove consistency.

\section{Unpaired Data With Hidden Confounding}\label{sec:model}

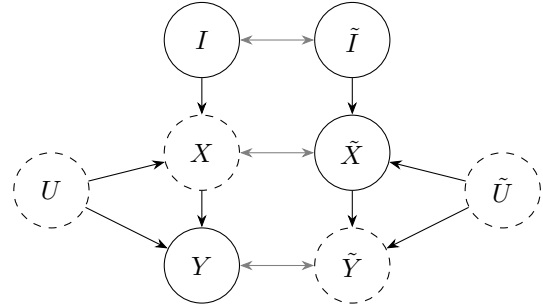
\begin{figure}[!htbp]
    \centering
    \begin{tikzpicture}[
          >=Stealth,
          node distance=1.2cm and 1.6cm,
          var/.style={draw, circle, inner sep=2pt, minimum width=10mm, minimum height=6mm},
          latent/.style={draw, circle, dashed, inner sep=2pt, minimum width=10mm, minimum height=6mm},
          err/.style={draw, circle, dashed, inner sep=1pt, minimum size=6mm}
        ]
        \node[var] (I) at (-1,2) {$I$};
        \node[var] (It) at (1,2) {$\ti I$};
        
        \node[latent] (H)   at (-3, 0) {$U$};
        \node[latent]    (X)   at (-1, 0.5) {$X$};
        \node[var]    (Y)   at (-1,-1) {$Y$};
        
        \node[latent] (Ht)  at ( 3, 0) {$\ti{U}$};
        \node[var]    (Xt)  at ( 1, 0.5) {$\ti{X}$};
        \node[latent]    (Yt)  at ( 1,-1) {$\ti{Y}$};
        
        \draw[->] (I) -- (X);
        \draw[->] (It) -- (Xt);
        
        \draw[->] (H)  -- (X);
        \draw[->] (X)  -- (Y);
        \draw[->] (H)  -- (Y);
        
        \draw[->] (Ht)  -- (Xt);
        \draw[->] (Xt)  -- (Yt);
        \draw[->] (Ht)  -- (Yt);

\draw[<->, gray] (X)  -- (Xt);
\draw[<->, gray] (I)  -- (It);
\draw[<->, gray] (Y)  -- (Yt);
    
    \end{tikzpicture}
    \caption{
    Simplified visualization of the data generating process in Eq.~\eqref{eq:scm}. $U$ and $\ti U$ represent unobserved confounders that visualize the dependence between $X$ and $\epsilon$. 
    The model allows for complex dependencies (and even equalities) between variables in the tilde and non-tilde world, as indicated by the gray edges.
    We discuss other, including more general data-generating processes that satisfy \Cref{eq:scm} in \Cref{sec:more_graphs}.
    }
    \label{fig:dag_intro}
\end{figure}

We consider two triplets of random variables: $(I, X, Y)$ and $(\ti I, \ti X, \ti Y)$ but assume access only to partial observations. Specifically, we observe $(I, Y)$ and $(\ti I, \ti X)$ (or, more precisely, multiple i.i.d.~copies thereof), the variables $X$ and $\ti Y$ remain unobserved. The corresponding data-generating process is given by (unobserved variables are shown in gray):
\begin{align}\label{eq:scm}
  Y = {\color{gray}{X^{\top}}} {\color{gray}{\beta^*}} + {\color{gray}{\epsilon}}, 
  \qquad
  {\color{gray}{\ti Y}} = \ti X^{\top} {\color{gray}{\ti \beta^*}} + {\color{gray}{\ti \epsilon}},
\end{align}
where $\beta^*$ and $\ti \beta^*$ represent the causal effects, that is,
    $\mathbb{E}[Y \mid \mathrm{do}(X := x)] = x^{\top} \beta^*$,
    $\mathbb{E}[\ti Y \mid \mathrm{do}(\ti X := x)] = x^{\top} \ti \beta^*$.

The noise terms $\epsilon$ and $\ti \epsilon$ are not assumed to be independent of $X$ and $\ti X$, respectively (thereby, implicitly, allowing for hidden confounding between $X$ and $Y$.
\Cref{fig:dag_intro} shows a graphical representation of the data-generating process. The setting even allows for dependence between the two samples. 
We discuss other data-generating processes 
that satisfy \Cref{eq:scm} in \Cref{sec:more_graphs}.

In Section~\ref{sec:identifsparse}, we show 
that the causal effect $\beta^*$ (or $\ti \beta^*$ or both) is identifiable. This is non-trivial
because of the following reasons: 
(i) Due to the hidden confounding, 
naively regressing $Y$ on $X$ (or $\ti Y$ on $\ti X$) (if these were available) would generally lead
 to biased estimates.
(ii) 
We only have unpaired data, so
regressing $Y$ on $X$ (or $\ti Y$ on $\ti X$) is not feasible.
Unlike classical work on unpaired data or two-sample IV $(I, X, Y)$ and $(\ti I, \ti X, \ti Y)$
may but do not have to be equal in distribution. 
(iii) 
Observations across the two samples may be dependent, see, e.g.,  \Cref{sec:more_graphs}.

\subsection{Setting and main assumptions} \label{sec:setting}
Let $n, \ti n, m, d \in \mathbb{N}$. Let $(I, X, Y)$ and $(\ti I, \ti X, \ti Y)$ be random vectors, 
whose components are $m$, $d$, and $1$-dimensional, respectively, and which
generated by the process described in \Cref{eq:scm} where $\beta^*, \ti \beta^* \in \mathbb{R}^d$ and $\epsilon$ and $\ti \epsilon$ are random variables.
We assume that all random variables have finite second moments.
We assume that there exist two i.i.d.~samples $\{(I_i,{\color{gray}{X_i}},Y_i)\}_{i\in[n]}$ and $\{(\ti I_i,\ti X_i,{\color{gray}{\ti Y_i}})\}_{i\in[\ti n]}$, but the observed data are \emph{unpaired}: in the first system we only observe $\{(I_i,Y_i)\}_{i\in[n]}$ and in the second $\{(\ti I_i,\ti X_i)\}_{i\in[\ti n]}$. 
The case $(I, X, Y) \overset{d}{=} (\ti I,\ti X,\ti Y)$ is contained as a special case.

\begin{assumption}
\label{ass:1}
    We have
    \begin{enumerate}[label=(\roman*), nosep]
      \item \label{ass:1:1} $\cov(\ti I, \ti X) = \cov(I, X)$ and
      \item \label{ass:1:2} $\mathbb{E}[\epsilon \mid I] = 0.$
    \end{enumerate}
\end{assumption}

{
\renewcommand\theassumption{1'}
\begin{assumption}
\label{ass:1'}
    We have
    \begin{enumerate}[label=(\roman*'), ref=(\roman*'), nosep]
        \item \label{ass:1':1} $\cov(\ti I, \ti Y) = \cov(I, Y)$ and
        \item \label{ass:1':2} $\mathbb{E}[\ti \epsilon \ |  \ti I] = 0$.
    \end{enumerate}
\end{assumption}
\renewcommand\theassumption{\arabic{assumption}}%
}

We will see in Section~\ref{sec:identifsparse} that under \Cref{ass:1} we can identify $\beta^*$ in the non-tilde system and under \Cref{ass:1'} we can identify $\ti \beta^*$ in the tilde system.
To simplify notation, in this work, we focus on identifying $\beta^*$ using \Cref{ass:1}; analogous arguments hold when we identify $\ti \beta^*$ using \Cref{ass:1'}.
\Cref{ass:1}~\ref{ass:1:2} (or \Cref{ass:1'}~\ref{ass:1':2}) is called exclusion restriction  or exogeneity and is commonly exploited in the IV literature \citep{angrist1995identification, angrist1996identification, angrist2009mostly}; it allows us to identify the causal effect despite the hidden confounding. 
\Cref{ass:1}~\ref{ass:1:1} (or \Cref{ass:1'}~\ref{ass:1':1}) connects the two systems and guarantees that we can transfer information from one system to the other.

\subsection{Environments as instruments}
In Section~\ref{sec:intro}, we have argued that if
we have access to data collected from multiple environments---for instance, from different experimental setups or hospitals---but we lack explicit features describing these environments 
we can still treat the environment indicator as a vector of categorical instruments (via one-hot encoding).
In such a setting, a sufficient condition for \Cref{ass:1}~\ref{ass:1:1} is that we observe $\ti X$ and $Y$ under the same environment distribution, 
and that the conditional expectations
of $X$ and $\ti X$ coincide within each environment: 
$\mathbb{E}[X | I] = \mathbb{E}[\ti X | I]]$.
An analogous condition holds for \Cref{ass:1}~\ref{ass:1':1}.

\section{Identifiability } \label{sec:identifsparse}
We now show that, 
under Assumption~\ref{ass:1},
the causal effect $\beta^*$ is identifiable from the joint distributions of $(\ti I, \ti{X})$ and 
$(I, Y)$. Without loss of generality and to simplify notation, we assume that all variables are centered and that all variables have finite second moments.
All proofs can be found in the appendix.

We first consider the partially known \emph{finite-dimensional instrument} setting (defined as the setting with $m$ fixed) in \Cref{sec:fd} and later, in \Cref{sec:hd}, we consider the \emph{high-dimensional instrument} setting (defined as the asymptotic setting with $m \to \infty$). 
In both settings, we consider $d$ to be fixed and differentiate the cases where $\beta^*$ is dense and where $\beta^*$ is sparse.
When considering consistency and inference, the settings correspond to  
$m$ fixed, $n, \ti n \to \infty$ (finite-dimensional instrument)
and 
$m, n \to \infty$ and $n/m \to r \in (0, \infty)$ (high-dimensional instrument), respectively.

Some %
remarks are in order. Although we rely on standard results for the Lasso \citep{tibshirani1996regression, buhlmann2011statistics}, the sparse $\beta^*$ setting is not `high-dimensional' in the the sense of $d \to \infty$ (even if the distribution of $(I, X, Y)$ is known, $\beta^*$ is, in general, not identified  if $d > \rank{\cov(I, X)}$).
In contrast, in the high-dimensional instrument setting,
$d$ is fixed but $m \rightarrow \infty$ (and 
$n/m \to r \in (0, \infty)$). In this case, we 
assume that the limit $Q \coloneqq \lim_{m\to\infty} m \cov(\ti I,\ti X)^{\top} \cov(\ti I,\ti X) \in \mathbb{R}^{d \times d}$ exists and is well-defined.

\subsection{Finite-Dimensional Instrument}
\label{sec:fd}
Under \Cref{ass:1}
we have
\begin{equation}
\label{eq:main}
\begin{aligned}
    &\cov(I, Y) -  \cov(\ti I, \ti X) \beta^*
    = \cov(I, Y) - \cov(I, X) \beta^*\\
&\quad     = \cov(I, Y -  X^{\top} \beta^*)
    = \mathbb{E}\left[I   \mathbb{E}\left[\epsilon\ | \ I\right]\right]
    = 0.
\end{aligned}
\end{equation}
Here, we used \Cref{ass:1}~\ref{ass:1:1} in the first equation and \Cref{ass:1}\ref{ass:1:2} in the last equation. Analogously, we can use \Cref{ass:1'} to get $\cov(I, Y) -  \cov(\ti I, \ti X) \ti \beta^* = 0$. 
We now define the set of solutions $\mathcal{S}$ as
\begin{align*}
    \mathcal{S} &\coloneqq \{ \beta \in \mathbb{R}^d \ |  \cov(I, Y) =  \cov(\ti I, \ti X) \beta\}
\end{align*}
($\mathcal{S}$ can be computed from the joint distribution of $(\ti{X}, \ti I)$ and the joint distribution of $(I, Y)$).
We always have $\beta^* \in \mathcal{S}$.
The causal effect $\beta^* \in \mathbb{R}^d$ is identifiable from the joint distribution of $(\ti{X}, \ti I)$ and the joint distribution of $(I, Y)$ via these moment-conditions if and only if $\mathcal{S} = \{\beta^*\}$.
We now state a well-known result from the two-sample IV (also called two-sample MR) literature
\citep{angrist1995split, inoue2010two, hartwig2016two}.
\begin{proposition}[Identifiability for dense $\beta^*$]
\label{thm:iden_dense}
    Assume \Cref{ass:1}. 
    We have $\mathcal{S} = \{\beta^*\}$ if and only if
    \begin{equation}
    \label{eq:3453}
        \mathrm{rank}(\cov(I, X)) = d.
    \end{equation}
    In particular, a necessary condition for identifiability is $m \geq d$.
\end{proposition}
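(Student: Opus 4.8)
The statement to prove is an ``if and only if'': $\mathcal{S} = \{\beta^*\}$ exactly when $\operatorname{rank}(\cov(I,X)) = d$. The plan is to recognize that $\mathcal{S}$ is an affine subspace of $\mathbb{R}^d$ and characterize when it degenerates to a single point. Concretely, writing $C \coloneqq \cov(I,X) = \cov(\ti I, \ti X) \in \mathbb{R}^{m\times d}$ and recalling from \Cref{eq:main} that $\cov(I,Y) = C\beta^*$, we have
\begin{align*}
\mathcal{S} = \{\beta \in \mathbb{R}^d \mid C\beta = C\beta^*\} = \beta^* + \nullspace{C}.
\end{align*}
Hence $\mathcal{S} = \{\beta^*\}$ if and only if $\nullspace{C} = \{\mathbf{0}\}$, i.e.\ if and only if $C$ has trivial kernel, which (by rank--nullity, $C$ being $m\times d$) is equivalent to $\operatorname{rank}(C) = d$. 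This is exactly \Cref{eq:3453}.

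First I would set up the notation and invoke \Cref{eq:main} to justify $\beta^* \in \mathcal{S}$ and the identity $\cov(I,Y) = C\beta^*$ (both already derived in the text, so this step is essentially bookkeeping). Second, I would rewrite $\mathcal{S}$ as the affine translate $\beta^* + \nullspace{C}$ using linearity: $C\beta = C\beta^* \iff C(\beta - \beta^*) = 0$. Third, I would note that a nonempty affine set is a singleton precisely when its direction space is $\{\mathbf{0}\}$, giving the equivalence $\mathcal{S} = \{\beta^*\} \iff \nullspace{C} = \{\mathbf{0}\} \iff \operatorname{rank}(C) = d$. Fourth, for the ``in particular'' clause, since $C \in \mathbb{R}^{m\times d}$ its rank is at most $\min(m,d)$, so $\operatorname{rank}(C) = d$ forces $m \ge d$; this is immediate.

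This argument is entirely elementary linear algebra, so there is no real obstacle --- the only thing requiring any care is making sure the two covariance matrices are genuinely equal (which is \Cref{ass:1}\ref{ass:1:1}) so that the same matrix $C$ appears in both the definition of $\mathcal{S}$ and the moment identity $\cov(I,Y) = C\beta^*$; without that assumption the reduction to $\beta^* + \nullspace{C}$ would fail. I would also remark explicitly that identifiability here is meant relative to these particular moment conditions (as the excerpt already stresses), so no additional distributional input beyond \Cref{ass:1} is used. The proof is short enough that I would present it in full rather than merely sketch it.
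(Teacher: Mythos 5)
Your proof is correct and follows essentially the same route as the paper: both arguments reduce the question to whether $\ker\left(\cov(\ti I,\ti X)\right)$ is trivial, using \Cref{ass:1}~\ref{ass:1:1} to identify the two covariance matrices and \Cref{eq:main} to place $\beta^*$ in $\mathcal{S}$. The only cosmetic difference is that you treat both directions at once via the affine characterization $\mathcal{S} = \beta^* + \nullspace{C}$, whereas the paper proves the forward direction by exhibiting the explicit left-inverse solution and the converse by perturbing $\beta^*$ along a kernel element.
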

\Cref{thm:iden_dense} shows that we have identifiability in our general unpaired setting under the same conditions as for the paired IV setting.

Next, we consider the sparse setting, i.e., 
$\|\beta^*\|_0 \leq s^* \leq d$. 
In this case, we may have identifiability even if 
$\mathrm{rank}(\cov(I, X)) < d$.
We now consider
\begin{equation*}
    \mathcal{S}_{0} \coloneqq \argmin_{\beta \in \mathcal{S}} \|\beta\|_0.
\end{equation*}
Defining $\Sigma_t\coloneqq\{v\in\mathbb{R}^d:\|v\|_0\leq t\}$, we obtain the following identifiability result.
\begin{theorem}[Identifiability for sparse $\beta^*$]
\label{thm:sparse_ident}
    Assume \Cref{ass:1}. The following statements are equivalent:
    \begin{enumerate}[label=(\roman*), nosep]
        \item (\emph{Identifiability via sparsest solution})
        For all $\beta^* \in \mathbb{R}^d$ with
        $\|\beta^*\|_0 \le s^*$, it holds that
        \begin{equation*}
            \mathcal{S}_0 = \{\beta^*\}.
        \end{equation*}
        \item \label{thm:sparse_ident:2}(\emph{Restricted nullspace}) $\ker \left( \cov(I, X) \right)\cap \Sigma_{2s^*}=\{0\}$.
    \end{enumerate}
    In particular, a necessary condition for identifiability is $m \geq 2s^* $.
\end{theorem}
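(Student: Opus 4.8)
The plan is to recognize \Cref{thm:sparse_ident} as the classical equivalence between exact $\ell_0$-recovery of every $s^*$-sparse vector and a restricted-nullspace (``spark'') condition, transported to the moment equations of \Cref{eq:main}. Throughout, abbreviate $M\coloneqq\cov(I,X)$, which equals $\cov(\ti I,\ti X)$ by \Cref{ass:1}(i). When $\beta^*$ is the true causal effect, \Cref{eq:main} gives $\cov(I,Y)=M\beta^*$, so $\mathcal{S}=\{\beta\in\R^d:M\beta=M\beta^*\}=\beta^*+\ker(M)$; thus $\mathcal{S}$, and hence $\mathcal{S}_0$, is determined by the fixed matrix $M$ together with $\beta^*$, and statement (i) is genuinely a claim about $M$ quantified over all $s^*$-sparse $\beta^*$.

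For (ii)~$\Rightarrow$~(i), I would fix $\beta^*$ with $\|\beta^*\|_0\le s^*$ and take an arbitrary $\beta'\in\mathcal{S}_0$. Since $\beta^*\in\mathcal{S}$, minimality of $\beta'$ forces $\|\beta'\|_0\le\|\beta^*\|_0\le s^*$. Both $\beta^*$ and $\beta'$ solve $M\beta=M\beta^*$, so $\beta'-\beta^*\in\ker(M)$, while $\|\beta'-\beta^*\|_0\le\|\beta'\|_0+\|\beta^*\|_0\le 2s^*$; hence $\beta'-\beta^*\in\ker(M)\cap\Sigma_{2s^*}=\{0\}$ by (ii), i.e.\ $\beta'=\beta^*$. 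Since $\mathcal{S}_0\neq\emptyset$, this yields $\mathcal{S}_0=\{\beta^*\}$. The key observation here is that every element of $\mathcal{S}_0$ automatically inherits the bound $s^*$ from $\beta^*\in\mathcal{S}$, which is precisely why the nullspace condition is required only at level $2s^*$.

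For (i)~$\Rightarrow$~(ii), I would argue by contraposition. Suppose some $v\in\ker(M)\cap\Sigma_{2s^*}$ has $v\neq 0$, and put $T\coloneqq\{i:v_i\neq 0\}$, so $1\le|T|\le 2s^*$. Split $T$ into disjoint sets $T_1,T_2$ with $|T_1|=\lceil|T|/2\rceil$ and $|T_2|=\lfloor|T|/2\rfloor$, so that $|T_2|\le|T_1|\le s^*$ and $T_1\neq\emptyset$. Let $\beta^*$ be the vector agreeing with $v$ on $T_1$ and vanishing elsewhere, and let $\beta'$ agree with $-v$ on $T_2$ and vanish elsewhere. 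Then $\|\beta^*\|_0=|T_1|\le s^*$, $\|\beta'\|_0=|T_2|\le\|\beta^*\|_0$, and $\beta^*-\beta'=v$; since $\beta^*$ and $\beta'$ have disjoint supports and $\beta^*\neq 0$, they are distinct. Taking $\beta^*$ as the true causal effect, \Cref{eq:main} gives $\cov(I,Y)=M\beta^*$, and $M\beta'=M\beta^*-Mv=M\beta^*$ shows $\beta'\in\mathcal{S}$. If $\beta^*\in\mathcal{S}_0$, then $\|\beta'\|_0\le\|\beta^*\|_0$ forces $\beta'\in\mathcal{S}_0$ as well, so $\mathcal{S}_0$ has at least two elements; if $\beta^*\notin\mathcal{S}_0$, then $\mathcal{S}_0\neq\{\beta^*\}$ trivially. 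Either way (i) fails for this $s^*$-sparse $\beta^*$. Finally, the stated necessary condition $m\ge 2s^*$ follows from (ii) by a rank count: restricting $M$ to any $2s^*$ of its columns must give an injective linear map, which is impossible unless $m\ge 2s^*$.

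I expect the argument to be routine once the reduction in the first paragraph is in place. The two steps meriting a little care are the sparsity-inheritance observation in the (ii)~$\Rightarrow$~(i) direction, and, in the (i)~$\Rightarrow$~(ii) direction, choosing the support split so that $|T_2|\le|T_1|$ and $\beta^*\neq\beta'$---this is what makes the failure of unique sparsest recovery genuinely contradict (i), regardless of whether $\beta^*$ itself attains the minimal $\ell_0$-norm over $\mathcal{S}$.
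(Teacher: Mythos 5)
Your proof is correct and follows essentially the same route as the paper's: the (ii)~$\Rightarrow$~(i) direction is the identical difference-in-the-kernel argument, and the (i)~$\Rightarrow$~(ii) direction is the same support-splitting construction of two distinct $s^*$-sparse vectors with equal image. If anything, your version is slightly more careful than the paper's, since you explicitly arrange $\|\beta'\|_0\le\|\beta^*\|_0$ and handle the case where $\beta^*$ itself may fail to attain the minimal $\ell_0$-norm over $\mathcal{S}$, a point the paper's ``contradicting (i) for $\beta^*=v$'' leaves implicit.
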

\Cref{thm:sparse_ident} shows that the restricted nullspace condition for $\cov(I, X)$ is a necessary and sufficient condition for identifiability via $\mathcal{S}_0$. The conditions for sparse identifiability are weaker than identifiability in the dense setting and they are strictly weaker if and only if $d \geq 2s^*$. Intuitively, because we want uniqueness among all $s^*$-sparse solutions, we need to ensure that the difference of two $s^*$-sparse vectors lies in the nullspace, and that difference can have up to $2s^*$ nonzeros, hence the $2$.
This shows that in settings where we can reasonably assume sparse causal effects in unpaired data, 
identifiability is still possible even with weak instruments (e.g., when $m$ is small).

\subsection{High-Dimensional Instruments}
\label{sec:hd}
In the high-dimensional instrument setting we assume that $m \to \infty$, 
so the dimensions of $\cov(I, Y)$ and $\cov(\ti I, \ti X)$ increase with $n$. We consider the limit $Q = \lim_{m\to\infty} m \cov(\ti I,\ti X)^\top \cov(\ti I,\ti X)$ assuming that it exists; 
conditions for identifiability are then expressed in terms of $Q$ rather than the finite-$m$ matrix \footnote{This setting in quite natural when considering categorical instruments as it ensures that $\var(X)$ stays bounded as $m \to \infty$. For details see \Cref{ex:ci}.}. 
Under \Cref{ass:1}, we have $Q_Y \coloneqq \lim_{m\to\infty} m \cov(\ti I,\ti X)^{\top} \cov(I,Y) = Q\beta^*$:
\begin{align*}
&    m\cov(\ti I,\ti X)^{\top} \cov(I,Y) \\
=\;  &m\cov(\ti I,\ti X)^{\top} \big(\cov(I,X) \beta^* + \cov( I, \epsilon) \big)\\
    =\;  &m\cov(\ti I,\ti X)^{\top} \cov(\ti I,\ti X) \beta^*
     \to Q \beta^*.
\end{align*}
Thus,
\begin{equation*}
    \beta^* \in \mathcal{S}_{\infty} \coloneqq \{ \beta \in \mathbb{R}^d \ | \ Q \beta = Q_Y\}.
\end{equation*}

\begin{theorem}[Identifiability for high-dimensional instrument and dense $\beta^*$]
\label{thm:iden_dense_hd}
    Assume \Cref{ass:1}. We have $\mathcal{S}_{\infty} = \{\beta^*\}$ if and only if
    \begin{equation*}
        \mathrm{rank}(Q) = d.
    \end{equation*}
\end{theorem}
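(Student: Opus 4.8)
The plan is to reduce the statement to an elementary fact about the kernel of the $d \times d$ matrix $Q$, mirroring the argument behind Proposition~\ref{thm:iden_dense} in the finite-dimensional case but with $\cov(I,X)$ replaced by $Q$. The one ingredient that does the real work has already been recorded just above the theorem: under \Cref{ass:1} we have
\[
Q_Y = \lim_{m\to\infty} m\,\cov(\ti I,\ti X)^{\top}\cov(I,Y) = Q\beta^*,
\]
using the decomposition $\cov(I,Y) = \cov(I,X)\beta^* + \cov(I,\epsilon)$ together with $\cov(I,X) = \cov(\ti I,\ti X)$ and $\cov(I,\epsilon) = \E[I\,\E[\epsilon\mid I]] = 0$.

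Granting $Q_Y = Q\beta^*$, the set $\mathcal{S}_\infty = \{\beta \in \mathbb{R}^d : Q\beta = Q_Y\}$ becomes $\{\beta : Q\beta = Q\beta^*\} = \{\beta : Q(\beta-\beta^*)=0\} = \beta^* + \ker(Q)$; in particular $\mathcal{S}_\infty$ is always nonempty since $\beta^* \in \mathcal{S}_\infty$. Hence $\mathcal{S}_\infty = \{\beta^*\}$ if and only if $\ker(Q) = \{0\}$, and it remains only to convert this into a rank statement.

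For that final step I would argue both directions. If $\mathrm{rank}(Q) = d$ then, since $Q \in \mathbb{R}^{d\times d}$, the rank–nullity theorem gives $\ker(Q) = \{0\}$, so $\mathcal{S}_\infty = \{\beta^*\}$. Conversely, if $\mathrm{rank}(Q) < d$ then $\ker(Q)$ contains some nonzero $v$, and then $\beta^* + v \in \mathcal{S}_\infty$ with $\beta^* + v \neq \beta^*$, so $\mathcal{S}_\infty$ is not a singleton; note this direction does not use the precise value of $\beta^*$.

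I do not anticipate a genuine obstacle here: the only things to be careful about are bookkeeping items — that the limits defining $Q$ and $Q_Y$ exist (assumed in \Cref{sec:hd}), that for fixed $d$ the limit passes through the linear map $\beta \mapsto m\,\cov(\ti I,\ti X)^{\top}\cov(\ti I,\ti X)\beta$ componentwise, and that $Q$ is square so that full rank is equivalent to trivial kernel. (It is also worth remarking that $Q$ is symmetric positive semidefinite, being a limit of matrices $m\,\cov(\ti I,\ti X)^{\top}\cov(\ti I,\ti X)$, although this is not needed for the proof.)
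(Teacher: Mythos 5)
Your proposal is correct and follows essentially the same route as the paper: it reduces to the identity $Q_Y=Q\beta^*$ derived in the text just before the theorem, and then observes that $\mathcal{S}_\infty=\beta^*+\ker(Q)$ is a singleton precisely when the square matrix $Q$ has full rank. The paper's proof states the two directions in exactly this way (invertibility of $Q$ for sufficiency, a nonzero kernel element $h$ giving $\beta^*+h\in\mathcal{S}_\infty$ for necessity).
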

Defining
$\mathcal{S}_{0, \infty} \coloneqq \argmin_{\beta \in \mathcal{S}} \|\beta\|_0
$, we also prove the result for sparse $\beta^*$. 
\begin{theorem}[Identifiability for high-dimensional instrument and sparse $\beta^*$]
\label{thm:sparse_ident_hd}
    Assume \Cref{ass:1}. Fix $s^* \leq d$. The following statements are equivalent:
    \begin{enumerate}[label=(\roman*), nosep]
        \item (\emph{Identifiability via sparsest solution})
        For all $\beta^* \in \mathbb{R}^d$ with
        $\|\beta^*\|_0 \le s^*$, it holds that
        \begin{equation*}
            \mathcal{S}_{0, \infty} = \{\beta^*\}.
        \end{equation*}
        \item (\emph{Restricted nullspace}) $\ker \left( Q \right)\cap \Sigma_{2s^*}=\{0\}$.
    \end{enumerate}
\end{theorem}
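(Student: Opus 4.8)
The plan is to reduce this to the restricted-nullspace argument already used for the finite-dimensional instrument (\Cref{thm:sparse_ident}), with the limiting Gram matrix $Q$ taking the role of $\cov(I,X)$. The starting point is the affine description of the solution set: by \Cref{ass:1} we have $Q_Y = Q\beta^*$ (the computation given just above the statement), so $Q\beta = Q_Y$ holds exactly when $Q(\beta-\beta^*)=0$, i.e.\ $\mathcal{S}_{\infty} = \beta^* + \ker(Q)$ --- the same structure that underlies \Cref{thm:iden_dense_hd}. Here $\mathcal{S}_{0,\infty}$ denotes the set of $\ell_0$-sparsest elements of $\mathcal{S}_{\infty}$. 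Once this is in place, statement (i) becomes the question of when the sparsest point of the coset $\beta^* + \ker(Q)$ is unique and equals $\beta^*$, and $\ker(Q)\cap\Sigma_{2s^*}=\{0\}$ says precisely that no nonzero vector supported on at most $2s^*$ coordinates lies in $\ker(Q)$.

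For (ii) $\Rightarrow$ (i), fix $\beta^*$ with $\|\beta^*\|_0\le s^*$ and take $\beta\in\mathcal{S}_{0,\infty}$. Minimality gives $\|\beta\|_0\le\|\beta^*\|_0\le s^*$, hence $\beta-\beta^*$ has at most $2s^*$ nonzero entries; since $\beta-\beta^*\in\ker(Q)$ by the affine description, condition (ii) forces $\beta=\beta^*$, so $\mathcal{S}_{0,\infty}=\{\beta^*\}$.

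For the contrapositive of (i) $\Rightarrow$ (ii), suppose there is $v\in\ker(Q)\cap\Sigma_{2s^*}$ with $v\neq 0$; it suffices to exhibit \emph{one} $s^*$-sparse $\beta^*$ with $\mathcal{S}_{0,\infty}\neq\{\beta^*\}$. If $\|v\|_0\le s^*$, take $\beta^*:=v$: then $0=\beta^*-v\in\mathcal{S}_{\infty}$ is strictly sparser than $\beta^*$, so $\beta^*\notin\mathcal{S}_{0,\infty}$. Otherwise $s^*<\|v\|_0\le 2s^*$; partition $\operatorname{supp}(v)=S_1\sqcup S_2$ with $|S_1|=s^*$ and $|S_2|=\|v\|_0-s^*\le s^*$, set $\beta^*:=v|_{S_1}$ and $\beta':=\beta^*-v=-v|_{S_2}\in\mathcal{S}_{\infty}$. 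If $\|v\|_0<2s^*$ then $\|\beta'\|_0<s^*=\|\beta^*\|_0$ and $\beta^*\notin\mathcal{S}_{0,\infty}$. If $\|v\|_0=2s^*$ then $\|\beta'\|_0=s^*=\|\beta^*\|_0$ with $\beta'\neq\beta^*$, and either $\mathcal{S}_{\infty}$ contains a point with fewer than $s^*$ nonzeros (so $\beta^*\notin\mathcal{S}_{0,\infty}$) or $s^*$ is the minimal $\ell_0$-value on $\mathcal{S}_{\infty}$, in which case $\{\beta^*,\beta'\}\subseteq\mathcal{S}_{0,\infty}$. In every case (i) is violated.

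The affine reduction, the subadditivity $\|\beta-\beta^*\|_0\le\|\beta\|_0+\|\beta^*\|_0$, and the bookkeeping of cases are routine. The one point that needs care is the boundary case $\|v\|_0=2s^*$ in the necessity direction: one must observe that violating (i) only requires a single bad $\beta^*$, and then split according to whether the sparsest solution has $\ell_0$-norm equal to or strictly below $s^*$. This is exactly where the factor $2$ in $\Sigma_{2s^*}$ is indispensable, mirroring \Cref{thm:sparse_ident}. Overall I expect the final proof to be a near-verbatim adaptation of the proof of \Cref{thm:sparse_ident}, with $\cov(I,X)$ replaced by $Q$ and $\mathcal{S}$ by $\mathcal{S}_{\infty}$.
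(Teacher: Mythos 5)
Your proposal is correct and follows essentially the same route as the paper's proof: the sufficiency direction is the identical subadditivity argument ($\beta-\beta^*\in\ker(Q)\cap\Sigma_{2s^*}$ forces $\beta=\beta^*$), and the necessity direction is the same support-splitting construction of two $s^*$-sparse candidates from a nonzero $2s^*$-sparse kernel vector. Your case analysis at the boundary $\|v\|_0=2s^*$ is merely a more explicit spelling-out of what the paper's proof leaves implicit.
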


\section{Estimation 
}\label{sec:estimation}
For the setting with finite-dimensional instrument and sparse $\beta^*$ we propose a GMM estimator with $\ell_1$ regularization.
For the high-dimensional instrument setting, we propose novel GMM-based estimators (for dense and sparse $\beta^*$) and show that they are consistent. Moreover, we demonstrate that in the high-dimensional instrument regime the standard two-sample IV estimator is asymptotically biased, and we introduce a cross-moment GMM estimator that remains consistent and asymptotically normal.

As in 
\Cref{sec:setting} we assume that $\{(I_i,Y_i)\}_{i=1}^n$ are i.i.d.\ and  $\{(\ti I_j,\ti X_j)\}_{j=1}^{\ti n}$ are i.i.d., but, additionally, also that the two samples are independent. %
Define
$
     N \coloneqq n + \ti n, \tau_n \coloneqq \frac{n}{N} \to \tau \in (0, 1),
     \ti\tau_n \coloneqq \frac{\ti n}{N} \to \ti \tau \in (0, 1).
$

\subsection{Finite-Dimensional Instrument}
For completeness and to help with intuition, we first state the consistency and asymptotic normality for dense $\beta^*$. 
Similar results have appeared in the standard two-sample IV literature \citep{angrist1991compulsory, inoue2010two, burgess2013summ, bowden2015egger}.

For a positive definite weighting matrix $W_N\in\mathbb R^{m\times m}$, define the sample moment
\begin{equation}\label{eq:sample-moment}
    \hat g_N(\beta) \coloneqq \frac{1}{n}\sum_{i=1}^n I_i Y_i - \Big(\frac{1}{\ti n}\sum_{j=1}^{\ti n} \ti I_j \ti X_j^{\top}\Big)\beta \in \mathbb R^m,
\end{equation}
and the \estnfd{} estimator
\begin{align}\label{eq:betaGMM-explicit}
    \gmm(W_N) &\coloneqq \arg\min_{\beta\in\mathbb R^d}  \hat g_N(\beta)^{\top} W_N \hat g_N(\beta).
\end{align}
If we choose $W_N = \mathrm{Id}$ then \estnfd{} coincides with TS-IV. Define
$
    \Omega \coloneqq \tau^{-1}\Omega_m + \ti\tau^{-1}\Omega_c(\beta^*),\Omega_m \coloneqq \operatorname{Var}(IY), \Omega_c(\beta^*) \coloneqq \operatorname{Var}\big(\ti I \ti X^{\top}\beta^*\big),
$
and denote by $\hat{\Omega}$ the sample version of $\Omega$ (see \Cref{sec:dense} for details). The asymptotic variance of $\gmm$ is minimized choosing $W_N$ to be equal $\widehat W \coloneqq \hat{\Omega}^{-1}$ in \eqref{eq:betaGMM-explicit}.
The moment variance reflects the two-sample structure: the $IY$ term scales with 
$
    \tau^{-1} = \lim_{n \to \infty} N /n
$
and the $\ti I \ti X^{\top}\beta^*$ term with
$
    \ti\tau^{-1} = \lim_{\ti n \to \infty} N /\ti n.
$
For details, see \Cref{sec:dense}.
\begin{proposition}[informal version of \Cref{thm:denseAN}]
\label{thm:denseAN-informal}
    Under some technical assumptions and if $\rank{\cov(I, X)} = d$ we have that
    \begin{equation}
        \gmm(W_N) \overset{p}{\to} \beta^*.
    \end{equation}
    Furthermore, $\gmm(W_N)$ is asymptotically normal.
\end{proposition}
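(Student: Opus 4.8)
The plan is a standard two-step GMM argument, made especially clean by the fact that the sample moment $\hat g_N(\beta)=\hat b_N-\hat M_N\beta$ is affine in $\beta$, where $\hat b_N\coloneqq \tfrac1n\sum_{i=1}^n I_iY_i$ and $\hat M_N\coloneqq \tfrac1{\ti n}\sum_{j=1}^{\ti n}\ti I_j\ti X_j^{\top}$. First I would record the population identities needed throughout: by the law of large numbers $\hat b_N\overset{p}{\to} b\coloneqq\cov(I,Y)$ and $\hat M_N\overset{p}{\to} M\coloneqq\cov(\ti I,\ti X)=\cov(I,X)$ (the last equality is \Cref{ass:1}\,\ref{ass:1:1}), and by \eqref{eq:main} one has $b=M\beta^*$. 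Since $\rank{M}=d$ by hypothesis, $\hat M_N$ has full column rank with probability tending to one, so on that event the first-order conditions for \eqref{eq:betaGMM-explicit} give the closed form
\[
\gmm(W_N)=\big(\hat M_N^{\top}W_N\hat M_N\big)^{-1}\hat M_N^{\top}W_N\hat b_N .
\]

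Consistency then follows from the continuous mapping theorem: assuming $W_N\overset{p}{\to}W$ for some positive definite $W$ (one of the ``technical assumptions''; it suffices that $W_N$ stays bounded and bounded away from singularity), we get $\gmm(W_N)\overset{p}{\to}(M^{\top}WM)^{-1}M^{\top}WM\beta^*=\beta^*$, using that $M^{\top}WM$ is invertible because $\rank{M}=d$.

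For asymptotic normality, write
\[
\sqrt N\big(\gmm(W_N)-\beta^*\big)=\big(\hat M_N^{\top}W_N\hat M_N\big)^{-1}\hat M_N^{\top}W_N\cdot\sqrt N\,\hat g_N(\beta^*),
\]
and decompose $\hat g_N(\beta^*)=(\hat b_N-b)-(\hat M_N-M)\beta^*$. The structural point specific to the unpaired setting is that the two summands come from the two \emph{independent} samples. A CLT on the first sample gives $\sqrt n(\hat b_N-b)\overset{d}{\to}\mathcal N(0,\Omega_m)$ with $\Omega_m=\var(IY)$, hence $\sqrt N(\hat b_N-b)\overset{d}{\to}\mathcal N(0,\tau^{-1}\Omega_m)$ since $N/n\to\tau^{-1}$; likewise a CLT on the second sample gives $\sqrt{\ti n}(\hat M_N-M)\beta^*\overset{d}{\to}\mathcal N(0,\Omega_c(\beta^*))$ with $\Omega_c(\beta^*)=\var(\ti I\ti X^{\top}\beta^*)$, hence $\sqrt N(\hat M_N-M)\beta^*\overset{d}{\to}\mathcal N(0,\ti\tau^{-1}\Omega_c(\beta^*))$. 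Independence of the two samples kills the cross term, so $\sqrt N\,\hat g_N(\beta^*)\overset{d}{\to}\mathcal N(0,\Omega)$ with $\Omega=\tau^{-1}\Omega_m+\ti\tau^{-1}\Omega_c(\beta^*)$. Combining this with $\hat M_N\overset{p}{\to}M$, $W_N\overset{p}{\to}W$ and Slutsky's lemma yields
\[
\sqrt N\big(\gmm(W_N)-\beta^*\big)\overset{d}{\to}\mathcal N\!\Big(0,\;(M^{\top}WM)^{-1}M^{\top}W\,\Omega\,WM(M^{\top}WM)^{-1}\Big),
\]
which collapses to the efficient form $(M^{\top}\Omega^{-1}M)^{-1}$ at $W=\Omega^{-1}$. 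For the feasible efficient choice $\widehat W=\hat\Omega^{-1}$ one additionally needs $\hat\Omega\overset{p}{\to}\Omega$, which follows from the LLN once a preliminary consistent estimate of $\beta^*$ (e.g.\ $\gmm(\vI)$, i.e.\ TS-IV) is plugged into the sample analogue of $\Omega_c$; the precise sample constructions are deferred to \Cref{sec:dense}.

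The technical assumptions required are moment conditions guaranteeing finiteness of $\Omega_m$ and $\Omega_c(\beta^*)$ and validity of the CLT (finite fourth moments of $I$, $X$, $Y$ suffice), together with $W_N\overset{p}{\to}W$ positive definite. There is no genuine uniform-convergence obstacle here, precisely because the moment is affine in $\beta$ and the estimator has a closed form; the only (mild) bookkeeping difficulty is keeping the two rescalings $N/n\to\tau^{-1}$ and $N/\ti n\to\ti\tau^{-1}$ straight and invoking independence of the two samples at exactly the step where the cross-covariance would otherwise appear. The sparse/high-dimensional refinements in later sections are where the real work lies; this proposition is the warm-up.
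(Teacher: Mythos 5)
Your proposal is correct and follows essentially the same route as the paper's proof of \Cref{thm:denseAN}: the paper derives the identity $\sqrt N(\gmm(W_N)-\beta^*)=(\hat M_N^{\top}W_N\hat M_N)^{-1}\hat M_N^{\top}W_N\sqrt N\,\hat g_N(\beta^*)$ from the first-order conditions rather than the explicit closed form, but this is the same algebra, and it then applies the same decomposition of $\hat g_N(\beta^*)$ into the two independent-sample terms, the same joint CLT with the $\tau^{-1}/\ti\tau^{-1}$ rescalings, and Slutsky. No substantive difference.
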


\paragraph{Sparse $\beta^*$ and penalized GMM.}
When the full-rank condition in \Cref{thm:denseAN-informal} fails, dense identification is in general not possible:
there are too few independent moment conditions to recover an arbitrary $d$-dimensional $\beta^*$.
We instead impose sparsity and require only that the moment operator be well conditioned on sparse cones (a restricted-eigenvalue/compatibility condition).
Then the $\ell_1$-penalized GMM in \eqref{eq:LassoGMM-explicit} consistently estimates $\beta^*$, attains the rates with $\lambda \asymp \sqrt{1/N}$ , and recovers the support under a beta-min condition (see \Cref{thm:ratesSparse}). 

More formally, when 
$
    \|\beta^*\|_0=s^*\leq d,
$
we estimate $\beta^*$ by the $\ell_1$–penalized \estnfd{}
\begin{equation}\label{eq:LassoGMM-explicit}
    \ivs(W_N) \in \arg\min_{\beta\in\mathbb R^d}  \frac{1}{2}\Big\| W_N^{1/2}\hat g_N(\beta)\Big\|_2^2 + \lambda_N \|\beta\|_1,
\end{equation}
where $\hat g_N(\beta)$ is given by \eqref{eq:sample-moment} and $\lambda_N \asymp 1/\sqrt{N}$ \footnote{In practice, one often uses cross-validation or an information criterion to choose $\lambda$.}.

\begin{assumption}
\label{ass:RE}
\begin{enumerate}[label=(\roman*), nosep]
    \item \label{ass:RE:0} \Cref{ass:1} holds, the centering convention \footnote{We center random variables by subtracting their sample mean.} and boundedness of fourth moments.
    \item \label{ass:RE:3} There is $W_0$ positive definite such that $W_N \overset{p}{\to} W_0$.
    \item \label{ass:RE:1} (Restricted eigenvalue.) There exists $\kappa>0$ such that for all $S \subseteq [d]$ with $|S| = s^*$,\footnote{We define $\|a\|_W \coloneqq a^T W a$.}
    \begin{equation}
        \kappa \leq \inf_{\Delta : \|\Delta_{S^c}\|_1 \leq 3 \|\Delta_{S}\|_1 } \frac{\|\cov(\ti I,\ti X)\Delta \|_{W_0}^2}{ \|\Delta\|_2^2}.
    \end{equation}
\end{enumerate}
\end{assumption}
Note that \Cref{ass:RE}~\ref{ass:RE:1} implies \Cref{thm:sparse_ident}~\ref{thm:sparse_ident:2} and therefore \Cref{ass:RE}~\ref{ass:RE:1} is at least as strong as \Cref{thm:sparse_ident}~(ii).
\Cref{ass:RE}~\ref{ass:RE:1} is a standard assumption in the Lasso literature (see, e.g., \citet{buhlmann2011statistics}).

We can now prove consistency and support recovery.
Let $S^* := \mathrm{supp}(\beta^*)$
denote the true support of $\beta^*$. 
\begin{theorem}[Rates for penalized GMM]
\label{thm:ratesSparse}
    For all $\beta^*$ with $\|\beta^*\|_0 = s^*$, under \Cref{ass:RE} and $\lambda_N \asymp 1/\sqrt{N},$ the estimator \eqref{eq:LassoGMM-explicit} satisfies
    \begin{equation*}
        \|\ivs(W_N)-\beta^*\|_2 = O_{p}\left(\sqrt{s^* /N}\right).
    \end{equation*}
    In particular, $\ivs(W_N) \overset{p}{\to} \beta^*$.
    If, in addition, there exists $c>0$ such that for all $i \in S^*$,
    we have $|\beta^*_i| \geq c$ (beta-min condition), then
    \begin{equation*}
        \widehat S (W_N) \coloneqq \{ j \in [d] \ | \ |\ivs(W_N)_j| \geq c/2\} \overset{p}{\to} S^*.
    \end{equation*}
\end{theorem}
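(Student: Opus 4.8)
This is a standard Lasso-style analysis adapted to the GMM/moment-matching setting. The key is that the objective $\frac12\|W_N^{1/2}\hat g_N(\beta)\|_2^2 + \lambda_N\|\beta\|_1$ has exactly the structure of a penalized least-squares problem with "design" matrix $A_N := W_N^{1/2}\,\widehat{\cov}(\ti I,\ti X)$ (where $\widehat{\cov}(\ti I,\ti X) = \frac1{\ti n}\sum_j \ti I_j \ti X_j^\top$) and "response" $b_N := W_N^{1/2}\cdot\frac1n\sum_i I_iY_i$, so $\hat g_N(\beta) = W_N^{-1/2}(b_N - A_N\beta)$ up to the square-root convention. The "noise" here is the deviation of the empirical moment from its population value evaluated at $\beta^*$, i.e. $\xi_N := W_N^{1/2}\hat g_N(\beta^*)$, which by \eqref{eq:main} has mean zero.

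So what I would do:

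\textbf{Step 1 (control the empirical process / choice of $\lambda_N$).} Show $\|A_N^\top \xi_N\|_\infty = O_p(1/\sqrt N)$. This follows because $\hat g_N(\beta^*) = \big(\frac1n\sum_i I_iY_i - \E[IY]\big) - \big(\frac1{\ti n}\sum_j \ti I_j\ti X_j^\top - \cov(\ti I,\ti X)\big)\beta^*$ is an average of i.i.d. mean-zero terms (using \Cref{ass:1}) with finite variance by the bounded-fourth-moment assumption in \Cref{ass:RE}~\ref{ass:RE:0}, hence is $O_p(1/\sqrt N)$ componentwise; combined with $W_N \overset{p}{\to} W_0$ and $\widehat{\cov}(\ti I,\ti X) \overset{p}{\to} \cov(\ti I,\ti X)$ (LLN), $A_N^\top\xi_N = O_p(1/\sqrt N)$ in $\ell_\infty$. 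Since $m$ is fixed here this is routine. Choose $\lambda_N \asymp 1/\sqrt N$ large enough that $\lambda_N \geq 2\|A_N^\top\xi_N\|_\infty$ on an event of probability $\to 1$.

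\textbf{Step 2 (basic inequality and cone condition).} On that event, comparing the objective at $\ivs(W_N)$ and at $\beta^*$ and writing $\Delta := \ivs(W_N) - \beta^*$, the standard argument gives $\|\Delta_{(S^*)^c}\|_1 \leq 3\|\Delta_{S^*}\|_1$, so $\Delta$ lies in the cone from \Cref{ass:RE}~\ref{ass:RE:1}, together with the bound $\|A_N\Delta\|_2^2 \lesssim \lambda_N \|\Delta_{S^*}\|_1$.

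\textbf{Step 3 (restricted eigenvalue transfer from population to sample).} Here is the one genuinely nontrivial point: \Cref{ass:RE}~\ref{ass:RE:1} is stated for the \emph{population} matrix $\cov(\ti I,\ti X)$ and weight $W_0$, but the basic inequality involves the \emph{sample} $A_N = W_N^{1/2}\widehat{\cov}(\ti I,\ti X)$. Since $m,d$ are fixed and $\widehat{\cov}(\ti I,\ti X)\to\cov(\ti I,\ti X)$, $W_N\to W_0$ in probability, a perturbation argument shows that with probability $\to 1$ the sample restricted eigenvalue is at least $\kappa/2$ uniformly over the (finitely many) supports $S$ of size $s^*$ and over the cone (one can bound $\big|\|A_N\Delta\|_2^2 - \|\cov(\ti I,\ti X)\Delta\|_{W_0}^2\big| \leq \epsilon_N\|\Delta\|_2^2$ with $\epsilon_N\overset{p}{\to}0$, using $\|\Delta\|_1 \leq 4\|\Delta_{S^*}\|_1 \leq 4\sqrt{s^*}\|\Delta\|_2$ on the cone). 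I expect this fixed-dimensional RE-transfer to be the main (though still mild) obstacle — in the truly high-dimensional Lasso literature it requires concentration arguments, but here it is just a continuity/LLN argument once we note $d$ is fixed.

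\textbf{Step 4 (conclude the rate).} Combining Steps 2–3: $\frac{\kappa}{2}\|\Delta\|_2^2 \leq \|A_N\Delta\|_2^2 \lesssim \lambda_N\|\Delta_{S^*}\|_1 \leq \lambda_N\sqrt{s^*}\|\Delta\|_2$, so $\|\Delta\|_2 \lesssim \lambda_N\sqrt{s^*}/\kappa = O_p(\sqrt{s^*/N})$. Consistency $\ivs(W_N)\overset{p}{\to}\beta^*$ is immediate.

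\textbf{Step 5 (support recovery).} Under the beta-min condition, once $\|\ivs(W_N)-\beta^*\|_\infty \leq \|\ivs(W_N)-\beta^*\|_2 = o_p(1)$ is smaller than $c/2$, every true nonzero coefficient exceeds $c/2$ in $\ivs(W_N)$ and every true zero is below $c/2$, so the thresholded support $\widehat S(W_N)$ equals $S^*$ with probability $\to 1$; i.e. $\widehat S(W_N)\overset{p}{\to}S^*$. (One can get the $\ell_\infty$ bound directly from the $\ell_2$ bound since $d$ is fixed; no separate irrepresentability argument is needed because we threshold.)

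Throughout, the finite-dimensionality of $m$ and $d$ makes Steps 1 and 3 soft — the only things really being used are the i.i.d. structure plus bounded fourth moments (for $\sqrt N$-rate of the moment), the LLN for $\widehat{\cov}$ and $W_N$, and the population RE condition; the sparsity $s^*$ enters only through the $\sqrt{s^*}$ factor in Step 4.
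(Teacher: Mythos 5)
Your proposal is correct and follows essentially the same route as the paper's proof: a score bound $\|\widehat{\cov}(\ti I,\ti X)^{\top}W_N\hat g_N(\beta^*)\|_\infty=O_p(N^{-1/2})$ (the paper's Lemma on the score), the basic inequality yielding the cone condition, a Slutsky/LLN transfer of the restricted eigenvalue from the population Gram $\cov(\ti I,\ti X)^{\top}W_0\cov(\ti I,\ti X)$ to its sample counterpart (valid since $m,d$ are fixed), and the standard $\sqrt{s^*}\lambda_N$ rate, with support recovery following from the $\ell_2$ bound and beta-min via thresholding. No substantive differences.
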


\paragraph{Inference on the estimated support.}
The Lasso is used only for support recovery. Under beta–min and regularity conditions, $\widehat S$ is consistent: $\mathbb{P}(\widehat S=S^*)\to1$. We then refit an unpenalized GMM on $\widehat S$ to obtain $\ivsf$, reducing to a standard finite-dimensional GMM problem with asymptotic normality and sandwich standard errors \citep{hansen1982gmm}. Since mis-selection becomes negligible under a beta-min condition,
usual Wald intervals based on the refitted GMM standard errors are asymptotically valid. We make this precise in \Cref{thm:oracle-CI}.

\begin{theorem}[Oracle CIs on the estimated support]
\label{thm:oracle-CI}
    Assume \Cref{ass:RE} and the beta-min condition from \Cref{thm:ratesSparse} for $W_N$ and let $\hat{S}(W_N)$ be defined as in \Cref{thm:ratesSparse}. Assume that there is a second sequence of weight matrices $W'_N \overset{p}{\to} W'_0 \succ 0$. Define
    \begin{align*}
        \ivsf(W'_N) &\coloneqq \argmin_{\beta\in \mathbb{R}^d : \beta_{\widehat S^c (W_N)} = 0}\ (\widehat{\cov}(I, Y)-\widehat{\cov}(\ti I, \ti X)\beta)^{\top} \\
        &\qquad \qquad \qquad \cdot W'_N (\widehat{\cov}(I, Y)-\widehat{\cov}(\ti I, \ti X)\beta),
    \end{align*}
    and
    \begin{align*}
        V_{S^*} := &(\cov(I, X)_{S^*}^{\top} W'_0 \cov(I, X)_{S^*})^{-1}\\
        &\cdot \big(\cov(I, X)_{S^*}^{\top} W'_0 \Omega W'_0 \cov(I, X)_{S^*}\big)  \\
        &\cdot (\cov(I, X)_{S^*}^{\top} W'_0 \cov(I, X)_{S^*})^{-1},
    \end{align*}
    where $\Omega \coloneqq \var(\sqrt{N} \hat{g}_N(\beta^*))$.
    We have that
    \begin{equation*}
        \sqrt{N}\big(\ivsf(W'_N)-\beta^*\big) \overset{d}{\to} \mathcal N \big(0, \ti{V}\big),
    \end{equation*}
    where $\ti{V}$ has the $S^* \times S^*$ block equal to $V_{S^*}$ and is zero elsewhere.
\end{theorem}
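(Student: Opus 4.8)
The plan is to condition on the event $A_N \coloneqq \{\widehat S(W_N) = S^*\}$, which by \Cref{thm:ratesSparse} satisfies $\P(A_N) \to 1$, and then to analyze $\ivsf(W'_N)$ as an ordinary low-dimensional, just-/over-identified two-sample GMM estimator restricted to the known coordinates $S^*$. Since convergence in distribution is unaffected by events of probability tending to one, it suffices to establish the stated CLT for the oracle estimator $\tilde\beta^{\mathrm{or}}(W'_N)$ that minimizes the same quadratic form over $\{\beta : \beta_{(S^*)^c} = 0\}$; a standard Slutsky/"good event" argument then transfers the limit to $\ivsf(W'_N)$. Concretely, I would write $\beta = \iota_{S^*}(\gamma)$ for $\gamma \in \R^{s^*}$ (the inclusion that places $\gamma$ on coordinates $S^*$ and zero elsewhere), so that $\widehat{\cov}(\ti I,\ti X)\beta = \widehat{\cov}(\ti I,\ti X)_{S^*}\gamma$, and the objective becomes $(\widehat{\cov}(I,Y) - \widehat{\cov}(\ti I,\ti X)_{S^*}\gamma)^\top W'_N (\widehat{\cov}(I,Y) - \widehat{\cov}(\ti I,\ti X)_{S^*}\gamma)$, a quadratic in $\gamma$ with a closed-form minimizer.

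The core of the argument is then the usual GMM expansion. First I would invoke \Cref{ass:RE}: the centering convention and bounded fourth moments give a joint CLT $\sqrt N\,\hat g_N(\beta^*) \overset{d}{\to} \mathcal N(0,\Omega)$ (this is exactly the two-sample moment-CLT already used for \Cref{thm:denseAN-informal}, with $\Omega = \var(\sqrt N \hat g_N(\beta^*)) = \tau^{-1}\Omega_m + \tilde\tau^{-1}\Omega_c(\beta^*)$ in the finite-$m$ regime), together with $\widehat{\cov}(\ti I,\ti X) \overset{p}{\to} \cov(I,X)$ by the LLN and \Cref{ass:1}\ref{ass:1:1}. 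Restricting to $S^*$, the submatrix $\widehat{\cov}(\ti I,\ti X)_{S^*} \overset{p}{\to} \cov(I,X)_{S^*}$, which has full column rank $s^*$ because \Cref{ass:RE}\ref{ass:RE:1} forces $\ker(\cov(I,X)) \cap \Sigma_{2s^*} = \{0\}$ and in particular no nonzero $s^*$-sparse vector lies in the kernel; hence $\cov(I,X)_{S^*}^\top W'_0 \cov(I,X)_{S^*}$ is invertible and $V_{S^*}$ is well-defined. Writing the first-order condition for $\hat\gamma$ and plugging in $\widehat{\cov}(I,Y) = \widehat{\cov}(\ti I,\ti X)\beta^* + \hat g_N(\beta^*)$ (note $\widehat{\cov}(\ti I,\ti X)\beta^* = \widehat{\cov}(\ti I,\ti X)_{S^*}\gamma^*$ since $\beta^*$ is supported on $S^*$), I get
\begin{equation*}
    \sqrt N(\hat\gamma - \gamma^*) = \big(\widehat{\cov}(\ti I,\ti X)_{S^*}^\top W'_N \widehat{\cov}(\ti I,\ti X)_{S^*}\big)^{-1} \widehat{\cov}(\ti I,\ti X)_{S^*}^\top W'_N \,\sqrt N\,\hat g_N(\beta^*).
\end{equation*}
Combining the CLT for $\sqrt N\,\hat g_N(\beta^*)$ with the consistency of $\widehat{\cov}(\ti I,\ti X)_{S^*}$ and $W'_N$ via the continuous mapping theorem and Slutsky yields $\sqrt N(\hat\gamma - \gamma^*) \overset{d}{\to} \mathcal N(0, V_{S^*})$, and embedding back into $\R^d$ gives $\sqrt N(\ivsf(W'_N) - \beta^*) \overset{d}{\to} \mathcal N(0,\tilde V)$ with $\tilde V$ supported on the $S^*\times S^*$ block as claimed.

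The main obstacle, and the one place where care is genuinely needed, is the handoff between the data-dependent selected set $\widehat S(W_N)$ and the deterministic oracle set $S^*$: $\ivsf(W'_N)$ is defined through the random constraint $\beta_{\widehat S^c(W_N)} = 0$, so a priori its sampling distribution is a mixture over all possible selected supports, and naive conditioning can be circular because $\widehat S$ depends on the same data. The clean way around this is to note that $\{\ivsf(W'_N) \neq \tilde\beta^{\mathrm{or}}(W'_N)\} \subseteq A_N^c$, so $\P(\ivsf(W'_N) \neq \tilde\beta^{\mathrm{or}}(W'_N)) \to 0$ and the two estimators have the same weak limit — we do \emph{not} need $\widehat S$ and the oracle estimator to be independent, only that they coincide with probability $\to 1$, which \Cref{thm:ratesSparse} supplies under the beta-min condition. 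A secondary, more routine point is verifying that the CLT for $\sqrt N\,\hat g_N(\beta^*)$ genuinely holds under exactly the moment hypotheses in \Cref{ass:RE} (independence of the two samples, bounded fourth moments, centering) and that the sample covariance matrices converge at the $\sqrt N$ rate needed for the linearization remainder to be $o_p(1)$ — this is standard but should be stated, e.g. by appealing to the same computation underlying \Cref{thm:denseAN}.
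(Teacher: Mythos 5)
Your proposal is correct and follows essentially the same route as the paper: reduce to the oracle GMM estimator on $S^*$ via $\P(\widehat S(W_N)=S^*)\to 1$ from \Cref{thm:ratesSparse}, apply the standard two-sample GMM expansion and CLT (the computation underlying \Cref{thm:denseAN}, with the rank condition on $S^*$ supplied by \Cref{ass:RE}~\ref{ass:RE:1}), and transfer the limit because the refitted and oracle estimators coincide on an event of probability tending to one. You merely spell out in more detail the steps the paper delegates to ``standard GMM theory'' and \Cref{sec:dense}.
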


\begin{remark}
    \label{rem:34}
    The asymptotic variance $V_{S^*}$ only depends on $W'_N$ and not on $W_N$. We are therefore free to choose $W_N = \mathrm{Id}_m$.  We define $W'_N \coloneqq \hat{\Omega}_{\hat{S}(\mathrm{Id}_m)}^{-1} \in \mathbb{R}^{m \times m}$, where $\hat{\Omega}_{\hat{S}(\mathrm{Id}_m)}$ is defined as in \Cref{eq:OmegaHat-explicit} but we consider all variables restricted to $\hat{S}(\mathrm{Id}_m)$.
    If $\hat{\Omega}_{\hat{S}(\mathrm{Id}_m)}$ is not positive definite, we choose $W'_N \coloneqq \mathrm{Id}_m$ (we do this to ensure that the estimator is well-defined). Since $\hat{S}(\mathrm{Id}_m) \overset{p}{\to} S^*$ we have $W'_N \overset{p}{\to} \Omega_{S^*}^{-1}$ and therefore this choice minimizes the asymptotic variance.
\end{remark}

\subsection{High-Dimensional Instrument} \label{sec:42}
We first show that the naive TS-IV estimator is asymptotically biased in the high-dimensional instrument setting. This bias is a consequence of measurement error in the estimate of $\cov (I, X)$ and appears only in two-sample IV. It is therefore distinct from the endogeneity problem that biases the one-sample IV. The measurement error in high-dimensional unpaired IV is related to the attenuation bias in the paired weak-instrument setting \citep{angrist1995split, ChoiGuShen2018}. In the weak-instrument literature attenuation appears as a result of removing the  weak-instrument bias with the split-sample IV estimator which as a consequence introduces attenuation. We show that high-dimensional instrument unpaired IV suffer from a similar bias and prove that sample-splitting is an effective solution for the measurement-error bias.

\paragraph{Classical estimators are biased
in paired and unpaired settings.}

Assume \Cref{ass:1}. For notational simplicity we consider $\ti n = n=r m$ (for some $r \in \mathbb{N}$) , $d=1$ and $m \to \infty$. We show that the naive two-sample IV estimator is asymptotically biased for general $d \in \mathbb{N}$, $r = n/m \geq  1$ and $\ti r = \ti{n} /m \geq  1$.
The naive estimator solves \Cref{eq:betaGMM-explicit} with $W_N = Id$, that is,
\begin{align} 
    \hat \beta &\coloneqq \frac{\widehat{\cov}(\ti I, \ti X)^{\top} \widehat{\cov}(I, Y)}{\widehat{\cov}(\ti I, \ti X)^{\top} \widehat{\cov}(\ti I, \ti X)}  \nonumber \\
    &= \beta^* \frac{\widehat{\cov}(\ti I, \ti X)^{\top} \widehat{\cov}(I, X) }{\widehat{\cov}(\ti I, \ti X)^{\top} \widehat{\cov}(\ti I, \ti X)} + \frac{\widehat{\cov}(\ti I, \ti X)^{\top} \widehat{\cov}(I, \epsilon) }{\widehat{\cov}(\ti I, \ti X)^{\top} \widehat{\cov}(\ti I, \ti X)}.
    \label{eq:biasedunpaired}
\end{align}
If $m$ were constant and $n \to \infty$, then $\widehat{\cov}(\ti I, \ti X) \overset{p}{\to} \cov(I, X)$, $\widehat{\cov}(I, X) \overset{p}{\to} \cov(I, X)$ and $\widehat{\cov}(I, \epsilon) \overset{p}{\to} 0$ and therefore $\hat \beta \overset{p}{\to} \beta^*$ (and independent of the fact that $\widehat{\cov}(I, X)$ cannot be computed from the data). However, in the high-dimensional case where $\ti n/m \to \ti r \in (0, \infty)$ in general it does not hold that $ \|\widehat{\cov} (\ti I, \ti X) - \cov(I, X)\|_2^2 \overset{p}{\to} 0$ resulting in an inconsistent estimator.

\begin{lemma}
\label{lem:inconst}
    Assume \Cref{ass:hd-weak} with d=1. Additionally, assume that $\tr(\Sigma_{IX})\to b\in(0,\infty)$ and $\sup_m \mathbb{E} [\|IX - c\|_2^4] \leq \infty$. Then 
    \begin{equation*}
        \hat \beta \overset{p}{\to} \beta^* \frac{Q}{Q + \frac{b}{\ti r}} \neq \beta^*.
    \end{equation*}
\end{lemma}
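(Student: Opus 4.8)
The plan is to analyze the numerator and denominator of \eqref{eq:biasedunpaired} separately, using a law of large numbers over the $m$ coordinates (this is where the scaling $n,\ti n\asymp m$ enters) rather than the usual "sample size $\to\infty$ for fixed dimension" argument. Write $c \coloneqq \cov(I,X)\in\R^m$ (a vector since $d=1$), and abbreviate $\hat c \coloneqq \widehat\cov(\ti I,\ti X)$, $\hat c_X \coloneqq \widehat\cov(I,X)$, $\hat c_\epsilon \coloneqq \widehat\cov(I,\epsilon)$, so that
\begin{equation*}
    \hat\beta = \beta^*\,\frac{\hat c^\top \hat c_X}{\hat c^\top \hat c} + \frac{\hat c^\top \hat c_\epsilon}{\hat c^\top \hat c}.
\end{equation*}
First I would show $m\,\hat c^\top\hat c \overset{p}{\to} Q + b/\ti r$. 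Decompose $\hat c = c + (\hat c - c)$; then $m\,\hat c^\top\hat c = m\,c^\top c + 2m\,c^\top(\hat c - c) + m\,\|\hat c - c\|_2^2$. The first term converges to $Q$ by the definition of $Q$ in Section~\ref{sec:hd} (here with $d=1$, $Q = \lim m\,c^\top c$). For the third term, $\hat c$ is an average of $\ti n = \ti r m$ i.i.d.\ terms $\ti I_j\ti X_j$ (each with mean $c$), so $\E\|\hat c - c\|_2^2 = \tr(\var(\ti I\ti X))/\ti n = \tr(\Sigma_{IX})/(\ti r m)$, whence $m\,\E\|\hat c - c\|_2^2 \to b/\ti r$; a fourth-moment bound (the assumption $\sup_m \E\|IX - c\|_2^4 < \infty$) upgrades this to convergence in probability via a variance computation on $\|\hat c - c\|_2^2$. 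The cross term $m\,c^\top(\hat c - c)$ has mean zero and variance $m^2\,c^\top\var(\hat c)c = (m/\ti n)\,m\,c^\top\var(\ti I\ti X)c$, which is $O(1/m)\cdot O(1)$ and hence vanishes.

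Next I would treat the numerator term $m\,\hat c^\top\hat c_X$. The key point is that $\hat c$ and $\hat c_X$ are computed from independent samples (the tilde and non-tilde samples are assumed independent in Section~\ref{sec:estimation}), so conditionally on the tilde sample, $\E[m\,\hat c^\top\hat c_X\mid \text{tilde}] = m\,\hat c^\top c$, and by the same decomposition as above $m\,\hat c^\top c = m\,c^\top c + m\,c^\top(\hat c - c) \overset{p}{\to} Q$. The conditional variance of $m\,\hat c^\top\hat c_X$ given the tilde sample is $m^2\,\hat c^\top\var(\hat c_X)\hat c = (m/n)\,m\,\hat c^\top\var(I X)\hat c$, which is $O(1/m)$ times a term that is $O_p(1)$ (using $m\,\|\hat c\|_2^2 = O_p(1)$ from the denominator analysis plus boundedness of the per-coordinate variances implicit in the fourth-moment assumption), so it vanishes in probability. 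Hence $m\,\hat c^\top\hat c_X \overset{p}{\to} Q$. Finally, for the bias term $m\,\hat c^\top\hat c_\epsilon$: again conditioning on the tilde sample, $\E[\hat c_\epsilon\mid\text{tilde}] = \cov(I,\epsilon) = 0$ by the exclusion restriction \Cref{ass:1}~\ref{ass:1:2} (note $\cov(I,\epsilon) = \E[I\,\E[\epsilon\mid I]] = 0$), so the conditional mean of $m\,\hat c^\top\hat c_\epsilon$ is zero, and its conditional variance is again $(m/n)\,m\,\hat c^\top\var(I\epsilon)\hat c = O_p(1/m) \to 0$. Therefore $m\,\hat c^\top\hat c_\epsilon \overset{p}{\to} 0$.

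Combining these via Slutsky's theorem yields
\begin{equation*}
    \hat\beta = \frac{\beta^*\, m\,\hat c^\top\hat c_X + m\,\hat c^\top\hat c_\epsilon}{m\,\hat c^\top\hat c} \overset{p}{\to} \frac{\beta^* Q + 0}{Q + b/\ti r} = \beta^*\,\frac{Q}{Q + b/\ti r},
\end{equation*}
which is $\neq \beta^*$ whenever $\beta^*\neq 0$ (using $b>0$ and $Q>0$, the latter being implicit in the identifiability regime). I would expect the main obstacle to be the bookkeeping for the variance bounds: one needs to verify carefully that quantities like $m\,\hat c^\top\var(IX)\hat c$ are $O_p(1)$, which requires controlling $m\,\|\hat c\|_2^2$ jointly with a uniform (in $m$) bound on the operator norm or trace of the per-coordinate covariance of $IX$ — this is precisely what the hypotheses $\tr(\Sigma_{IX})\to b$ and $\sup_m\E\|IX-c\|_2^4<\infty$ are designed to supply, and the fourth-moment condition is what lets one pass from $L^2$ statements to convergence in probability for the quadratic forms $\|\hat c - c\|_2^2$. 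A secondary point requiring care is that \Cref{ass:hd-weak} (referenced in the statement but not reproduced in this excerpt) presumably packages the i.i.d.\ structure, the independence of the two samples, the existence of $Q$, and possibly a bound ensuring $\var(X)$ stays bounded as $m\to\infty$; I would invoke it to justify each of the convergences $m\,c^\top c\to Q$ and the moment bounds above.
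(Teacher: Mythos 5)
Your overall strategy is the same as the paper's: center the empirical covariances, expand the quadratic forms in the numerator and denominator, kill the cross terms by mean-zero/variance arguments that exploit the independence of the two samples and the $O(1/m)$ operator-norm bounds, and identify the single non-vanishing error term $m\,\|\hat c - c\|_2^2 \overset{p}{\to} b/\ti r$ in the denominator (the paper's Lemma~\ref{lem:scalar-rates-scaled}(iv), whose diagonal/off-diagonal split you gesture at but do not spell out). Your limit, and your correct observation that the conclusion $\neq\beta^*$ requires $\beta^*\neq 0$, match the paper.

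There is, however, one step that does not go through under the stated assumptions. You decompose the numerator as $\hat c^{\top}\hat c_X\,\beta^* + \hat c^{\top}\hat c_\epsilon$ with $\hat c_X = \widehat\cov(I,X)$ and $\hat c_\epsilon = \widehat\cov(I,\epsilon)$, and your conditional-variance bounds for these two terms require $\|\var(IX)\|_{\mathrm{op}}$ and $\|\var(I\epsilon)\|_{\mathrm{op}}$ (in the \emph{non-tilde} sample) to be $O(1/m)$. \Cref{ass:hd-weak}~\ref{ass:hd-weak:3} only bounds $m\|\Sigma_{IX}\|_{\mathrm{op}}$ for the tilde sample and $m\|\Sigma_{IY}\|_{\mathrm{op}}$ for the non-tilde sample; since the two systems need not be equal in distribution and $\var(IY)=(\beta^*)^2\var(IX)+\var(I\epsilon)+2\beta^*\cov(IX,I\epsilon)$ does not control the two summands separately, your bounds are not implied by the hypotheses. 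The paper avoids this entirely by never splitting $\widehat\cov(I,Y)$: it writes $\hat s = \bar s + H$ with $\bar s = \bar c\,\beta^*$ (using the population identity $\cov(I,Y)=\cov(\ti I,\ti X)\beta^*$ from \Cref{ass:1}) and bounds the single error $H$ via $\var(H) = \frac{m}{n}\Sigma_{IY}$, which is exactly what \Cref{ass:hd-weak}~\ref{ass:hd-weak:3} controls. The fix to your argument is therefore immediate — replace $\hat c_X\beta^* + \hat c_\epsilon$ by $\widehat\cov(I,Y)$ and carry out the identical conditional mean/variance computation with $\Sigma_{IY}$ — but as written the numerator analysis relies on moment bounds the lemma does not assume.
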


\Cref{lem:inconst} shows that in the finite-dimensional instrument setting ($m$ is constant and $n \to \infty$) we have $r = n/m \to \infty$ and we recover consistency; however, in the high-dimensional instrument case $r$ is a constant.
In conclusion, unpaired IV is asymptotically biased.
In contrast, in the paired high-dimensional instrument setting, \textsc{2SLS} is known to be biased as well, but for a different reason: here,  $\widehat{\cov}(I, \epsilon) \not \to 0$. This is often called many-instrument bias, and methods such as LIML \citep{anderson1949estimation, fuller1977some} or SS-IV \cite{angrist1995split}
have been developed to remove the bias.

A simple solution to the measurement-error problem in unpaired IV is to use sample splitting in the denominator, i.e., we divide the $(\ti I, \ti X)$-data into two samples $A \subseteq [\ti n]$ and $B = A^c$ of equal size. Then define $E \coloneqq \widehat{\cov} (\ti I, \ti X) - \cov(I, X)$, $E_A \coloneqq \widehat{\cov}_A (\ti I, \ti X) - \cov(I, X)$ and $E_B \coloneqq \widehat{\cov}_B (\ti I, \ti X) - \cov(I, X)$ (where $\widehat{\cov}_A (\ti I, \ti X)$ is estimated on fold $A$ and $\widehat{\cov}_B (\ti I, \ti X)$ on fold $B$). In this case we have $E_A^{\top}E_B$ instead of $E^TE$ in the denominator and because $E_A \independent E_B$ we get $E_A^{\top} E_B \overset{p}{\to} 0$ recovering consistency in the high-dimensional setting. We now make this idea precise.

\paragraph{Unbiased estimators via cross moments.}
Split the $(\ti I,\ti X)$-sample into $K\ge 2$ disjoint, equal-sized folds and let
\begin{equation*}
    \widehat{\cov}_k(\ti I,\ti X),\qquad k\in [K],
\end{equation*}
be the foldwise covariance vectors (computed only within fold $k$).
Define
\begin{align*}
    C_{XX} &\coloneqq \frac{m}{K(K-1)}\sum_{k\neq h}\widehat{\cov}_h(\ti I,\ti X)^{\top} \widehat{\cov}_k(\ti I,\ti X),\\
    C_{XY} &\coloneqq m\widehat{\cov}(\ti I,\ti X)^{\top} \widehat{\cov}(I,Y).
\end{align*}
and
\begin{align*}
    \hat{g}(\beta) &\coloneqq C_{XY} - C_{XX} \beta\\
    &= \frac{m}{K} \sum_{k=1}^K \widehat{\cov}_k(\ti I, \ti X)^{\top} \\
    & \quad\left( \widehat{\cov}(I, Y) - \frac{1}{K-1} \sum_{h \neq k} \widehat{\cov}_h(\ti I, \ti X) \beta \right).
\end{align*}
Define the \estn{} estimator
\begin{align*}
    \hat\beta^{\mathrm{UP}, \mathrm{HD}}_{\mathrm{GMM}} (W_N)
    &\coloneqq
    \arg\min_{\beta} \hat g(\beta)^{\top}  W_N^{-1}  \hat g(\beta)\\
    &= 
    \big(C_{XX}^{\top}W_N^{-1} C_{XX}\big)^{-1} C_{XX}^{\top}W_N^{-1} C_{XY},
\end{align*}
where $W_N$ is a weight matrix such that $W_N \to W_0$ as $m \to \infty$ and where $W_0$ is positive definite.
The key property is that $C_{XX}$ is built from \emph{independent} views of $\cov(I, X)$, so $\mathbb{E}[C_{XX}]=m\cov(I, X)\cov(I, X)^{\top}$ without measurement-error.

\begin{assumption}
    \label{ass:hd-weak}
    Assume $d\in\mathbb{N}$ and $K\ge 2$ are fixed,
    and $n/m\to r$, $\ti n/m\to \ti r$ as $m\to\infty$. Additionally, assume all of the following.
    \begin{enumerate}[label=(\roman*), nosep]
        \item \label{ass:hd-weak:23}
        \Cref{ass:1} holds, we use the centering convention and boundedness of fourth moments.

        \item \label{ass:hd-weak:1}
        The limit
        \begin{equation*}
            Q \coloneqq \lim_{m\to\infty} m\cov(\ti I,\ti X)^{\top}\cov(\ti I,\ti X)
            \in\mathbb{R}^{d\times d}
        \end{equation*}
        exists and is positive definite.

        \item \label{ass:hd-weak:3}
        Writing $\Sigma_{IX}\coloneqq \var(\mathrm{vec}(\ti I \ti X))\in\mathbb{R}^{md\times md}$
        and $\Sigma_{IY}\coloneqq \var(IY)\in\mathbb{R}^{m\times m}$, there exists
        $C<\infty$ such that uniformly for all $m\in\mathbb{N}$
        \begin{equation*}
            m\|\Sigma_{IX}\|_{\mathrm{op}}\le C
            \quad\text{and}\quad
            m\|\Sigma_{IY}\|_{\mathrm{op}}\le C.
        \end{equation*}
    \end{enumerate}
\end{assumption}

\Cref{ass:hd-weak}~\ref{ass:hd-weak:1} 
replaces \Cref{eq:3453} from the low-dimensional case.
\Cref{ass:hd-weak}~\ref{ass:hd-weak:3} imposes a uniform bound on the operator norms of the covariance of $IX$ and $IY$, meaning that the cross–sectional noise in the first stage and reduced form does not explode as the number of instruments grows.

\begin{example}[categorical instruments]
\label{ex:ci}
    Let $K\sim\mathrm{Unif}\{1,\dots,m\}$, $\bar I=e_K$, and $I=\bar I-\mathbb E[\bar I]=e_K-\frac1m\mathbbm 1$. Consider a categorical first stage
    \begin{equation*}
        X=\mu^\top \bar I+\epsilon=\mu_K+\epsilon,\qquad \mathbb E[\epsilon\mid K]=0,
    \end{equation*}
    and assume uniformly bounded $8$th moments and
    \begin{equation*}
        \var(\mu^{\top} \ti I) = \frac1m\|\mu-\bar\mu\mathbbm 1\|_2^2\to Q\in(0,\infty).
    \end{equation*}
    Then
    \begin{align*}
        \cov(I,X)=\mathbb E[IX]&=\frac1m(\mu-\bar\mu\mathbbm 1),\\
        m\|\cov(I,X)\|_2^2&=\frac1m\|\mu-\bar\mu\mathbbm 1\|_2^2\to Q,
    \end{align*}
    so \Cref{ass:hd-weak}~\ref{ass:hd-weak:1} holds (here $d=1$).
    For \Cref{ass:hd-weak}~\ref{ass:hd-weak:3}, write $Z=IX\in\mathbb R^m$. For each coordinate $i$,
    \begin{equation*}
    Z_i=(\mathbbm 1\{K=i\}-1/m)X,
    \end{equation*}
    so $\var(Z_i)=O(1/m)$ and for $i\neq j$ we have $|\cov(Z_i,Z_j)|=O(1/m^2)$. Therefore each row sum of $\Sigma_{IX}=\var(IX)$ is $O(1/m)$, implying
    \begin{equation*}
        \|\Sigma_{IX}\|_{\mathrm{op}}\le \|\Sigma_{IX}\|_1=O(1/m),
    \end{equation*}
    hence
    \begin{equation*}
        m\|\Sigma_{IX}\|_{\mathrm{op}}=O(1)
    \end{equation*}
    uniformly in $m$. The same argument applies to $\Sigma_{IY}=\var(IY)$ (under bounded second moments of $Y$), giving $m\|\Sigma_{IY}\|_{\mathrm{op}}=O(1)$. Finally,
    \begin{equation*}
    \tr(\Sigma_{IX})=\sum_{i=1}^m \var(Z_i)\to b\in(0,\infty)
    \end{equation*}
    provided $\mathbb E[X^2]$ stays bounded away from $0$.
\end{example}

\begin{theorem}[Consistency]
\label{thm:cf-gmm-weak}
    Assume \Cref{ass:hd-weak}.
    Let $d\in\mathbb{N}$ and $K\ge 2$ be fixed, and $n/m\to r \in (0, \infty)$, $\ti n/m\to\ti r \in (0, \infty)$.
    Let $W_N \in \mathbb{R}^{d \times d}$ be a sequence of positive definite weight matrices such that
    $W_N\to W_0$ in probability with $W_0\succ 0$.
    Then
    \begin{equation*}
        \hat\beta^{\mathrm{UP},\mathrm{HD}}_{\mathrm{GMM}}(W_N)\overset{p}{\to}\beta^*.
    \end{equation*}
\end{theorem}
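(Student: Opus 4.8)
The plan is to reduce the consistency statement to the convergence in probability of the two building blocks, $C_{XX}\overset{p}{\to}Q$ and $C_{XY}\overset{p}{\to}Q\beta^*$, and then to invoke the continuous mapping theorem together with positive definiteness of $Q$ (\Cref{ass:hd-weak}~\ref{ass:hd-weak:1}) and of $W_0$ to pass all inverses to the limit. All the structural content is in these two convergences; the conclusion is then the standard GMM closed-form computation.

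\textbf{Step 1 (the cross term $C_{XY}$).} I would use the algebraic identity $\widehat{\cov}(I,Y)=\widehat{\cov}(I,X)\beta^*+\widehat{\cov}(I,\epsilon)$ (valid since $Y=X^{\top}\beta^*+\epsilon$, even though the two summands are not computable) together with the independence of the $(I,Y)$-sample from the $(\ti I,\ti X)$-sample. Since $\cov(I,\epsilon)=\E[I\,\E[\epsilon\mid I]]=0$ by \Cref{ass:hd-weak}~\ref{ass:hd-weak:23}, we get $\E[C_{XY}]=m\,\cov(I,X)^{\top}\cov(I,Y)\to Q\beta^*$ — this is exactly the limit $Q_Y=Q\beta^*$ computed in \Cref{sec:hd} (using $\cov(\ti I,\ti X)=\cov(I,X)$). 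For the variance I would condition on the $(\ti I,\ti X)$-sample: the conditional variance is bounded by $m^2\|\widehat{\cov}(\ti I,\ti X)\|_F^2\,\|\var(\widehat{\cov}(I,Y))\|_{\mathrm{op}}$, which is $O_{p}(1/m)$ since $\|\widehat{\cov}(\ti I,\ti X)\|_F^2=O_{p}(1/m)$ and $\|\var(\widehat{\cov}(I,Y))\|_{\mathrm{op}}=\|\Sigma_{IY}\|_{\mathrm{op}}/n=O(1/m^2)$ by \Cref{ass:hd-weak}~\ref{ass:hd-weak:3} and $n/m\to r$; the variance of the conditional mean $m\,\widehat{\cov}(\ti I,\ti X)^{\top}\cov(I,Y)$ is treated the same way using $\|\Sigma_{IX}\|_{\mathrm{op}}=O(1/m)$ and $\ti n/m\to\ti r$. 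A Chebyshev argument then gives $C_{XY}\overset{p}{\to}Q\beta^*$.

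\textbf{Step 2 (the cross-fold term $C_{XX}$).} This is where sample splitting matters and where I expect the main obstacle. I would write $\widehat{\cov}_k(\ti I,\ti X)=\cov(I,X)+E_k$ with $E_k$ the foldwise sampling error (within-fold sample centering contributes only a bias of order one over the fold size, which is negligible) and expand $C_{XX}=\frac{m}{K(K-1)}\sum_{h\neq k}(\cov(I,X)+E_h)^{\top}(\cov(I,X)+E_k)$ into four families. The leading family sums to $m\,\cov(I,X)^{\top}\cov(I,X)\to Q$. The mixed families $m\,\cov(I,X)^{\top}E_k$ (and transposes) have vanishing mean and variance at most $m^2\|\cov(I,X)\|_F^2\|\var(E_k)\|_{\mathrm{op}}=O(1/m)$. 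For the product family, $h\neq k$ forces $E_h\independent E_k$ with (approximately) zero means, so $\E[m\,E_h^{\top}E_k]\approx 0$ and $\E\big[\|m\,E_h^{\top}E_k\|_F^2\big]$ is of order $m^2\tr(\var(E_h)\var(E_k))\le m^2\|\var(E_h)\|_{\mathrm{op}}\tr(\var(E_k))$; since each fold has size $\asymp m$ we have $\|\var(E_k)\|_{\mathrm{op}}=O(1/m^2)$ and $\tr(\var(E_k))=\tr(\Sigma_{IX})/(\text{fold size})=O(1/m)$ (because $\tr(\Sigma_{IX})\le md\|\Sigma_{IX}\|_{\mathrm{op}}=O(1)$), so this is $O(1/m)\to 0$. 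As $K$ is fixed, summing over the $K(K-1)$ index pairs preserves all rates and $C_{XX}\overset{p}{\to}Q$. The bookkeeping for the product term — and the fact that it is here that \Cref{ass:hd-weak}~\ref{ass:hd-weak:3} combined with fold sizes $\asymp m$ is essential — is the delicate part, and the fourth-moment boundedness in \Cref{ass:hd-weak}~\ref{ass:hd-weak:23} is what makes the second-moment estimates on these quadratic forms rigorous. Without splitting one would instead encounter $E^{\top}E$ with $\E[m\,E^{\top}E]=m\tr(\var(E))\not\to 0$, which is precisely the bias isolated in \Cref{lem:inconst}.

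\textbf{Step 3 (conclusion).} Since $C_{XX}\overset{p}{\to}Q$ with $Q=Q^{\top}\succ 0$, with probability tending to one $C_{XX}$ is invertible and $C_{XX}^{-1}\overset{p}{\to}Q^{-1}$ by continuity of matrix inversion; likewise $W_N^{-1}\overset{p}{\to}W_0^{-1}\succ 0$ and $C_{XX}^{\top}W_N^{-1}C_{XX}\overset{p}{\to}Q^{\top}W_0^{-1}Q\succ 0$ is eventually invertible. The continuous mapping theorem then yields
\begin{equation*}
    \hat\beta^{\mathrm{UP},\mathrm{HD}}_{\mathrm{GMM}}(W_N)=\big(C_{XX}^{\top}W_N^{-1}C_{XX}\big)^{-1}C_{XX}^{\top}W_N^{-1}C_{XY}\ \overset{p}{\to}\ \big(Q^{\top}W_0^{-1}Q\big)^{-1}Q^{\top}W_0^{-1}Q\,\beta^*=\beta^*,
\end{equation*}
which is the claim.
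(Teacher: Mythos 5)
Your proposal is correct and follows essentially the same route as the paper: decompose $C_{XX}$ and $C_{XY}$ into a signal term converging to $Q$ (resp.\ $Q\beta^*$) plus error terms whose variances are controlled by the operator-norm bounds in \Cref{ass:hd-weak}~\ref{ass:hd-weak:3} together with fold independence for the $E_h^{\top}E_k$ product term, then conclude by continuous mapping. The paper packages the same second-moment estimates into \Cref{lem:scalar-rates-scaled} using $\sqrt{m}$-normalized quantities, while you condition on the $(\ti I,\ti X)$-sample for $C_{XY}$; the bookkeeping and rates are identical.
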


\subsubsection{High-dimensional instrument with sparse \texorpdfstring{$\beta^*$: $\ell_1$}{s}–regularized cross–moment GMM}

We assume that $n=rm$ and $\ti n=\ti r m$ for the constants $r,\ti r\in(0,\infty)$, while the causal effect is sparse with $s^*\coloneqq\|\beta^*\|_0\ll d$. Define the $\ell_1$–penalized estimator
\begin{align*}
    \hat\beta^{\mathrm{UP}, \mathrm{HD}}_{\mathrm{GMM}, \ell_1}(W_m)\in\arg\min_{\beta\in\mathbb R^d}
    &\frac{1}{2}\big\|W_m^{1/2}\big(C_{XY}-C_{XX}\beta\big)\big\|_2^2\\
    &+\lambda_m\|\beta\|_1.
\end{align*}
We take
$
    \lambda_m \asymp 1/\sqrt{{m}}.
$\footnote{In practice, one often uses cross-validation or an information criterion to choose $\lambda$.}

\begin{assumption}
\label{ass:RE-large-m-weak}
Assume all of the following.
\begin{enumerate}[label=(\roman*), nosep]
    \item \Cref{ass:1}, the centering convention and with bounded fourth moments.
    \item \label{ass:RE-large-m-weak:1} The limit $Q \coloneqq \lim_{m\to\infty} m \cov(\ti I,\ti X)^{\top} \cov(\ti I,\ti X)$ exists  and is well-defined.
    
    \item \label{ass:RE-large-m-weak:1.5} There is $W_0$ positive definite such that $W_m \overset{p}{\to} W_0$.
    
    \item \label{ass:RE-large-m-weak:2} There exists $\kappa>0$ such that for all supports $S\subset[d]$ with $|S|=s^*$ we have
    \begin{equation*}
        \kappa \leq \inf_{\Delta: \|\Delta_{S^c}\|_1\le 3\|\Delta_S\|_1}\frac{\Delta^{\top}\big(Q ^{\top}W_0Q \big)\Delta}{ \|\Delta\|_2^2}.
    \end{equation*}

    \item \label{ass:RE-large-m-weak:4} \Cref{ass:hd-weak}~\ref{ass:hd-weak:3} holds.
\end{enumerate}
\end{assumption}

\Cref{ass:RE-large-m-weak}~\ref{ass:RE-large-m-weak:2} is a standard assumption in the Lasso literature (see, e.g., \citet{buhlmann2011statistics}).

\begin{theorem}\label{thm:rates-large-m-sparse-weak}
    Let $S^*\coloneqq\mathrm{supp}(\beta^*)$ with $|S^*|=s^*$. Under \Cref{ass:RE-large-m-weak} and $\lambda_m\asymp\sqrt{1/m}$, the estimator $\hat\beta^{\mathrm{UP}, \mathrm{HD}}_{\mathrm{GMM}, \ell_1}$ satisfies
    \begin{equation*}
        \|\hat\beta^{\mathrm{UP}, \mathrm{HD}}_{\mathrm{GMM}, \ell_1}(W_m)-\beta^*\|_2=O_p\Big(\sqrt{\tfrac{s^*}{m}}\Big).
    \end{equation*}
    If, in addition, there exists $c>0$ such that $\min_{j\in S^*}|\beta^*_j|\geq c$ (beta–min), then 
    \begin{equation*}
        \widehat S(W_m) \coloneqq \{ j \in [d] \ | \ |\hat\beta^{\mathrm{UP}, \mathrm{HD}}_{\mathrm{GMM}, \ell_1}(W_m)|_j \geq c/2\} \overset{p}{\to} S^*.
    \end{equation*}
\end{theorem}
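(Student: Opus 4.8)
The plan is to run the standard penalized-$M$-estimation argument, exactly parallel to the proof of \Cref{thm:ratesSparse} but with the cross-moment matrices $C_{XX},C_{XY}$ in place of the plug-in covariances and with $Q$ (and its finite-$m$ version $Q_m \coloneqq m\cov(\ti I,\ti X)^{\top}\cov(\ti I,\ti X) \to Q$) playing the role of $\cov(I,X)$. Write $\hat\beta$ for $\hat\beta^{\mathrm{UP},\mathrm{HD}}_{\mathrm{GMM},\ell_1}(W_m)$, $\Delta \coloneqq \hat\beta-\beta^*$, and $r \coloneqq \hat g(\beta^*) = C_{XY}-C_{XX}\beta^*$ (so that $-C_{XX}^{\top}W_m r$ is the gradient of the smooth part of the objective at $\beta^*$). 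The first point, and the reason the cross-moment construction is what makes this work, is that $r$ is \emph{exactly} mean-zero: because $C_{XX}$ is built only from products of distinct folds and the two samples are independent, $\E[C_{XX}]=Q_m$ with no measurement-error bias (in contrast to \Cref{lem:inconst}), and by the exclusion restriction \Cref{ass:1}\ref{ass:1:2}, $\E[C_{XY}] = m\cov(\ti I,\ti X)^{\top}\big(\cov(I,X)\beta^* + \cov(I,\epsilon)\big) = Q_m\beta^*$, so $\E[r]=0$. The sample-mean centering convention only perturbs each $\widehat\cov$ by $O(1/m)$ terms, which are absorbed into the lower-order bounds below.

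The technical heart is the score bound
\[ \big\| C_{XX}^{\top} W_m\, r \big\|_\infty = O_p(1/\sqrt m), \]
matching the choice $\lambda_m \asymp 1/\sqrt m$. Since $d$ is fixed, $\|\cdot\|_\infty \asymp \|\cdot\|_2$, and $C_{XX}\overset{p}{\to}Q$, $W_m\overset{p}{\to}W_0$ are bounded, so it suffices to show $\E\|r\|_2^2 = O(1/m)$ and invoke Markov. I would decompose $r$ into three mean-zero pieces: (i) the cross-sample bilinear term $m\,\widehat\cov(\ti I,\ti X)^{\top}\widehat\cov(I,\epsilon)$; (ii) $m\,\widehat\cov(\ti I,\ti X)^{\top}\widehat\cov(I,X)\beta^* - Q_m\beta^*$; and (iii) $C_{XX}\beta^* - Q_m\beta^*$. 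For (i)–(ii), conditioning on the $(\ti I,\ti X)$-sample, the conditional covariance has trace at most $\tfrac{m^2}{n}\,\|\var(I\epsilon)\|_{\mathrm{op}}\,\|\widehat\cov(\ti I,\ti X)\|_F^2$ (resp.\ with $\var((IX^{\top})\beta^*)$); \Cref{ass:hd-weak}\ref{ass:hd-weak:3} gives $\|\var(I\epsilon)\|_{\mathrm{op}} = O(1/m)$ via $\epsilon = Y - X^{\top}\beta^*$, and $\E\|\widehat\cov(\ti I,\ti X)\|_F^2 = \|\cov(\ti I,\ti X)\|_F^2 + \ti n^{-1}\tr(\Sigma_{IX}) = \tfrac1m\tr(Q_m) + O(1/m) = O(1/m)$, while $n \asymp m$; hence each has variance $O(1/m)$. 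For (iii), a degree-two $U$-statistic over the $K$ folds, I would expand the variance over pairs of fold-pairs and bound each summand again by $\|\cov(\ti I,\ti X)\|_F^2$ and the foldwise trace of $\Sigma_{IX}$, obtaining $O(1/m)$. Summing yields $\E\|r\|_2^2 = O(1/m)$. I expect this variance bookkeeping — tracking the powers of $m$ through the $U$-statistic and the two-sample bilinear structure, with the operator-norm bounds precisely cancelling the $m^2$ prefactors under $n,\ti n \asymp m$ — to be the main obstacle; everything else is routine.

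Given the score bound, the rest is the textbook Lasso argument. On the event $\{\|C_{XX}^{\top}W_m r\|_\infty \le \lambda_m/2\}$, which has probability tending to one once the constant in $\lambda_m \asymp 1/\sqrt m$ is taken large enough, optimality of $\hat\beta$ plus $\|\beta^*\|_1 - \|\hat\beta\|_1 \le \|\Delta_{S^*}\|_1 - \|\Delta_{S^{*c}}\|_1$ yields both the cone condition $\|\Delta_{S^{*c}}\|_1 \le 3\|\Delta_{S^*}\|_1$ and $\tfrac12\|W_m^{1/2}C_{XX}\Delta\|_2^2 \le \tfrac32\lambda_m\|\Delta_{S^*}\|_1$. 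Since $d$ is fixed and $C_{XX}^{\top}W_m C_{XX}\overset{p}{\to}Q^{\top}W_0 Q$, \Cref{ass:RE-large-m-weak}\ref{ass:RE-large-m-weak:2} transfers (with constant $\kappa/2$) to the sample quadratic form on the cone with probability tending to one, so $\tfrac{\kappa}{4}\|\Delta\|_2^2 \le \tfrac32\lambda_m\sqrt{s^*}\,\|\Delta\|_2$, i.e.\ $\|\Delta\|_2 = O_p(\lambda_m\sqrt{s^*}) = O_p(\sqrt{s^*/m})$, and in particular $\hat\beta\overset{p}{\to}\beta^*$. For support recovery, $\|\hat\beta-\beta^*\|_\infty \le \|\hat\beta-\beta^*\|_2 = o_p(1)$; combined with the beta-min bound $\min_{j\in S^*}|\beta^*_j|\geq c$, this forces $|\hat\beta_j| > c/2$ for every $j\in S^*$ and $|\hat\beta_j| = |\Delta_j| < c/2$ for every $j\notin S^*$ with probability tending to one, hence $\widehat S(W_m)=S^*$ with probability tending to one, i.e.\ $\widehat S(W_m)\overset{p}{\to}S^*$.
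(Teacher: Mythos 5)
Your proposal is correct and follows essentially the same route as the paper: a score bound $\big\|C_{XX}^{\top}W_m(C_{XY}-C_{XX}\beta^*)\big\|_\infty=O_p(m^{-1/2})$ obtained from the same variance estimates (operator-norm bounds on $\Sigma_{IX},\Sigma_{IY}$, independence of folds and of the two samples, $n,\ti n\asymp m$), followed by the standard basic-inequality/cone/restricted-eigenvalue argument and the beta-min support-recovery step; the paper merely packages the $(I,Y)$-sample fluctuation as a single term $H=\sqrt m(\widehat{\cov}(I,Y)-\cov(I,Y))$ rather than splitting it into an $\epsilon$-part and an $X^{\top}\beta^*$-part, which is cosmetic. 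The one point to tidy when writing it out is that for your piece (ii) the conditional-covariance bound must be supplemented, via the law of total variance, by the variance of the conditional mean $m(\widehat{\cov}(\ti I,\ti X)-\cov(\ti I,\ti X))^{\top}\cov(I,X)\beta^*$ (the paper's $(E^{\top}\bar c)\beta^*$ term), which is $O(1/m)$ by exactly the operator-norm estimate you already use elsewhere.
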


\subsection{
Variance Reduction in Practice}
\label{sec:vr}
In finite samples, cross-fitting stabilizes the denominator by removing the leading measurement-error bias, but it can introduce additional variability through random splitting. We therefore use two simple variance-reduction devices that preserve consistency and leave the preceding theory unchanged. All methods described in this section are detailed in \Cref{sec:alg_details}.

\paragraph{Monte Carlo averaging over random splits.}
To reduce the finite-sample variance introduced by sample splitting, we repeat the random split multiple times and average the resulting estimates of $C_{XX}$. Each split-specific estimator is consistent for the same population target, hence their average remains consistent. Consequently, all results from the previous subsections apply verbatim to the averaged estimator.

\paragraph{Closed form for the infinite-split average.}
A natural question is whether the split-average admits a closed form when we average over \emph{all} possible splits. The answer is yes. We fix the $(\tilde I,\tilde X)$ sample and consider $K=2$ splits into two equal halves. The cross-fit denominator for a split $(A,B)$ is
\begin{equation*}
    C_{XX}(A,B)\coloneqq m\,\widehat{\cov}_A(\tilde I,\tilde X)^{\top}\widehat{\cov}_B(\tilde I,\tilde X).
\end{equation*}
If we draw $H\in\mathbb{N}$ i.i.d.\ random splits $(A_1,B_1),\dots,(A_H,B_H)$ and let $H\to\infty$, the Monte Carlo average converges (conditional on the data) to the conditional expectation over a uniform random split. This expectation has the closed form
\begin{align*}
    \frac{1}{m}\,\bar C_{XX}
    &\coloneqq
    \lim_{H\to\infty}\frac{1}{H}\sum_{h=1}^H C_{XX}(A_h,B_h)\\
    &=
    \frac{1}{n(n-1)}\sum_{i\neq j} (\tilde I_i \tilde X_i^{\top})^{\top}(\tilde I_j \tilde X_j^{\top})\\
    &=
    \frac{n}{n-1}\widehat{\cov}(\tilde I,\tilde X)^{\top}\widehat{\cov}(\tilde I,\tilde X) \\
    &\quad -
    \frac{1}{n(n-1)}\sum_{i=1}^n (\tilde I_i \tilde X_i^{\top})^{\top}(\tilde I_i \tilde X_i^{\top}).
\end{align*}
Thus, the $H\to\infty$ split-average equals the usual plug-in quadratic form
$\widehat{\cov}(\tilde I,\tilde X)^{\top}\widehat{\cov}(\tilde I,\tilde X)$
minus a diagonal `self-inner-product' correction. This correction is  what removes the measurement-error bias present in the denominator in \Cref{eq:biasedunpaired}. The proof is given in \Cref{sec:proof_splits}.

\section{Experiments}
\label{sec:experiments}
We compare \estn{} and \estnfd{} against standard two-sample baselines (TS-IV and TS-2SLS) on synthetic and real-world data. Implementation details, hyperparameters, and data-generation specifics are deferred to \Cref{sec:dg_details}, additional experiments to \Cref{sec:exp_add}.

\subsection{Synthetic Experiments}
We study three regimes with categorical instruments (the corresponding experiments with continuous instruments are reported in \Cref{sec:exp_add}):
finite-dimensional instruments with sparse $\beta^*$ (Setting 1),
high-dimensional instruments with dense $\beta^*$ (Setting 2), and
high-dimensional instruments with sparse $\beta^*$ (Setting 3).
Throughout, we use balanced categories and balanced sample sizes ($\tilde n = n$). We also report results as a function of the sample-to-instrument ratio $\frac{n}{m}$
which matches the high-dimensional scaling used in \Cref{sec:hd}. Furthermore, TS-IV and TS-2SLS are numerically equivalent for the categorical setting with balanced samples size. We therefore report TS-2SLS only in the continuous settings. For all experiments, shaded regions indicate confidence intervals around the mean computed over $50$ independent runs.

\begin{figure}[t]
    \centering
    \includegraphics[width=0.9\linewidth]{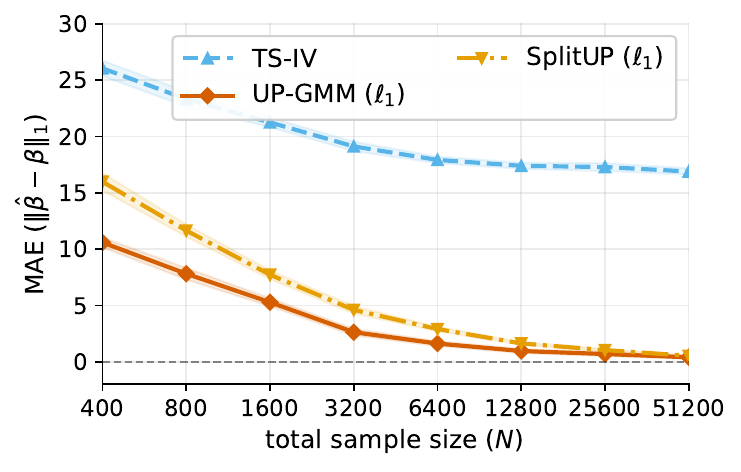}
    \caption{
    \textbf{Setting 1 (finite-dimensional instruments; sparse $\beta^*$).} We compare TS-IV, \estnfd{} (with $\ell_1$ regularization), and \estn{} (with $\ell_1$ regularization) on data with finite-dimensional instruments and a sparse $\beta^*$. Both \estnfd{} and \estn{} are consistent, whereas the estimation error of TS-IV does not vanish even at large sample sizes. Because this setting is low-dimensional, the bias correction in \estn{} is unnecessary and because of the increased variance \estn{} performs worse than \estnfd{}, especially for small sample sizes.
    }
    \label{fig:exp01}
\end{figure}

In particular, for \estn{} we consider (a) Monte Carlo averaging over random splits (with $H$ splits) and (b) the closed-form infinite-split version, denoted \estn{} (analytic). As shown in \Cref{fig:agreement}, \estn{} with $H=10$ and \estn{} (analytic) are numerically indistinguishable in our setting; we therefore report \estn{} (analytic) in all remaining plots.

\paragraph{Setting 1: Finite-dimensional instruments, sparse $\beta^*$.}
Results are shown in \Cref{fig:exp01}. In this regime, the moment matrix can be rank-deficient even asymptotically, so the unregularized TS-IV estimator does not leverage sparsity and fails to recover $\beta^*$ reliably. In contrast, both \estnfd{} and \estn{} exploit the sparse structure and empirically approach the oracle target as $n$ increases. For small and moderate sample sizes, \estnfd{} typically performs slightly better than \estn{}: cross-fitting is not needed to remove high-dimensional measurement-error bias here and the splitting in \estn{} slightly increases variance.

\begin{figure}
        \centering
            \includegraphics[width=0.99\linewidth]{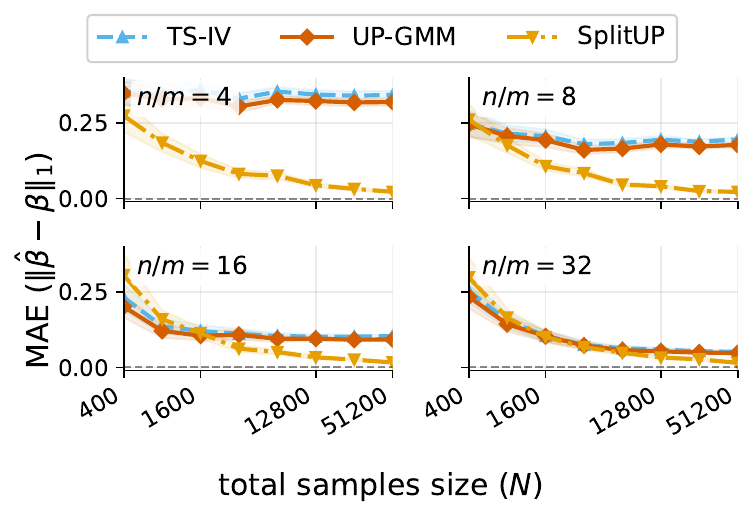}
        \caption{\textbf{Setting 2 (high-dimensional instruments; dense $\beta^*$).} In the high-dimensional instrument regime, the naive plug-in denominator induces a persistent measurement-error bias in two-sample IV, so both TS-IV and \estnfd{} remain asymptotically biased. The cross-moment denominator in \estn{} removes this bias and is the only method that is consistent in this setting. Consistent with the theory, the bias of the naive estimators decreases as $n/m$ increases, i.e., as the problem becomes less high-dimensional.}
        \label{fig:exp02}
\end{figure}

\paragraph{Setting 2: High-dimensional instruments, dense $\beta^*$.}
Results are shown in \Cref{fig:exp02}. In the high-dimensional instrument regime, the naive plug-in denominator induces a persistent measurement-error bias in two-sample IV (see \Cref{sec:42}), making both TS-IV and \estnfd{} asymptotically biased. The cross-moment construction in \estn{} removes this bias and is the only method that remains consistent in this setting. As predicted by the theory, the bias of the naive estimators decreases as $n/m$ increases, i.e., as the problem becomes less high-dimensional.

\begin{figure}[ht]
    \centering
        \includegraphics[width=0.99\linewidth]{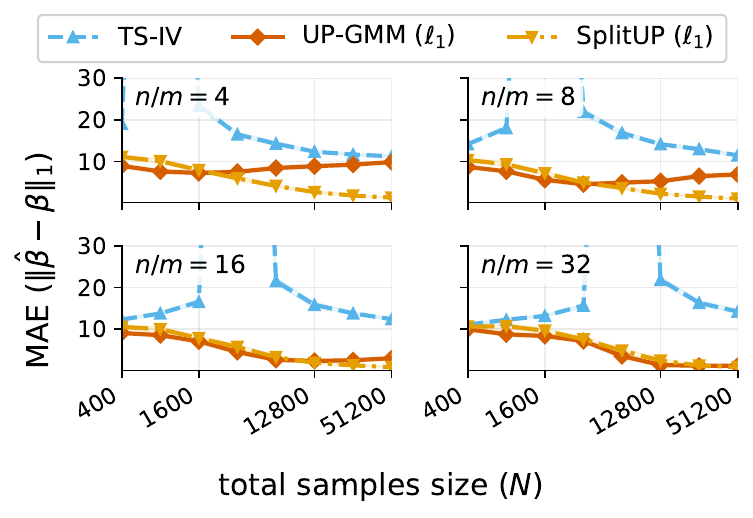}
    \caption{\textbf{Setting 3 (high-dimensional instruments; sparse $\beta^*$).} This setting combines sparse identification with high-dimensional measurement-error bias from plug-in denominators. As a result, TS-IV suffers from both effects, while \estnfd{} addresses the sparsity aspect but still inherits the high-dimensional bias. By combining sparsity with a cross-fit denominator, \estn{} is consistent and achieves the smallest error as $n$ grows. We also observe a transient peaking phenomenon for TS-IV, where the MAE sharply increases at intermediate sample sizes before decreasing again; this is driven by near-singularity of the dense plug-in Gram matrix in the low-rank first-stage regime, and the peak location shifts with $n/m$ (see \Cref{app:tsiv-peaks}).}
    \label{fig:exp05}
\end{figure}

\paragraph{Setting 3: High-dimensional instruments, sparse $\beta^*$.}
Results are shown in \Cref{fig:exp05}. This setting combines the challenges from Settings 1 and 2: identification relies on sparsity, and high-dimensional instruments induce measurement-error bias for plug-in denominators. Consequently, TS-IV suffers from both effects, while \estnfd{} addresses the sparse identification aspect but still inherits the high-dimensional measurement-error bias. By combining sparsity with a cross-fit denominator, \estn{} is consistent and empirically achieves the smallest error as $n$ grows. We also observe a transient `peaking' of TS-IV in Setting~3, where the MAE sharply increases at intermediate sample sizes before decreasing again. This effect is caused by near-singularity of the dense plug-in Gram matrix in the low-rank first-stage regime and its location shifts with $n/m$ because the instability is governed by a critical instrument dimension $m_\star$ and thus occurs around $n_\star\approx (n/m)\,m_\star$; see \Cref{app:tsiv-peaks}.

\bibliographystyle{plainnat}
\bibliography{references}

\clearpage

\appendix
\onecolumn

\section[ff]{Examples of Distributions Satisfying \Cref{ass:1}}
\label{sec:more_graphs}

\Cref{ass:1} (and \Cref{eq:scm}) is rather general: many distributions satisfy \Cref{ass:1} while deviating substantially from the standard two-sample IV setup. In this section, we assume that the data are generated by structural causal models (SCMs) whose induced graphs are shown in \Cref{fig:dag_1}. The parameter of interest is $\beta^*$. We assume that the instrument $I$ affects $X$ and $\tilde X$ in the same way up to second moments, which we depict by a shared coefficient $\gamma$.

\paragraph{Example 1.}
In this example there is no $\tilde Y$ variable. The variables $X$, $Y$, and $\tilde X$ are confounded by an unobserved variable, and we may have $I=\tilde I$. If the distribution is generated by an SCM with induced graph given by \Cref{fig:dag_1} (left), then \Cref{ass:1} is satisfied. Under the usual additional conditions (a rank condition in the dense setting and a restricted nullspace condition in the sparse setting), $\beta^*$ remains identifiable.

\paragraph{Example 2.}
In addition, there may be an observed confounder $Z$ that affects $X$, $\tilde X$, and $Y$; see \Cref{fig:dag_1} (middle). In this case, \Cref{ass:1} holds conditional on $Z$, i.e.,
$\cov(I, Y \mid \tilde X, Z) = \beta^* \cov(I, \tilde X \mid Z)$.
With the same additional conditions as above (rank in the dense case; restricted nullspace in the sparse case), identification again follows.

\paragraph{Example 3.}
Finally, consider the setting without $\tilde I$ and without $\tilde Y$, where $\tilde X$ directly affects $Y$; see \Cref{fig:dag_1} (right). In general, this violates \Cref{ass:1}. Nevertheless, the adjusted moment relation
$\cov(I, Y \mid \tilde X) = \beta^* \cov(I, \tilde X)$
still holds, so the same identification arguments apply. In particular, under appropriate conditions (a rank condition for $\cov(I, \tilde X)$ in the dense setting, and a restricted nullspace condition for $m \cov(I, \tilde X)^\top \cov(I, \tilde X)$ in the sparse setting), $\beta^*$ is identifiable.

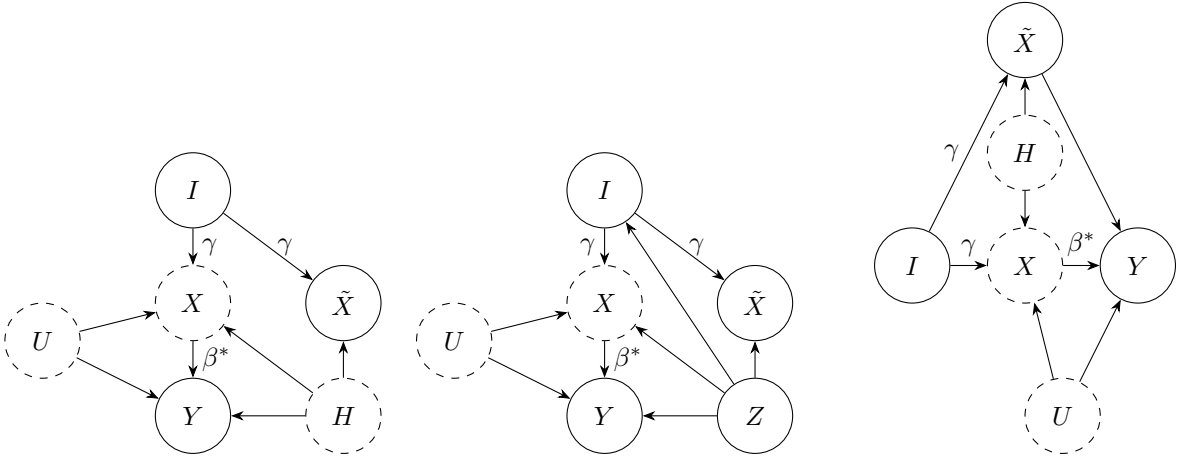
\begin{figure*}[t]
    \begin{subfigure}[t]{0.33\linewidth}
    \centering
        \begin{tikzpicture}[
              >=Stealth,
              node distance=1.2cm and 1.6cm,
              var/.style={draw, circle, inner sep=2pt, minimum width=10mm, minimum height=6mm},
              latent/.style={draw, circle, dashed, inner sep=2pt, minimum width=10mm, minimum height=6mm},
              err/.style={draw, circle, dashed, inner sep=1pt, minimum size=6mm}
            ]
            \node[var] (I) at (-1,2) {$I$};
            
            \node[latent] (H)   at (-3, 0) {$U$};
            \node[latent] (Hr)   at (1, -1) {$H$};
            \node[latent]  (X)   at (-1, 0.5) {$X$};
            \node[var]    (Y)   at (-1,-1) {$Y$};
            
            \node[var]    (Xt)  at ( 1, 0.5) {$\ti{X}$};
            
            \draw[->] (I) --node[midway, right] {$\gamma$} (X);
            \draw[->] (I) --node[midway, right] {$\gamma$}(Xt);
            
            \draw[->] (H)  -- (X);
            \draw[->] (X)  --node[midway, right] {$\beta^*$} (Y);
            \draw[->] (H)  -- (Y);
    
            \draw[->] (Hr)  -- (Y);
            \draw[->] (Hr)  -- (X);
            \draw[->] (Hr)  -- (Xt);
        
        \end{tikzpicture}
    \end{subfigure}\hfill
    \begin{subfigure}[t]{0.33\linewidth}
    \centering
        \begin{tikzpicture}[
              >=Stealth,
              node distance=1.2cm and 1.6cm,
              var/.style={draw, circle, inner sep=2pt, minimum width=10mm, minimum height=6mm},
              latent/.style={draw, circle, dashed, inner sep=2pt, minimum width=10mm, minimum height=6mm},
              err/.style={draw, circle, dashed, inner sep=1pt, minimum size=6mm}
            ]
            \node[var] (I) at (-1,2) {$I$};
            
            \node[latent] (H)   at (-3, 0) {$U$};
            \node[var] (Hr)   at (1, -1) {$Z$};
            \node[latent]  (X)   at (-1, 0.5) {$X$};
            \node[var]    (Y)   at (-1,-1) {$Y$};
            
            \node[var]    (Xt)  at ( 1, 0.5) {$\ti{X}$};
            
            \draw[->] (I) --node[midway, left] {$\gamma$} (X);
            \draw[->] (I) --node[midway, right] {$\gamma$} (Xt);
            
            \draw[->] (H)  -- (X);
            \draw[->] (X)  --node[midway, right] {$\beta^*$} (Y);
            \draw[->] (H)  -- (Y);
    
            \draw[->] (Hr)  -- (Y);
            \draw[->] (Hr)  -- (X);
            \draw[->] (Hr)  -- (Xt);
            \draw[->] (Hr)  -- (I);
        
        \end{tikzpicture}
    \end{subfigure}
    \hfill
    \begin{subfigure}[t]{0.33\linewidth}
    \centering
        \begin{tikzpicture}[
              >=Stealth,
              node distance=1.2cm and 1.6cm,
              var/.style={draw, circle, inner sep=2pt, minimum width=10mm, minimum height=6mm},
              latent/.style={draw, circle, dashed, inner sep=2pt, minimum width=10mm, minimum height=6mm},
              err/.style={draw, circle, dashed, inner sep=1pt, minimum size=6mm}
            ]
            \node[var] (I) at (-2,-1) {$I$};
            
            \node[latent] (H)   at (0, -3) {$U$};
            \node[latent] (Hr)   at (-0.5, .5) {$H$};
            \node[latent]  (X)   at (-0.5, -1) {$X$};
            \node[var]    (Y)   at (1,-1) {$Y$};
            
            \node[var]    (Xt)  at (-0.5, 2) {$\ti{X}$};
            
            \draw[->] (I) --node[midway, above] {$\gamma$} (X);
            \draw[->] (I) --node[midway, left] {$\gamma$} (Xt);
            
            \draw[->] (H)  -- (X);
            \draw[->] (X)  --node[midway, above] {$\beta^*$} (Y);
            \draw[->] (H)  -- (Y);
    
            \draw[->] (Xt)  -- (Y);
            \draw[->] (Hr)  -- (X);
            \draw[->] (Hr)  -- (Xt);
        
        \end{tikzpicture}
    \end{subfigure}
    \caption{Various causal graphs induced by SCMs. Although the shown graphs substantially deviate from the standard two-sample IV setting identification (under appropriate rank conditions) is still possible within our setup.}
    \label{fig:dag_1}
\end{figure*}

\section[g]{Categorical Instruments: Environments without Features as Instruments} \label{sec:categor} 
Assume $I$ is categorical with $m \in \mathbb{N}$ domains and is encoded one–hot as $I\in\{e_1,\dots,e_m\}\subset\mathbb R^m$, where $e_k$ is the $k$th standard basis vector. Suppose that the domain frequencies match across the two samples, with all domains equally likely:
$
    \mathbb{P}(I=e_k)=\mathbb{P}(\ti I=e_k)=\frac{1}{m}\quad\text{for all }k\in[m].
$
Let $n_k=\sum_{i=1}^n \mathbbm 1\{I_i=e_k\}$ and $\ti n_k=\sum_{j=1}^{\ti n}\mathbbm 1\{\ti I_j=e_k\}$ denote the domain counts in the $Y$– and $X$–samples, respectively, and define the within–domain sample means
\begin{equation*}
    \bar Y_k=\frac{1}{n_k}\sum_{i:I_i=e_k} Y_i,
    \qquad
    \bar X_k=\frac{1}{\ti n_k}\sum_{j:\ti I_j=e_k} \ti X_j.
\end{equation*}
under equal domain probabilities and matched empirical shares, $n_k/n=\ti n_k/\ti n=1/m$ for all $k$.
Then \textsc{TS-2SLS}, \textsc{TS-IV} and \estnfd{} ($\gmm(\mathrm{Id})$) are the same estimator and are equivalent to OLS on the category means:
\begin{equation*}
    \gmm(\mathrm{Id})
    =\Big(\sum_{k=1}^m \bar X_k \bar X_k^{\top}\Big)^{-1}\Big(\sum_{k=1}^m \bar X_k \bar Y_k\Big).
\end{equation*}

\section{Algorithm Details}
\label{sec:alg_details}

In total we consider 6 algorithms: TS-IV, TS-2SLS, \estnfd{}, \estn{}, \estn{} (analytic) and naive OLS. For each method, we give a short intuitive description and a compact pseudocode block.

\paragraph{TS-IV.}
TS-IV solves the unpaired moment equation $\cov(I,Y)=\cov(\tilde I,\tilde X)\beta$ by plugging in empirical cross-covariances from the two samples. For numerical stability we use a small ($10^{-10}$) ridge-stabilization. Note that \estnfd{} with $W_N = \mathrm{Id}$ is equivalent to TS-IV. The pseudocode for TS-IV is given in \Cref{alg:tsiv}.

\begin{algorithm}[t]
\caption{TS-IV (ridge-stabilized)}
\label{alg:tsiv}
\begin{algorithmic}[1]
\STATE \textbf{Input:} $\{(I_i,Y_i)\}_{i=1}^n$, $\{(\tilde I_j,\tilde X_j)\}_{j=1}^{\tilde n}$, ridge $\lambda>0$
\STATE Compute $a \coloneqq \widehat{\cov}(I,Y)\in\mathbb R^m$
\STATE Compute $B \coloneqq \widehat{\cov}(\tilde I,\tilde X)\in\mathbb R^{m\times d}$
\STATE Solve $(B^\top B+\lambda \mathrm{Id}_d)\hat\beta = B^\top a$
\STATE \textbf{Output:} $\hat\beta$
\end{algorithmic}
\end{algorithm}

\paragraph{TS-2SLS.}
TS-2SLS is two-stage least squares with unpaired data: first learn the mapping from instruments to covariates on the $(\tilde I,\tilde X)$ sample, then use instruments in the $(I,Y)$ sample to predict the missing covariates, and finally regress $Y$ on the predicted covariates. We again add a small ridge penalty ($10^{-10}$) for numerical stability. The pseudocode for TS-2SLS is given in \Cref{alg:ts2sls}.

\begin{algorithm}[t]
\caption{TS-2SLS (ridge-stabilized)}
\label{alg:ts2sls}
\begin{algorithmic}[1]
\STATE \textbf{Input:} $\{(I_i,Y_i)\}_{i=1}^n$, $\{(\tilde I_j,\tilde X_j)\}_{j=1}^{\tilde n}$, ridge $\lambda>0$
\STATE First stage: $\Gamma \coloneqq (\tilde I^\top \tilde I+\lambda I_m)^{-1}\tilde I^\top \tilde X$
\STATE Predict in $Y$-sample: $\hat X \coloneqq I\Gamma$
\STATE Second stage: solve $(\hat X^\top \hat X+\lambda \mathrm{Id}_d)\hat\beta = \hat X^\top Y$
\STATE \textbf{Output:} $\hat\beta$
\end{algorithmic}
\end{algorithm}

\paragraph{\estnfd{}.}
\estnfd{} is a GMM estimator for the same unpaired moment condition as TS-IV, but it optionally uses an estimated optimal weight matrix and includes $\ell_1$-regularization for settings with sparse $\beta^*$. Intuitively, some moment coordinates are noisier than others because they come from two different samples; optimal weighting downweights noisy moments. A sparse variant adds an $\ell_1$ penalty and can refit on the selected support. We add a small ridge penalty ($10^{-10}$) for numerical stability. For all experiments we set set \texttt{optimal weight = True} and if $\beta^*$ is sparse we additionally set \texttt{l1 = True} and \texttt{post-refit = True}. The pseudocode for \estnfd{} is given in \Cref{alg:upgmm}.

\begin{algorithm}[t]
\caption{\estnfd{}}
\label{alg:upgmm}
\begin{algorithmic}[1]
\STATE \textbf{Input:} data, ridge $\lambda>0$, option \texttt{optimal weight}, option \texttt{l1}, option \texttt{post-refit}
\STATE $a \coloneqq \widehat{\cov}(I,Y)$, $B \coloneqq \widehat{\cov}(\tilde I,\tilde X)$
\STATE Init: $\hat\beta^{(0)} \coloneqq (B^\top B+\lambda \mathrm{Id}_d)^{-1}B^\top a$
\IF{\texttt{optimal weight}}
\STATE Estimate moment covariance $\hat\Omega$ using $IY$ and $\tilde I(\tilde X^\top \hat\beta^{(0)})$ \hfill \textit{(see \Cref{eq:OmegaHat-explicit})}
\STATE Set $W \coloneqq (\hat\Omega+\lambda I_m)^{-1}$
\ELSE
\STATE Set $W \coloneqq I_m$
\ENDIF
\IF{\texttt{l1} is false}
\STATE Solve $(B^\top W B+\lambda \mathrm{Id}_d)\hat\beta = B^\top W a$ 
\ELSE
\STATE Solve $\hat\beta \in \arg\min_\beta \frac12\|W^{1/2}(a-B\beta)\|_2^2 + \lambda\|\beta\|_1$
\IF{\texttt{post-refit}}
\STATE Let $\hat S \coloneqq \{j:|\hat\beta_j|>0\}$ and refit dense UP-GMM restricted to $\hat S$
\ENDIF
\ENDIF
\STATE \textbf{Output:} $\hat\beta$
\end{algorithmic}
\end{algorithm}

\paragraph{\estn{}.}
In the high-dimensional instrument regime, the plug-in denominator $B^\top B$ has measurement-error bias that does not vanish with $m$. \estn{} removes this by forming a cross-moment denominator from independent folds of the $(\tilde I,\tilde X)$ sample. It then solves $C_{XX}\beta=C_{XY}$, optionally with $\ell_1$ regularization and post-refit. We add a small ridge penalty ($10^{-10}$) for numerical stability. For all experiments we choose $K = 2$, $H = 10$, and for sparse $\beta^*$ we additionally set \texttt{l1 = True} and \texttt{post-refit = True}.  The pseudocode for \estn{} is given in \Cref{alg:upgmmhd}.

\begin{algorithm}[t]
\caption{\textsc{FoldCrossCov}: compute $B_k=\widehat{\cov}_k(\tilde I,\tilde X)$}
\label{alg:fold-crosscov}
\begin{algorithmic}[1]
\STATE \textbf{Input:} fold data $\{(\tilde I_j,\tilde X_j)\}_{j\in F_k}$ with $|F_k|=n_k$
\STATE \textbf{Output:} $B_k\in\mathbb R^{m\times d}$
\IF{$\tilde I$ is one-hot (categorical)}
    \STATE Compute fold mean $\bar X \coloneqq \frac{1}{n_k}\sum_{j\in F_k}\tilde X_j$
    \FOR{each environment $e\in[m]$}
        \STATE $J_{k,e}\coloneqq\{j\in F_k:\tilde I_j=e_e\}$, \quad $n_{k,e}\coloneqq |J_{k,e}|$, \quad $p_{k,e}\coloneqq n_{k,e}/n_k$
        \IF{$n_{k,e}>0$}
            \STATE $\bar X_{k,e}\coloneqq \frac{1}{n_{k,e}}\sum_{j\in J_{k,e}}\tilde X_j$
        \ELSE
            \STATE $\bar X_{k,e}\coloneqq \bar X$
        \ENDIF
        \STATE Set row $e$ of $B_k$ as $(B_k)_{e,:}\coloneqq p_{k,e}\,(\bar X_{k,e}-\bar X)^\top$
    \ENDFOR
\ELSE
    \STATE Center columns: $\tilde I_c\coloneqq \tilde I-\frac{1}{n_k}\sum_{j\in F_k}\tilde I_j$, \quad $\tilde X_c\coloneqq \tilde X-\frac{1}{n_k}\sum_{j\in F_k}\tilde X_j$
    \STATE $B_k \coloneqq \frac{1}{n_k}\tilde I_c^\top \tilde X_c$
\ENDIF
\STATE \textbf{return} $B_k$
\end{algorithmic}
\end{algorithm}

\begin{algorithm}[t]
\caption{\estn{} }
\label{alg:upgmmhd}
\begin{algorithmic}[1]
\STATE \textbf{Input:} data, folds $K\ge 2$, redraws $H$, ridge $\lambda>0$, option \texttt{l1}, option \texttt{post-refit}, \texttt{optimal weight}
\STATE $a \coloneqq \widehat{\cov}(I,Y)$, $B \coloneqq \widehat{\cov}(\tilde I,\tilde X)$
\STATE $C_{XY} \coloneqq m\,B^\top a$
\STATE Initialize $C_{XX}\coloneqq 0$
\FOR{$b=1$ to $H$}
\STATE Split $\{1,\dots,\tilde n\}$ into $K$ folds (stratify within environments if $I$ is one-hot)
\STATE For each fold $k$, compute $B_k \coloneqq \textsc{FoldCrossCov}\big(\{(\tilde I_j,\tilde X_j)\}_{j\in F_k}\big)$ (Alg.~\ref{alg:fold-crosscov})

\STATE $C_{XX} \coloneqq C_{XX} + \frac{m}{K(K-1)}\sum_{h\neq k} B_h^\top B_k$
\ENDFOR

\IF{\texttt{optimal weight}}
\STATE Estimate moment covariance $\hat\Omega$
\STATE Set $W \coloneqq (\hat\Omega+\lambda I_m)^{-1}$
\ELSE
\STATE Set $W \coloneqq I_m$
\ENDIF

\STATE $C_{XX} \coloneqq \frac{1}{H}C_{XX}$
\IF{\texttt{l1} is false}
\STATE Solve $(C_{XX}^\top W C_{XX}+\lambda \mathrm{Id}_d)\hat\beta = C_{XX}^\top W C_{XY}$
\ELSE
\STATE Solve $\hat\beta \in \arg\min_\beta \frac12\|W^{1/2}(C_{XY}-C_{XX}\beta)\|_2^2 + \lambda\|\beta\|_1$
\IF{\texttt{post-refit}}
\STATE Let $\hat S \coloneqq \{j:|\hat\beta_j|>0\}$ and refit dense \estn{} on the subset $\hat{S}$
\ENDIF
\ENDIF
\STATE \textbf{Output:} $\hat\beta$
\end{algorithmic}
\end{algorithm}

\paragraph{\estn{} (analytic).}
\estn{} (analytic) replaces Monte Carlo splitting by the closed-form infinite-split limit of the cross-fit denominator. Intuitively, it equals the usual quadratic form minus a self-inner-product correction that removes the leading measurement-error bias. We add a small ridge penalty ($10^{-10}$) for numerical stability.
For sparse $\beta^*$ we set \texttt{l1 = True} and \texttt{post-refit = True}. The pseudocode for \estn{} is given in \Cref{alg:upgmmhdana}.

\begin{algorithm}[t]
\caption{\textsc{CrossFoldDenom}: compute $C_{XX}$ via cross-fold cross-moments}
\label{alg:crossfold-denom}
\begin{algorithmic}[1]
\STATE \textbf{Input:} $\{(\tilde I_j,\tilde X_j)\}_{j=1}^{\tilde n}$, folds $K\ge 2$, redraws $B\ge 1$
\STATE \textbf{Output:} $C_{XX}\in\mathbb R^{d\times d}$
\STATE Initialize accumulator $A \coloneqq 0_{d\times d}$
\FOR{$b=1$ to $B$}
    \IF{$\tilde I$ is one-hot (categorical)}
        \STATE For each environment $e\in[m]$, collect indices $J_e\coloneqq\{j:\tilde I_j=e_e\}$ and randomly split $J_e$ into $K$ parts $J_{e,1},\dots,J_{e,K}$
        \STATE Define folds $F_k \coloneqq \bigcup_{e=1}^m J_{e,k}$ for $k\in[K]$ \COMMENT{stratified within environments}
    \ELSE
        \STATE Randomly permute $\{1,\dots,\tilde n\}$ and split into $K$ folds $F_1,\dots,F_K$
    \ENDIF
    \STATE For each fold $k$, compute $B_k \coloneqq \textsc{FoldCrossCov}\big(\{(\tilde I_j,\tilde X_j)\}_{j\in F_k}\big)$ (Alg.~\ref{alg:fold-crosscov})
    \STATE $S \coloneqq 0_{d\times d}$
    \FOR{$k=1$ to $K$}
        \FOR{$h=1$ to $K$}
            \IF{$h\neq k$}
                \STATE $S \coloneqq S + B_h^\top B_k$
            \ENDIF
        \ENDFOR
    \ENDFOR
    \STATE $A \coloneqq A + \frac{m}{K(K-1)}\,S$
\ENDFOR
\STATE $C_{XX} \coloneqq \frac{1}{B}A$
\STATE \textbf{return} $C_{XX}$
\end{algorithmic}
\end{algorithm}

\begin{algorithm}[t]
\caption{\estn{} (analytic)}
\label{alg:upgmmhdana}
\begin{algorithmic}[1]
\STATE \textbf{Input:} data, ridge $\lambda>0$, option \texttt{l1}, option \texttt{post-refit}, \texttt{optimal weight}
\STATE Compute $C_{XY} \coloneqq m\,\widehat{\cov}(\tilde I,\tilde X)^\top \widehat{\cov}(I,Y)$
\STATE $C_{XX} \coloneqq \textsc{CrossFoldDenom}\big(\{(\tilde I_j,\tilde X_j)\}_{j=1}^{\tilde n}, K, B\big)$ (\Cref{alg:crossfold-denom})
\STATE Set $C_{XX} \coloneqq C_{XX}$

\IF{\texttt{optimal weight}}
\STATE Estimate moment covariance $\hat\Omega$
\STATE Set $W \coloneqq (\hat\Omega+\lambda I_m)^{-1}$
\ELSE
\STATE Set $W \coloneqq I_m$
\ENDIF

\IF{\texttt{l1} is false}
\STATE Solve $(C_{XX}^\top W C_{XX}+\lambda \mathrm{Id}_d)\hat\beta = C_{XX}^\top W C_{XY}$
\ELSE
\STATE Solve $\hat\beta \in \arg\min_\beta \frac12\|W^{1/2}(C_{XY}-C_{XX}\beta)\|_2^2 + \lambda\|\beta\|_1$
\IF{\texttt{post-refit}}
\STATE Let $\hat S \coloneqq \{j:|\hat\beta_j|>0\}$ and refit dense \estn{} (analytic) on the subset $\hat{S}$
\ENDIF
\ENDIF
\STATE \textbf{Output:} $\hat\beta$
\end{algorithmic}
\end{algorithm}

\paragraph{Naive OLS.}
Naive OLS randomly pairs $\tilde X$ rows with $Y$ rows and runs OLS. This ignores the missing joint structure and does not target the IV moment condition; it serves only as a baseline.
\begin{algorithm}[t]
\caption{Naive OLS via random pairing}
\begin{algorithmic}[1]
\STATE \textbf{Input:} $\{Y_i\}_{i=1}^n$, $\{\tilde X_j\}_{j=1}^{\tilde n}$
\STATE Set $n' \coloneqq \min(n,\tilde n)$ and sample indices $\mathcal I_X,\mathcal I_Y$ of size $n'$
\STATE Form paired matrices $X_p \coloneqq \tilde X_{\mathcal I_X}$ and $Y_p \coloneqq Y_{\mathcal I_Y}$
\STATE Center columns of $X_p$ and center $Y_p$
\STATE Output $\hat\beta \coloneqq (X_p^\top X_p)^\dagger X_p^\top Y_p$
\end{algorithmic}
\end{algorithm}

\section{Data Generating Processes}
\label{sec:dg_details}

All synthetic experiments follow the unpaired IV model in \Cref{eq:scm}. We generate two independent samples: a $Y$-sample $\{(I_i,Y_i)\}_{i=1}^n$ and an $X$-sample $\{(\tilde I_j,\tilde X_j)\}_{j=1}^{\tilde n}$, where $X$ is latent in the $Y$-sample and $\tilde Y$ is latent in the $X$-sample. Hidden confounding is introduced via a latent variable $U$ that affects both $X$ and $Y$. Throughout, we use balanced sample sizes of the form
\begin{equation*}
    n = m r,
    \qquad
    \tilde n = m \ti r,
\end{equation*}
where $m$ denotes the number (or dimension) of instruments and $r,\ti r$ control the sample-to-instrument ratios. In all settings, the two samples share the same first-stage cross-moment (Assumption~\ref{ass:1}.\ref{ass:1:1}) by construction.

\paragraph{Common structural equations.}
We generate latent covariates and outcomes via
\begin{equation*}
    X = \mu(I) + \gamma_x U + \varepsilon_x,
    \qquad
    Y = X^\top \beta^* + \gamma_y U + \varepsilon_y,
    \qquad
    \ti X = \mu( \ti I) + \gamma_x \ti U + \ti \varepsilon_x,
\end{equation*}
where $U, \ti U\sim\mathcal N(0,\sigma_u^2)$ is an unobserved confounder and $\ti \varepsilon_x, \varepsilon_x,\varepsilon_y$ are mean-zero noise terms. The dependence of $Y$ on $U$ implies endogeneity ($\mathbb E[\varepsilon \mid X]\neq 0$), while exogeneity with respect to the instrument is enforced by construction ($\mathbb E[\varepsilon\mid I]=0$).

\subsection{Categorical instruments (one-hot environments)}
\label{sec:dg_categorical}

\paragraph{Instrument.}
For categorical instruments, $I\in\{e_1,\dots,e_m\}\subset\mathbb R^m$ is one-hot with uniform environment probability. The $Y$-sample and $X$-sample are balanced across environments: for each environment $e\in[m]$ we generate exactly $r$ observations in the $Y$-sample and $\ti r$ observations in the $X$-sample.

\paragraph{Environment-specific first stage and heteroskedasticity.}
We draw environment means $\mu_e\in\mathbb R^d$ i.i.d.\ as
\begin{equation*}
    \mu_e \sim \mathcal N(0,\mathrm{Id}_d),\qquad e\in[m].
\end{equation*}
To introduce realistic heteroskedasticity, we draw environment-specific noise scales
\begin{equation*}
    \sigma_{x,e} \sim \sigma_x \cdot \mathrm{LogNormal}(0,0.5),
    \qquad
    \sigma_{\varepsilon,e} \sim \sigma_\varepsilon \cdot \mathrm{LogNormal}(0,0.5),
\end{equation*}
clipped to a fixed range and renormalized to keep the average scale constant.

\paragraph{Sampling.}
For each $Y$-sample observation in environment $e$ we sample
\begin{equation*}
    U\sim\mathcal N(0,\sigma_u^2),\quad
    \varepsilon_x\sim\mathcal N(0,\sigma_{x,e}^2 \mathrm{Id}_d),\quad
    \varepsilon_y\sim\mathcal N(0,\sigma_{\varepsilon,e}^2),
\end{equation*}
    set $X_{\mathrm{lat}} = \mu_e + \gamma_x U + \varepsilon_x$, and output
\begin{equation*}
    Y = X_{\mathrm{lat}}^\top \beta^* + \gamma_y U + \varepsilon_y.
\end{equation*}
For each $X$-sample observation in the same environment $e$ we sample $\ti U$ and $\ti \varepsilon_x$ analogously and output
\begin{equation*}
    \tilde X = \mu_e + \gamma_x \ti U + \ti \varepsilon_x.
\end{equation*}
This construction ensures $\cov(I,X)=\cov(\tilde I,\tilde X)$.

\paragraph{Setting 1 (categorical): finite-dimensional instruments, sparse $\beta^*$.}
We fix the number of environments $m$ and dimension $d$, and choose $\beta^*$ to be sparse with $s^*$ nonzeros:
\begin{equation*}
\|\beta^*\|_0 = s^*,\qquad
(\beta^*)_j \in [-1, -0.5] \cup [0.5, 1] \text{ uniformly}.
\end{equation*}
We increase the sample size through $r,\ti r$ (equivalently $n,\tilde n\to\infty$ with fixed $m$), matching the finite-dimensional instrument regime.

\paragraph{Setting 2 (categorical): high-dimensional instruments, dense $\beta^*$.}
We consider the high-dimensional instrument regime by increasing $m$ while keeping the ratios $r=n/m$ and $\ti r=\tilde n/m$ fixed. We consider $d$. The causal effect $\beta^*$ is dense $((\beta^*)_j \in [-1, -0.5] \cup [0.5, 1]$ uniformly), and performance is reported as a function of $n/m$. This is the regime where the plug-in denominator in unpaired IV exhibits persistent measurement-error bias.

\paragraph{Setting 3 (categorical): high-dimensional instruments, sparse $\beta^*$.}
This setting combines high-dimensional instruments ($m\to\infty$ with fixed $n/m$) and sparse causal effects ($\|\beta^*\|_0=s^*$). To induce weak and low-rank first stages, we replace the i.i.d.\ environment means by a low-rank construction:
\begin{equation*}
    Z_e\sim\mathcal N(0,\mathrm{Id}_k),
    \qquad
    \mu_e = Z_e A^\top,\qquad
    A\in\mathbb R^{d\times k}\ \text{fixed},\ k\ll d.
\end{equation*}
Equivalently, $\{\mu_e\}_{e=1}^m$ lie in the $k$-dimensional subspace spanned by the columns of $A$, yielding a rank-constrained moment matrix and making sparsity essential for identification. Noise scales and confounding are generated as in Setting~1.

\subsection{Continuous Instruments}
\label{sec:dg_continuous}

\paragraph{Instrument and first-stage map.}
For continuous instruments, we draw (independently)
\begin{equation*}
    I \in \mathbb R^m,\qquad I \sim \mathcal N\Big(0,\frac{1}{m} \mathrm{Id}_m\Big), \qquad \ti I \in \mathbb R^m,\qquad \ti I \sim \mathcal N\Big(0,\frac{1}{m} \mathrm{Id}_m\Big),
\end{equation*}
and generate covariates through a linear first stage with a shared matrix $\Pi\in\mathbb R^{m\times d}$:
\begin{equation*}
    X = I\Pi + \gamma_x U + \varepsilon_x,
    \qquad
    Y = X^\top\beta^* + \gamma_y U + \varepsilon_y,
    \qquad
    \ti X = \ti I \Pi + \gamma_x \ti U + \ti \varepsilon_x.
\end{equation*}
The matrix $\Pi$ is sampled once per dataset and shared across the $Y$-sample and $\ti X$-sample, ensuring $\cov(I,X)=\cov(\tilde I,\tilde X)$ by construction.

\paragraph{Coordinate-dependent heteroskedasticity.}
To mimic heterogeneous noise across instrument coordinates, each sample point is assigned a `dominant coordinate'
\begin{equation*}
c(i) \coloneqq \arg\max_{\ell\in[m]} |I_{i\ell}|,
\end{equation*}
and the noise scales depend on $c(i)$:
\begin{equation*}
\varepsilon_y \sim \mathcal N\big(0,\sigma_{\varepsilon,c(i)}^2\big),
\qquad
(\varepsilon_x)_t \sim \mathcal N\big(0,\sigma_{x,c(i)}^2\big)\ \text{independently for }t\in[d],
\end{equation*}
where $\{\sigma_{\varepsilon,\ell}\}_{\ell=1}^m$ and $\{\sigma_{x,\ell}\}_{\ell=1}^m$ are sampled from bounded lognormal distributions.

\paragraph{Setting 1 (continuous): finite-dimensional instruments, sparse $\beta^*$.}
We fix $m$ and $d$, choose $\beta^*$ sparse with $\|\beta^*\|_0=s^*$, and increase sample size via $r,\ti r$ (thus $n,\tilde n\to\infty$ with fixed $m$). The first-stage matrix $\Pi$ is dense i.i.d.\ Gaussian, which yields a full-rank population first stage when $m$ is sufficiently large, but the regime of interest keeps $d$ larger than $m$ so sparse structure is required.

\paragraph{Setting 2 (continuous): high-dimensional instruments, dense $\beta^*$.}
We increase $m$ while keeping $n/m$ and $\tilde n/m$ fixed and use a dense $\beta^*$. The persistence of measurement-error bias for plug-in denominators in two-sample IV carries over to this continuous-instrument regime, motivating cross-moment denominators.

\paragraph{Setting 3 (continuous): high-dimensional instruments, sparse $\beta^*$.}
We again take $m\to\infty$ with fixed ratios $n/m$ and $\tilde n/m$, but use a sparse $\beta^*$ with $\|\beta^*\|_0=s^*$. To create a low-rank first stage (analogous to the low-rank environment means in the categorical case), we draw a fixed matrix $A\in\mathbb R^{d\times k}$ with $k\ll d$ and set
\begin{equation*}
    Z \in \mathbb R^{m\times k},\quad Z_{\ell}\sim\mathcal N(0,\mathrm{Id}_k),\qquad
    \Pi \coloneqq \pi_{\mathrm{scale}}\,(Z A^\top),
\end{equation*}
so $\Pi$ has rank at most $k$. Confounding and heteroskedasticity are generated as in setting 2.

\subsection{Choices of Constants}
For all experiments, we set $\ti n = n$ and $\ti r = r$ and 
\begin{align*}
    \gamma_x \coloneqq \frac{1}{5} \qquad
    \gamma_y \coloneqq \frac{1}{5} \qquad
    \sigma_u \coloneqq \frac{1}{5} \qquad
    \sigma_x \coloneqq 1 \qquad
    \sigma_{\varepsilon} &\coloneqq \frac{1}{5}
\end{align*}
We use the same parameters for the categorical instrument and the continuous instrument experiments.
\paragraph{Setting 1.} We set $m=100$, $d=200$ and $s^*=10$.

\paragraph{Setting 2.} We set $d=2$. 

\paragraph{Setting 3.} We set $k=60$, $d=100$ and $s^*=10$.

\section{Additional Experiments}
\label{sec:exp_add}

\subsection{Agreement between \estn{} and \estn{} (analytic).}
\Cref{fig:agreement} show that the estimates of \estn{} and \estn{} (analytic) agree closely in Setting 2 (with categorical instruments).

\begin{figure*}[t]
    \centering
    \includegraphics[width=0.45\linewidth]{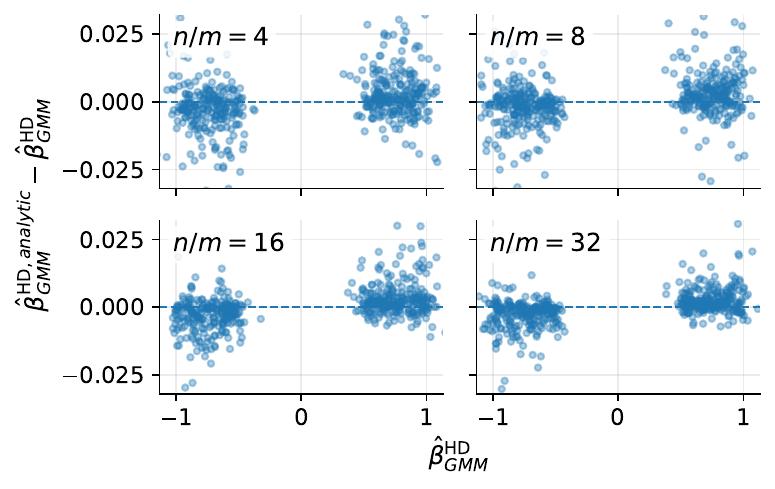}
    \caption{
    \textbf{Agreement between \estn{} and \estn{} (analytic).} In Setting 2, we compare \estn{} to \estn{} (analytic). To mitigate numerical instability when $n/m=32$ and $N$ is small, we remove the $0.1\%$ of points with the largest discrepancy (outliers). The two estimators agree closely, remaining within a $2.5\%$ margin of each other.
    }
    \label{fig:agreement}
\end{figure*}

\subsection{Continuous Instrument}
The experimental details can be found in \Cref{sec:dg_continuous}. The results for Setting 1 are given in \Cref{fig:exp01_cont} and the results of Setting 2 and 3 in \Cref{fig:exp025_cont}.

\begin{figure}
        \centering
            \includegraphics[width=0.49\linewidth]{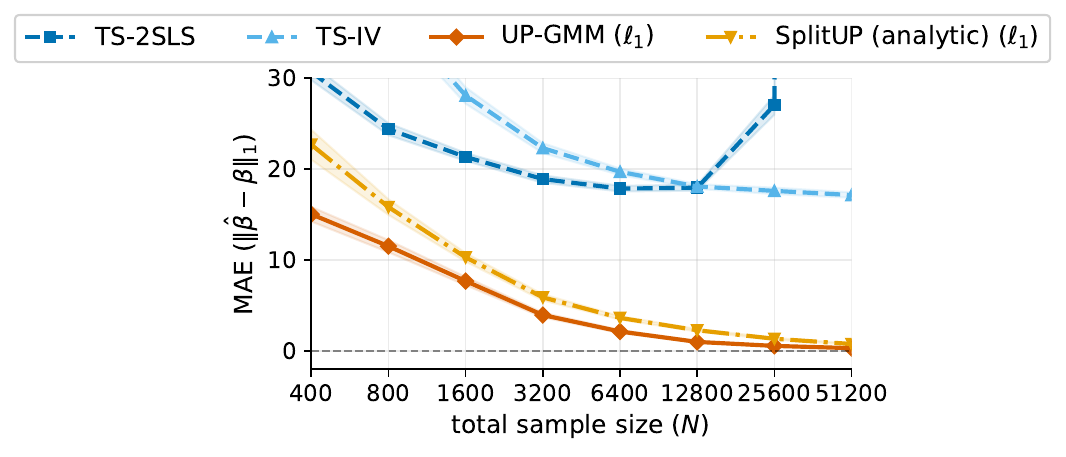}
        \caption{\textbf{Setting 1 (continuous).} \estnfd{} ($\ell_1$-regularized) and \estn{} ($\ell_1$-regularized) are consistent while TS-IV and TS-2SLS are not.}
        \label{fig:exp01_cont}
\end{figure}

\begin{figure*}[t]
    \centering
    \begin{subfigure}[t]{0.47\linewidth}
        \centering
        \includegraphics[width=\linewidth]{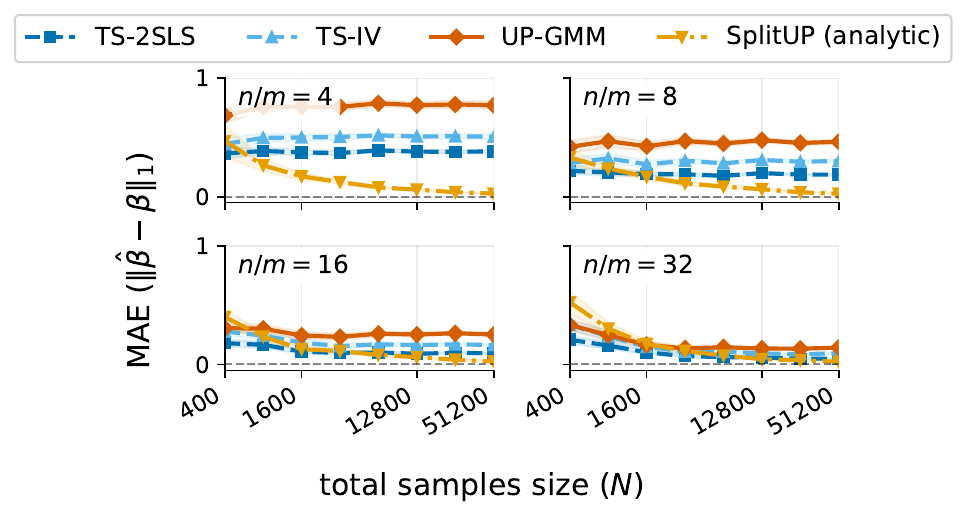}
    \end{subfigure}\hfill
    \begin{subfigure}[t]{0.51\linewidth}
        \centering
        \includegraphics[width=\linewidth]{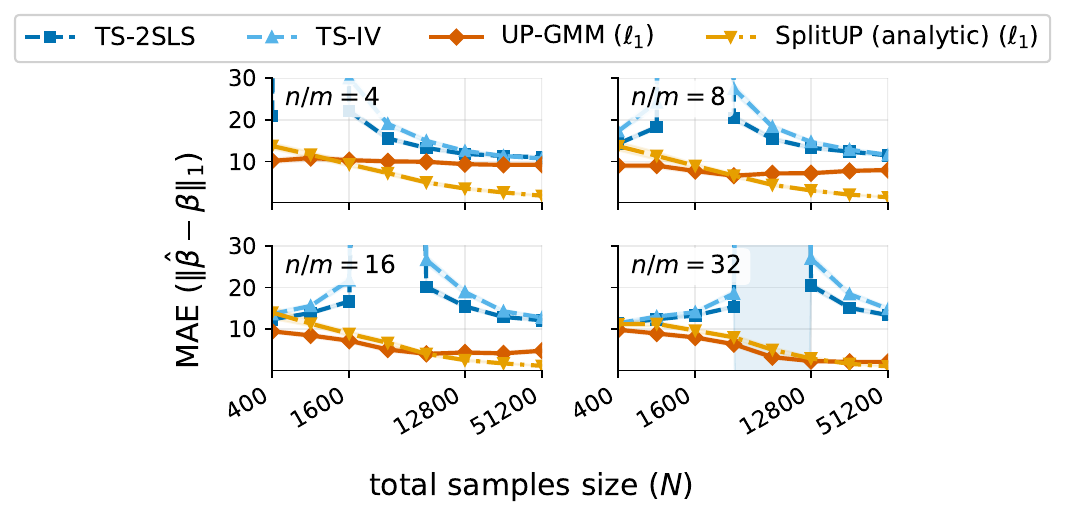}
    \end{subfigure}
    \caption{\textbf{Left: Setting 2 (continuous).} Only \estn{} is consistent, all other estimators remain biased even for large sample-sizes. The bias reduces as $n/m$ increases and the problem becomes less high-dimensional.
    \textbf{Left: Setting 3 (continuous).} Only \estn{} ($\ell_1$-regularized) is consistent, all other estimators remain biased even for large sample-sizes. The bias reduces for \estnfd{} ($\ell_1$-regularized) as $n/m$ increases and the problem becomes less high-dimensional.
    }
    \label{fig:exp025_cont}
\end{figure*}

\section{Explaining the TS-IV Peaking Phenomenon in Setting 3}
\label{app:tsiv-peaks}

In Setting~3 the population first stage is intentionally low-rank: the environment means satisfy
$\mu_e = Z_e A^\top$ with $A\in\mathbb R^{d\times k}$ and $k\ll d$, so $\cov(I,X)$ (and hence $\cov(\tilde I,\tilde X)$) has rank at most $k$.
As a consequence, the population matrix $B_\star\coloneqq\cov(\tilde I,\tilde X)\in\mathbb R^{m\times d}$ has $d-k$ directions with zero signal.
In finite samples, the empirical cross-covariance
$B\coloneqq\widehat{\cov}(\tilde I,\tilde X)=B_\star+E$
contains estimation noise $E$ (coming from within-environment sampling noise and heteroskedasticity), which injects random components into these $d-k$ `null' directions.

TS-IV computes (up to a small ridge) the dense plug-in solution
\begin{equation*}
    \hat\beta_{\mathrm{TS\text{-}IV}}
    =
    (B^\top B+\lambda I_d)^{-1}B^\top a,
    \qquad
    a=\widehat{\cov}(I,Y).    
\end{equation*}

The key issue is conditioning: in Setting~3 the smallest eigenvalues of $B^\top B$ are dominated by the Gram matrix of the noise-only block (the components of $E$ orthogonal to $\mathrm{span}(A)$).
When the number of instruments $m$ is in the critical regime where this noise Gram matrix is barely invertible, $\lambda_{\min}(B^\top B)$ can become very small in a non-negligible fraction of repetitions, so $(B^\top B+\lambda I_d)^{-1}$ amplifies noise and produces occasional extremely large coefficients, leading to a pronounced `peaking' of the MAE.

This peak is specific to Setting~3 because only there the population first stage is rank-deficient ($k<d$), creating a $(d-k)$-dimensional subspace where TS-IV must effectively invert pure noise; in Settings~1--2 (fixed $m$ or very small $d$) this near-singularity mechanism is absent or not swept through.
Finally, the peak `moves to the right' with $n/m$ because in the high-dimensional scaling $n=rm$: the instability is triggered at a roughly fixed critical instrument dimension $m_\star$ (set by the effective null dimension, e.g.\ $m_\star\approx d-k$ up to constants), hence it occurs at sample size $n_\star\approx r\,m_\star$, which increases linearly with $r=n/m$.

\section{Consistency and Asymptotic CIs for the Dense Setting}
\label{sec:dense}

For the optimal choice, estimate the variance of the sample moment at a preliminary consistent estimator $\hat\beta^{(0)}$ (for instance \eqref{eq:betaGMM-explicit} with $W_N=\mathrm{Id}_m$) via
\begin{equation*}
    \widehat{\Omega}_m \coloneqq \frac{1}{n}\sum_{i=1}^n \Big(I_i Y_i - \frac{1}{n}\sum_{r=1}^n I_r Y_r\Big)\Big(I_i Y_i - \frac{1}{n}\sum_{r=1}^n I_r Y_r\Big)^{\top},
\end{equation*}
\begin{equation*}
    \widehat{\Omega}_c\big(\hat\beta^{(0)}\big) \coloneqq \frac{1}{\ti n}\sum_{j=1}^{\ti n} \Big(\ti I_j \ti X_j^{\top}\hat\beta^{(0)} - \Big[\frac{1}{\ti n}\sum_{s=1}^{\ti n} \ti I_s \ti X_s^{\top}\Big]\hat\beta^{(0)}\Big)\Big(\ti I_j \ti X_j^{\top}\hat\beta^{(0)} - \Big[\frac{1}{\ti n}\sum_{s=1}^{\ti n} \ti I_s \ti X_s^{\top}\Big]\hat\beta^{(0)}\Big)^{\top}.
\end{equation*}
Set
\begin{equation}\label{eq:OmegaHat-explicit}
    \widehat{\Omega} \coloneqq \tau_n^{-1}\widehat{\Omega}_m + \ti\tau_n^{-1}\widehat{\Omega}_c\big(\hat\beta^{(0)}\big),\qquad \widehat W \coloneqq \widehat{\Omega}^{-1}.
\end{equation}
The asymptotic variance of the estimator is minimized by replacing $W_N$ with $\widehat W$ in \eqref{eq:betaGMM-explicit}.
Define
\begin{align*}
    V(W_0) &\coloneqq \Big(\cov(\ti I,\ti X)^{\top}W_0 \cov(\ti I,\ti X)\Big)^{-1} \cov(\ti I,\ti X)^{\top}W_0\Omega W_0 \\
    &\quad \cov(\ti I,\ti X)\Big(\cov(\ti I,\ti X)^{\top}W_0 \cov(\ti I,\ti X)\Big)^{-1}.
\end{align*}
\begin{assumption}\label{ass:asympDense}
Assume all of the following:
\begin{enumerate}[label=(\roman*), nosep]
    \item \Cref{ass:1}, the centering convention and with bounded fourth moments.
    \item \label{ass:2:1}$\mathrm{rank}\big(\cov( I, X)\big)=d$.
    \item \label{ass:2:3}$W_N \overset{p}{\to} W_0$ with $W_0$ positive definite.
\end{enumerate}   
\end{assumption}

\begin{proposition}[Consistency and asymptotic normality, dense case; formal version of \Cref{thm:denseAN-informal}]\label{thm:denseAN}
Under \Cref{ass:asympDense}, the estimator \eqref{eq:betaGMM-explicit} is consistent:
\begin{equation}
    \gmm(W_N) \overset{p}{\to} \beta^*.
\end{equation}
Then
\begin{equation}\label{eq:anDense-explicit}
    \sqrt{N}\big(\gmm(W_N)-\beta^*\big) \overset{d}{\to} \mathcal N\Big(0, V(W_0)\Big).
\end{equation}
With the choice $W_0=\Omega^{-1}$ the variance in \eqref{eq:anDense-explicit} is minimized in the usual GMM sense.
Let $\widehat V$ be the empirical version of $V(\Omega^{-1})$. Then
\begin{equation*}
    \sqrt{N} \widehat V^{-1/2}\big(\gmm(W_N)-\beta^*\big) \overset{d}{\to} \mathcal N\Big(0, \mathrm{Id}_d \Big).
\end{equation*}
\end{proposition}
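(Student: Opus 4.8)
The plan is to treat \eqref{eq:betaGMM-explicit} as a linear GMM problem and run the standard three-step argument: write the estimator in closed form, prove consistency via a law of large numbers and the continuous mapping theorem, then prove asymptotic normality by linearizing around $\beta^*$ and applying a central limit theorem to the sample moment together with Slutsky's lemma. Set $\widehat B \coloneqq \widehat{\cov}(\ti I,\ti X)$ and $\widehat c \coloneqq \widehat{\cov}(I,Y)$, so that $\hat g_N(\beta) = \widehat c - \widehat B\beta$ and, whenever $\widehat B^{\top} W_N \widehat B$ is invertible (which happens with probability tending to one),
\[
    \gmm(W_N) = \big(\widehat B^{\top} W_N \widehat B\big)^{-1}\widehat B^{\top} W_N \widehat c .
\]
Because of the centering convention, $\widehat B$ and $\widehat c$ differ from the uncentered averages $\tfrac1{\ti n}\sum_j \ti I_j\ti X_j^{\top}$, $\tfrac1n\sum_i I_iY_i$ only by outer products of sample means, which are $O_p(1/N)$ since all population means vanish; hence by the law of large numbers $\widehat B \overset{p}{\to} B \coloneqq \cov(\ti I,\ti X) = \cov(I,X)$ (the second equality is \Cref{ass:1}~\ref{ass:1:1}) and $\widehat c \overset{p}{\to} c \coloneqq \cov(I,Y)$. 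Since $\rank{B} = d$ by \Cref{ass:asympDense}~\ref{ass:2:1} and $W_0 \succ 0$, the matrix $B^{\top} W_0 B$ is invertible, so the continuous mapping theorem gives $\gmm(W_N) \overset{p}{\to} (B^{\top} W_0 B)^{-1} B^{\top} W_0 c$; by \Cref{eq:main}, $c = B\beta^*$, and substituting collapses the limit to $\beta^*$.

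For asymptotic normality, subtract $\beta^*$ and use $\widehat c - \widehat B\beta^* = \hat g_N(\beta^*)$ to get
\[
    \sqrt N\big(\gmm(W_N)-\beta^*\big) = \big(\widehat B^{\top} W_N \widehat B\big)^{-1}\widehat B^{\top} W_N\,\sqrt N\,\hat g_N(\beta^*),
\]
so the crux is the limit law of $\sqrt N\,\hat g_N(\beta^*)$. Using \Cref{eq:main} to cancel the population part and discarding the $O_p(1/N)=o_p(N^{-1/2})$ sample-mean corrections, one writes $\sqrt N\,\hat g_N(\beta^*) = A_n - B_n\beta^* + o_p(1)$, where $A_n \coloneqq \tau_n^{-1/2} n^{-1/2}\sum_{i=1}^n(I_iY_i-\E[IY])$ and $B_n \coloneqq \ti\tau_n^{-1/2}\ti n^{-1/2}\sum_{j=1}^{\ti n}(\ti I_j\ti X_j^{\top}-\E[\ti I\ti X^{\top}])$. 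The two terms are independent (the two samples are independent), each a normalized i.i.d.\ sum whose summand has a finite covariance matrix (bounded fourth moments give $\E\|IY\|^2<\infty$ and $\E\|\ti I\ti X^{\top}\beta^*\|^2<\infty$ via Cauchy--Schwarz), so the classical multivariate CLT together with $\tau_n\to\tau$, $\ti\tau_n\to\ti\tau$ yields joint convergence to independent Gaussians with covariances $\tau^{-1}\Omega_m$ and $\ti\tau^{-1}\Omega_c(\beta^*)$, hence $\sqrt N\,\hat g_N(\beta^*)\overset{d}{\to}\mathcal N(0,\Omega)$ with $\Omega=\tau^{-1}\Omega_m+\ti\tau^{-1}\Omega_c(\beta^*)$. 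Combining this with $\widehat B^{\top} W_N \widehat B\overset{p}{\to}B^{\top} W_0 B$, $\widehat B^{\top} W_N\overset{p}{\to}B^{\top} W_0$ and Slutsky's lemma gives \eqref{eq:anDense-explicit} with $V(W_0)=(B^{\top} W_0 B)^{-1}B^{\top} W_0\Omega W_0 B(B^{\top} W_0 B)^{-1}$.

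For the efficiency statement and the feasible confidence intervals, that $V(W_0)$ is minimized in the Loewner order at $W_0=\Omega^{-1}$, with minimal value $(B^{\top}\Omega^{-1}B)^{-1}$, is the classical GMM efficiency bound: one checks $V(W_0)-(B^{\top}\Omega^{-1}B)^{-1} = M\Omega M^{\top}\succeq 0$ for $M\coloneqq(B^{\top} W_0 B)^{-1}B^{\top} W_0 - (B^{\top}\Omega^{-1}B)^{-1}B^{\top}\Omega^{-1}$. For the studentized limit I would show $\widehat\Omega\overset{p}{\to}\Omega$: $\widehat\Omega_m\overset{p}{\to}\Omega_m$ by the law of large numbers, and $\widehat\Omega_c(\hat\beta^{(0)})\overset{p}{\to}\Omega_c(\beta^*)$ because a preliminary estimator $\hat\beta^{(0)}$ (e.g.\ $\gmm(\mathrm{Id}_m)$, consistent by the first part) satisfies $\hat\beta^{(0)}\overset{p}{\to}\beta^*$ and $\beta\mapsto\Omega_c(\beta)$ is continuous, so $\widehat V\overset{p}{\to}V(\Omega^{-1})\succ 0$; a final application of Slutsky then gives $\sqrt N\,\widehat V^{-1/2}(\gmm(W_N)-\beta^*)\overset{d}{\to}\mathcal N(0,\mathrm{Id}_d)$. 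The argument is essentially textbook linear GMM, so I expect the only genuinely delicate bookkeeping to be twofold: (a) verifying that the sample-mean centering inside $\widehat{\cov}$ contributes only $o_p(N^{-1/2})$, so it neither perturbs the CLT nor (since population means are zero) alters the asymptotic variance; and (b) correctly tracking the two-sample scalings $\tau_n,\ti\tau_n$ when assembling the two independent CLT contributions into $\Omega$. The efficiency inequality is standard but must be phrased in the Loewner order.
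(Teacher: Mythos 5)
Your proposal is correct and follows essentially the same route as the paper's proof: law of large numbers plus the rank condition for consistency, then linearization of the first-order condition (which for you is just the closed form), a two-sample CLT with the $\tau_n,\ti\tau_n$ scalings, and Slutsky. If anything, you are more complete than the paper's written proof, which stops after establishing \eqref{eq:anDense-explicit} and leaves the Loewner-order efficiency claim and the consistency of $\widehat V$ (hence the studentized limit) implicit, whereas you verify both, including the correct identity $V(W_0)-(B^{\top}\Omega^{-1}B)^{-1}=M\Omega M^{\top}\succeq 0$ and the $O_p(1/N)$ bound on the centering corrections.
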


\begin{proof}
    By the strong law, 
    $$\frac{1}{n}\sum_{i=1}^n I_iY_i \overset{p}{\to} \mathbb E[IY] = \cov(I,Y),\qquad \frac{1}{\ti n}\sum_{j=1}^{\ti n}\ti I_j\ti X_j^{\top} \overset{p}{\to} \mathbb E[\ti I \ti X^{\top}] = \cov(\ti I,\ti X).$$
    Therefore $\hat g_N(\beta) \overset{p}{\to} \cov(I,Y) - \cov(\ti I,\ti X)\beta$. The continuous mapping theorem and \Cref{ass:asympDense}~\ref{ass:2:1} imply that the population criterion $Q(\beta) = \big(\cov(I,Y) - \cov(\ti I,\ti X)\beta\big)^{\top}W_0\big(\cov(I,Y) - \cov(\ti I,\ti X)\beta\big)$ has the unique minimizer $\beta^*$. Uniform convergence of the quadratic sample criterion to $Q$ yields consistency of \eqref{eq:betaGMM-explicit}.
    
    For asymptotic normality, write the first–order condition
    \begin{equation*}
        \Big(\frac{1}{\ti n}\textstyle\sum_{j=1}^{\ti n} \ti X_j \ti I_j^{\top}\Big) W_N \hat g_N(\gmm(W_N)) = 0.
    \end{equation*}
    Add and subtract $\beta^*$ inside $\hat g_N$ and rearrange:
    \begin{equation*}
        \Big(\frac{1}{\ti n}\textstyle\sum_{j=1}^{\ti n} \ti X_j \ti I_j^{\top}\Big) W_N \Big(\frac{1}{\ti n}\textstyle\sum_{j=1}^{\ti n} \ti I_j \ti X_j^{\top}\Big)\big(\gmm(W_N)-\beta^*\big) = \Big(\frac{1}{\ti n}\textstyle\sum_{j=1}^{\ti n} \ti X_j \ti I_j^{\top}\Big) W_N \hat g_N(\beta^*).
    \end{equation*}
    By Slutsky, the left matrix converges in probability to $\cov(\ti I,\ti X)^{\top}W_0\cov(\ti I,\ti X)$, which is invertible by \Cref{ass:asympDense}\ref{ass:2:1}. Multiplying both sides by its inverse and by $\sqrt{N}$ yields
    \begin{equation}\label{eq:ANstep}
        \sqrt{N}\big(\gmm(W_N)-\beta^*\big) = \Big(\cov(\ti I,\ti X)^{\top}W_0\cov(\ti I,\ti X)\Big)^{-1}\cov(\ti I,\ti X)^{\top}W_0 \sqrt{N} \hat g_N(\beta^*) + o_{p}(1).
    \end{equation}
    Finally, apply a joint CLT to the two independent samples:
    \begin{equation}
    \label{eq:CLT2}
        \begin{aligned}
        \sqrt{N} \hat g_N(\beta^*) &= \sqrt{N}\Big(\frac{1}{n}\sum_{i=1}^n(I_iY_i-\mathbb E[IY])\Big) - \sqrt{N}\Big(\frac{1}{\ti n}\sum_{j=1}^{\ti n}(\ti I_j\ti X_j^{\top}-\mathbb E[\ti I\ti X^{\top}])\Big)\beta^* \\
        &\overset{d}{\to} \mathcal N\big(0,\tau^{-1}\Omega_m + \ti\tau^{-1}\Omega_c(\beta^*)\big),
        \end{aligned}
    \end{equation}
    because $\sqrt{N/n}\to \tau^{-1/2}$ and $\sqrt{N/\ti n}\to \ti\tau^{-1/2}$. Plug \eqref{eq:CLT2} into \eqref{eq:ANstep} to obtain \eqref{eq:anDense-explicit}.
\end{proof}

\section{Proofs}

\subsection[Proof of Theorem~\ref{thm:iden_dense}]
        {Proof of \Cref{thm:iden_dense}}
\label{sec:p_iden_dense}

    If $\mathrm{rank}(\cov(I, X)) = d$, then $\cov(I, X)$ (and therefore $\cov(\ti I, \ti X)$) has a left inverse and there exists a unique solution $\hat \beta = \beta^*$ to
    \begin{equation*}
        \cov(I, Y) =  \cov(\ti I, \ti X) \beta
    \end{equation*}
    given by
    \begin{equation*}
        \hat \beta = \left( \cov(\ti I, \ti X)^{\top} \cov(\ti I, \ti X) \right)^{-1} \cov(\ti I, \ti X)^{\top}\cov(I, Y) = \beta^*.
    \end{equation*}

    If 
    $$
    \operatorname{rank}\left(\operatorname{cov}(\ti I,\ti X)\right) = \operatorname{rank}\left(\operatorname{cov}( I,X)\right) < d,
    $$
    then 
    $$
    \ker\left(\operatorname{cov}(\ti I,\ti X)\right) \neq \{0\},
    $$
    and for any $h$ in the kernel, $\beta^* + h$ yields the same moment, so $\beta^*$ is not identifiable.

\subsection[Proof of Theorem~\ref{thm:sparse_ident}]{Proof of \Cref{thm:sparse_ident}}
\textbf{(i) $\Rightarrow$ (ii).}
Suppose (ii) fails. Then there exists $0\neq h\in \ker(\cov(I, X))\cap\Sigma_{2s^*}$.
Partition $\operatorname{supp}(h)$ into disjoint $J_1,J_2$ with $|J_i|\leq s^*$ and define
$u,v\in\Sigma_{s^*}$ by $u_{J_1}=h_{J_1}$, $v_{J_2}=-h_{J_2}$, zeros elsewhere.
Then $\cov(I, X)u=\cov(I, X)v$ and $\|u\|_0,\|v\|_0\leq s^*$, but $u\neq v$, contradicting (i) for $\beta^* = v$.
Hence, (ii) must hold.

\textbf{(ii) $\Rightarrow$ (i)}
If $\cov(I, X)\widehat\beta = \cov(I, X)\beta^*$ and $\|\widehat\beta\|_0\leq \|\beta^*\|_0 \leq s^*$, then
$h\coloneqq\widehat\beta-\beta^*\in \ker(\cov(I, X))\cap \Sigma_{2s^*}$; by (ii), $h=0$, so $\widehat\beta=\beta^*$.

\subsection[f]{Proof of \Cref{thm:iden_dense_hd}}

If $\mathrm{rank}(Q) = d$, then there exists a unique solution $\hat \beta = \beta^*$ to
\begin{equation*}
    Q_y =  Q \beta
\end{equation*}
given by
\begin{equation*}
    \hat \beta = Q^{-1} Q_y = \beta^*.
\end{equation*}

If 
$
    \operatorname{rank}\left(Q\right) < d,
$
then 
$
    \ker\left(Q\right) \neq \{0\},
$
and for any $h$ in the kernel, $\beta^* + h$ yields the same moment, so $\beta^*$ is not identifiable.

\subsection[d]{Proof of \Cref{thm:sparse_ident_hd}}
\textbf{(i) $\Rightarrow$ (ii).}
Suppose (ii) fails. Then there exists $0\neq h\in \ker(Q)\cap\Sigma_{2s^*}$.
Partition $\operatorname{supp}(h)$ into disjoint $J_1,J_2$ with $|J_i|\leq s^*$ and define
$u,v\in\Sigma_{s^*}$ by $u_{J_1}=h_{J_1}$, $v_{J_2}=-h_{J_2}$, zeros elsewhere.
Then $Qu=Qv$ and $\|u\|_0,\|v\|_0\leq s^*$, but $u\neq v$, contradicting (i) for $\beta^* = v$.
Hence, (ii) must hold.

\textbf{(ii) $\Rightarrow$ (i)}
If $Q\widehat\beta =Q\beta^*$ and $\|\widehat\beta\|_0\leq \|\beta^*\|_0 \leq s^*$, then
$h\coloneqq\widehat\beta-\beta^*\in \ker(Q)\cap \Sigma_{2s^*}$; by (ii), $h=0$, so $\widehat\beta=\beta^*$.

\subsection[Proof of Theorem~\ref{thm:ratesSparse}]{Proof of \Cref{thm:ratesSparse}}
\label{sec:ratesSparse}

\begin{lemma}
\label{lem:scorebound-finite-m}
    Let $\hat g_N(\beta) \coloneqq \frac{1}{n}\sum_{i=1}^n I_iY_i - \big(\frac{1}{\ti n}\sum_{j=1}^{\ti n}\ti I_j\ti X_j^\top\big)\beta$
    and $\widehat{\cov}(\ti I, \ti X) \coloneqq \frac{1}{\ti n}\sum_{j=1}^{\ti n}\ti I_j\ti X_j^\top$.
    Under Assumption~\ref{ass:RE},
    \begin{equation*}
        \big\|\widehat{\cov}(\ti I, \ti X)^\top W_N \hat g_N(\beta^*)\big\|_\infty = O_p(N^{-1/2}).
    \end{equation*}
\end{lemma}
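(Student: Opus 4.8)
The plan is to decompose $\widehat{\cov}(\ti I, \ti X)^\top W_N \hat g_N(\beta^*)$ into a product of three factors and control each one separately using the moment conditions in \Cref{eq:main} together with a coordinatewise central limit theorem. First I would note that, by \Cref{eq:main} (using \Cref{ass:1}), the population analogue vanishes: $\cov(\ti I,\ti X)^\top W_0\big(\cov(I,Y)-\cov(\ti I,\ti X)\beta^*\big) = \cov(\ti I,\ti X)^\top W_0 \cdot 0 = 0$. So the whole quantity is a centered statistic, and the task is to establish the $O_p(N^{-1/2})$ rate. Write $\widehat{\cov}(\ti I,\ti X) = \cov(\ti I,\ti X) + E_N$ and $W_N = W_0 + o_p(1)$ and $\hat g_N(\beta^*) = \hat g_N(\beta^*) - 0$, and expand the product. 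The leading term is $\cov(\ti I,\ti X)^\top W_0\,\hat g_N(\beta^*)$; since $\cov(\ti I,\ti X)^\top W_0$ is a fixed $d\times m$ matrix, its $\ell_\infty$ norm is controlled by $\|\hat g_N(\beta^*)\|$ in a suitable norm, and $\sqrt N\,\hat g_N(\beta^*)$ is asymptotically normal by the joint CLT for the two independent samples (as in \Cref{eq:CLT2} of the dense proof), giving $O_p(N^{-1/2})$.

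The remaining cross terms all involve at least one of the mean-zero fluctuations $E_N = \widehat{\cov}(\ti I,\ti X) - \cov(\ti I,\ti X)$, $W_N - W_0$, or $\hat g_N(\beta^*) - \E[\hat g_N(\beta^*)]$, each of which is $O_p(N^{-1/2})$ entrywise under the bounded-fourth-moment assumption in \Cref{ass:RE}~\ref{ass:RE:0}; a product of two such fluctuations is $O_p(N^{-1})$, hence negligible. The key point making this work in the \emph{finite-dimensional instrument} regime is that $m$ and $d$ are fixed, so all matrix norms are equivalent up to constants and the number of coordinates over which we take the maximum does not grow — there is no union bound penalty, and standard Markov/Chebyshev arguments (or the CLT) suffice for each fixed entry. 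Concretely, for each $(j)\in[d]$, the $j$-th coordinate of $\widehat{\cov}(\ti I,\ti X)^\top W_N \hat g_N(\beta^*)$ is a smooth function of finitely many sample averages each converging at rate $N^{-1/2}$, so the delta method gives the claim coordinatewise, and taking the max over $d$ fixed coordinates preserves the rate.

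The main obstacle is bookkeeping rather than anything deep: one must verify that $E_N$ and $\hat g_N(\beta^*)$ are genuinely $O_p(N^{-1/2})$ in the relevant norms, which requires the finite fourth moments (for $E_N$, since it is a centered average of the products $\ti I_j \ti X_j^\top$ whose second moments exist under bounded fourth moments of $(\ti I,\ti X)$) and the exclusion restriction \Cref{ass:1}~\ref{ass:1:2} together with \Cref{ass:1}~\ref{ass:1:1} to ensure $\E[\hat g_N(\beta^*)] = \cov(I,Y) - \cov(\ti I,\ti X)\beta^* = 0$ exactly. One subtlety is the centering convention: since we subtract sample means, $\widehat{\cov}$ and $\hat g_N$ are $U$-statistic-like rather than plain i.i.d.\ averages, but the correction terms are $O_p(N^{-1})$ and do not affect the rate. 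Assembling the pieces — leading CLT term plus negligible cross terms plus negligible centering corrections — yields $\big\|\widehat{\cov}(\ti I,\ti X)^\top W_N \hat g_N(\beta^*)\big\|_\infty = O_p(N^{-1/2})$, as claimed.
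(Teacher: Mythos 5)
Your proposal is correct and rests on the same two ingredients as the paper's proof: the moment condition makes $\hat g_N(\beta^*)$ a centered average of fixed dimension, hence $\|\hat g_N(\beta^*)\|_2=O_p(N^{-1/2})$ by Chebyshev/CLT, and the prefactor $\widehat{\cov}(\ti I,\ti X)^{\top}W_N$ is $O_p(1)$ in operator norm, so the paper simply multiplies the bounds rather than expanding into a leading term plus cross terms as you do. The only quibble is your claim that $W_N-W_0$ is $O_p(N^{-1/2})$ entrywise: \Cref{ass:RE}~\ref{ass:RE:3} only gives $W_N\overset{p}{\to}W_0$ with no rate, but this is harmless since every term in your expansion already carries the factor $\hat g_N(\beta^*)=O_p(N^{-1/2})$ and the remaining factors need only be $O_p(1)$.
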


\begin{proof}
    By the moment condition $\cov(I,Y)=\cov(\ti I,\ti X)\beta^*$, we have 
    \begin{equation*}
        \hat g_N(\beta^*)
        = \widehat{\cov}(I, Y)- \widehat{\cov}(\ti I, \ti X)\beta^*
        = (\widehat{\cov}(I, Y)-\cov(I, Y)) - (\widehat{\cov}(\ti I, \ti X)-\cov(I, X))\beta^*.
    \end{equation*}
    Since $m$ is fixed and fourth moments are bounded, Chebyshev (or a multivariate CLT) yields
    \begin{equation*}
        \|\widehat{\cov}(I, Y)-\cov(I, Y)\|_2 = O_p(n^{-1/2}),
        \qquad
        \|\widehat{\cov}(\ti I, \ti X)-\cov(I, X)\|_{\mathrm{op}} = O_p(\ti n^{-1/2}).
    \end{equation*}
    Using $\|\beta^*\|_2=O(1)$ and $\tau_n,\ti\tau_n\to(0,1)$ gives
    \begin{equation*}
        \|\hat g_N(\beta^*)\|_2
        \le \|\widehat{\cov}(I, Y)-\cov(I, Y)\|_2 + \|\widehat{\cov}(\ti I, \ti X)-\cov(I, X)\|_{\mathrm{op}}\|\beta^*\|_2
        = O_p(n^{-1/2}+\ti n^{-1/2})
        = O_p(N^{-1/2}).
    \end{equation*}
    Moreover, by the law of large numbers in fixed dimensions, $\|\widehat{\cov}(\ti I, \ti X)\|_{\mathrm{op}}=O_p(1)$, and by $W_N\overset{p}{\to}W_0\succ 0$ we have $\|W_N\|_{\mathrm{op}}=O_p(1)$. Therefore,
    \begin{equation*}
        \|\widehat{\cov}(\ti I, \ti X)^\top W_N \hat g_N(\beta^*)\|_\infty
        \le \|\widehat{\cov}(\ti I, \ti X)^\top W_N \hat g_N(\beta^*)\|_2
        \le \|\widehat{\cov}(\ti I, \ti X)\|_{\mathrm{op}}\|W_N\|_{\mathrm{op}}\|\hat g_N(\beta^*)\|_2
        = O_p(N^{-1/2}).
    \end{equation*}
\end{proof}

Let 
$$
    S\coloneqq\operatorname{supp}(\beta^*),\qquad s^* \coloneqq |S|.
$$
The optimality of \eqref{eq:LassoGMM-explicit} yields the basic inequality
\begin{equation}\label{eq:basic-ineq}
    \frac{1}{2}\Big\|W_N^{1/2}\hat g_N(\ivs)\Big\|_2^2 + \lambda_N \|\ivs\|_1 \le \frac{1}{2}\Big\|W_N^{1/2}\hat g_N(\beta^*)\Big\|_2^2 + \lambda_N \|\beta^*\|_1.
\end{equation}
Using the linearization $$\hat g_N(\ivs) = \hat g_N(\beta^*) - \Big(\frac{1}{\ti n}\sum_{j=1}^{\ti n} \ti I_j \ti X_j^{\top}\Big)\big(\ivs-\beta^*\big)$$ and expanding \eqref{eq:basic-ineq}, one obtains
\begin{align}\label{eq:cone-step}
    \big(\ivs-\beta^*\big)^{\top}&\Big(\frac{1}{\ti n}\sum_{j=1}^{\ti n} \ti X_j \ti I_j^{\top}\Big)W_N \Big(\frac{1}{\ti n}\sum_{j=1}^{\ti n} \ti I_j \ti X_j^{\top}\Big)\big(\ivs-\beta^*\big) \\
    &\le 2\lambda_N \big(\|\beta^*\|_1-\|\ivs\|_1\big) + 2\big\|\frac{1}{\ti n}\sum_{j=1}^{\ti n} \ti X_j \ti I_j^{\top}W_N \hat g_N(\beta^*)\big\|_{\infty}\|\ivs-\beta^*\|_1.
\end{align}
By \Cref{lem:scorebound-finite-m}
\begin{equation}\label{eq:scorebound}
    \big\|\frac{1}{\ti n}\sum_{j=1}^{\ti n} \ti X_j \ti I_j^{\top}W_N \hat g_N(\beta^*)\big\|_{\infty} = O_{p}\Big(\sqrt{\frac{1}{N}}\Big).
\end{equation}
Choose the constant in $\lambda_N$ so that the right side of \eqref{eq:scorebound} is bounded by $$\frac{\lambda_N}{2}$$ with high probability. Then the usual cone condition 
$$
    \|\Delta_{S^c}\|_1 \le 3\|\Delta_S\|_1
$$
holds for 
$$
\Delta\coloneqq\ivs-\beta^*.
$$
By \Cref{ass:RE}~\ref{ass:RE:0}, \Cref{ass:RE}~\ref{ass:RE:3} and Slutsky's theorem we have
$$
    \widehat{\cov}(\ti I,\ti X)^{\top}W_N \widehat{\cov}(\ti I,\ti X) \overset{p}{\to} \cov(\ti I,\ti X)^{\top}W_0 \cov(\ti I,\ti X)
$$
(point-wise and therefore also in operator norm).
Therefore, by \Cref{ass:RE}\ref{ass:RE:1}, we have, for large enough $N$ on the cone $\|\Delta_{S^c}\|_1 \le 3\|\Delta_S\|_1$, that
\begin{equation}
\begin{aligned}
    \kappa \|\Delta\|_2^2 / 2 
    &\leq \Delta^{\top}\Big(\frac{1}{\ti n}\sum_{j=1}^{\ti n} \ti X_j \ti I_j^{\top}\Big)W_N \Big(\frac{1}{\ti n}\sum_{j=1}^{\ti n} \ti I_j \ti X_j^{\top}\Big)\Delta \\
    &\leq 3\lambda_N \|\Delta\|_1 \\
    &\leq 3\lambda_N \sqrt{s^*}\|\Delta\|_2,
\end{aligned}
\end{equation}
which gives
$$
    \|\Delta\|_2 \in O_{p}\Big(\sqrt{\frac{s^*}{N}}\Big)
$$
and then
$$
    \|\Delta\|_1 \in O_{p}\Big(s^*\sqrt{\frac{1}{N}}\Big).
$$

\subsection[Proof of Theorem~\ref{thm:oracle-CI}]{Proof of \Cref{thm:oracle-CI}}
\label{sec:oracle-CI}

By \Cref{thm:ratesSparse} we have $\mathbb{P}(\hat S(W_N) = S^*) \to 1$. On the event $\{\widehat S=S^*\}$, the refit equals the oracle GMM estimator on $S^*$. Standard GMM theory (see \Cref{sec:dense} and note that \Cref{ass:RE}~\ref{ass:RE:1} implies \Cref{ass:asympDense}~\ref{ass:2:1} on $S^*$, i.e.\ ,we have $\rank{\cov(I, X)_{S^*}} = s^*$) yields
$\sqrt{N}(\ivsf_{S^*}(W'_N)-\beta^*_{S^*})\overset{d}{\to} \mathcal N(0,V_{S^*})$ and consistency of the sandwich
$\widehat V_{\widehat S}$. Slutsky’s lemma and $\mathbb{P}(\widehat S=S^*)\to1$ transfer the oracle limit to the random-support estimator.

\subsection{Proof for the Closed Form Solution for Averaging Splits}
\label{sec:proof_splits}
    
    Write $\widehat{\cov}_A(\ti I,\ti X)=\frac{2}{n}\sum_{i\in A} g_i$ and $\widehat{\cov}_B(\ti I,\ti X)=\frac{2}{n}\sum_{j\in B} g_j$, hence
    \begin{equation*}
    \widehat{\cov}_A(\ti I,\ti X)^{\top}\widehat{\cov}_B(\ti I,\ti X)
    =
    \frac{4}{n^2}\sum_{i\in A}\sum_{j\in B} g_i^{\top}g_j.
    \end{equation*}
    For a uniform random split, each ordered pair $(i,j)$ with $i\neq j$ lands in $(A,B)$ with probability
    \begin{equation*}
    \mathbb{P}(i\in A,  j\in B)=\frac{n/2}{n}\cdot\frac{n/2}{n-1}=\frac{n}{4(n-1)}.
    \end{equation*}
    Taking conditional expectation over the split therefore gives
    \begin{equation*}
    \mathbb{E}\big[\widehat{\cov}_A(\ti I,\ti X)^{\top}\widehat{\cov}_B(\ti I,\ti X)\mid (g_i)_{i=1}^n\big]
    =
    \frac{4}{n^2}\cdot\frac{n}{4(n-1)}\sum_{i\neq j} g_i^{\top}g_j
    =
    \frac{1}{n(n-1)}\sum_{i\neq j} g_i^{\top}g_j,
    \end{equation*}
    and rewriting $\sum_{i\neq j} g_i^{\top}g_j=(\sum_i g_i)^{\top}(\sum_j g_j)-\sum_i g_i^{\top}g_i$ yields the displayed closed form.

\subsection[ds]{Proof of \Cref{lem:inconst}}
\label{sec:proof_inconst}
Write
\begin{equation*}
    c\coloneqq \cov(I,X)\in\mathbb R^{m\times d},
    \qquad
    s\coloneqq \cov(I,Y)\in\mathbb R^{m},
    \qquad
    \bar c\coloneqq \sqrt m c,
    \qquad
    \bar s\coloneqq \sqrt m s .
\end{equation*}
Define
\begin{equation*}
    \hat c_k \coloneqq \sqrt m \widehat{\cov}_k(\ti I,\ti X),
    \qquad
    E_k\coloneqq \hat c_k-\bar c
    = \sqrt m\Big(\widehat{\cov}_k(\ti I,\ti X)-c\Big),
    \qquad k\in[K].
\end{equation*}
Define
\begin{equation*}
    \hat c \coloneqq \sqrt m \widehat{\cov}(\ti I,\ti X),
    \qquad
    E\coloneqq \hat c-\bar c
    = \sqrt m\Big(\widehat{\cov}(\ti I,\ti X)-c\Big).
\end{equation*}
For the $(I,Y)$ sample define
\begin{equation*}
    \hat s \coloneqq \sqrt m \widehat{\cov}(I,Y),
    \qquad
    H\coloneqq \hat s-\bar s
= \sqrt m\Big(\widehat{\cov}(I,Y)-s\Big).
\end{equation*}

We have
\begin{align*} 
    \hat \beta &\coloneqq \frac{\widehat{\cov}(\ti I, \ti X)^{\top} \widehat{\cov}(I, Y)}{\widehat{\cov}(\ti I, \ti X)^{\top} \widehat{\cov}(\ti I, \ti X)}  \nonumber \\
    &= \frac{
    \bar c^\top \bar s
    +
    \bar c^\top H
    +
    E^\top \bar s
    +
    E^\top H}
    {\bar c^\top \bar c
    +
    \bar c^\top E
    +
    E^\top \bar c
    +
    E^\top E}.
\end{align*}

By \Cref{lem:scalar-rates-scaled} we have that $\bar c^\top H
    +
    E^\top \bar s
    +
    E^\top H = o_p(1)$ and that
$\bar c^\top E
    +
    E^\top \bar c = o_p(1)$
and $E^\top E = b / \ti r + o_p(1)$.
Furthermore, it holds that $\bar s = \bar c \beta^*$. Therefore, $\hat{\beta} \to \beta^* \frac{Q}{Q + b / \ti r}$ in probability.

\subsection[5]{Proof of \Cref{thm:cf-gmm-weak}}
\label{sec:cf-gmm-weak}
Write
\begin{equation*}
    c\coloneqq \cov(I,X)\in\mathbb R^{m\times d},
    \qquad
    s\coloneqq \cov(I,Y)\in\mathbb R^{m},
    \qquad
    \bar c\coloneqq \sqrt m c,
    \qquad
    \bar s\coloneqq \sqrt m s .
\end{equation*}
Define
\begin{equation*}
    \hat c_k \coloneqq \sqrt m \widehat{\cov}_k(\ti I,\ti X),
    \qquad
    E_k\coloneqq \hat c_k-\bar c
    = \sqrt m\Big(\widehat{\cov}_k(\ti I,\ti X)-c\Big),
    \qquad k\in[K].
\end{equation*}
Define
\begin{equation*}
    \hat c \coloneqq \sqrt m \widehat{\cov}(\ti I,\ti X),
    \qquad
    E\coloneqq \hat c-\bar c
    = \sqrt m\Big(\widehat{\cov}(\ti I,\ti X)-c\Big).
\end{equation*}
For the $(I,Y)$ sample define
\begin{equation*}
    \hat s \coloneqq \sqrt m \widehat{\cov}(I,Y),
    \qquad
    H\coloneqq \hat s-\bar s
= \sqrt m\Big(\widehat{\cov}(I,Y)-s\Big).
\end{equation*}

\begin{lemma}
\label{lem:scalar-rates-scaled}
    Assume \Cref{ass:hd-weak}. Let $d\in\mathbb{N}$ and $K\ge 2$ be fixed, and $n/m\to r \in (0, \infty)$, $\ti n/m\to\ti r \in (0, \infty)$.
    Then as $m\to\infty$:
    \begin{enumerate}[label=(\roman*), nosep]
        \item For each fixed $k\in[K]$,
        \begin{equation*}
            \|\bar c^\top E_k\|_{\mathrm{op}} = O_p(m^{-1/2}),
            \qquad
            \|\bar c^\top E\|_{\mathrm{op}} = O_p(m^{-1/2}).
        \end{equation*}
        \item For $h\ne k$,
        \begin{equation*}
            \|E_h^\top E_k\|_{\mathrm{op}}=O_p(m^{-1/2}).
        \end{equation*}
        \item Cross-sample terms vanish:
        \begin{equation*}
            \|E_k^\top H\|_2=O_p(m^{-1/2}),
            \qquad
            \|E^\top H\|_2=O_p(m^{-1/2}),
            \qquad
            \|\bar c^\top H\|_2=O_p(m^{-1/2}).
        \end{equation*}
        \item If additionally $d=1$, $\tr(\Sigma_{IX})\to b\in(0,\infty)$ and $\sup_m \mathbb{E} [\|IX - c\|_2^4] \leq \infty$, then
        \begin{equation*}
            \|E\|_2^2 \overset{p}{\to} \frac{b}{\ti r}.
        \end{equation*}
    \end{enumerate}
\end{lemma}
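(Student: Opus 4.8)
The plan is to write each quantity in the lemma as a normalized sum of i.i.d.\ mean--zero summands over the appropriate sample, bound its second moment using the operator--norm controls of \Cref{ass:hd-weak}~\ref{ass:hd-weak:3} together with the identity $\|\bar c\|_F^2=\tr(\bar c^{\top}\bar c)\to\tr(Q)$, and then apply Markov's inequality; part~(iv) additionally needs a variance computation and Chebyshev. Set $\xi_j\coloneqq \tilde I_j\tilde X_j^{\top}-c$ and $\eta_i\coloneqq I_iY_i-s$, so that (up to the sample--mean centering correction, which is $O_p(m^{-1/2})$ in Frobenius norm --- of the same order as, or smaller than, every bound below --- and $o_p(1)$ inside $\|E\|_2^2$, and which we therefore suppress) $E_k=\tfrac{\sqrt m}{n_k}\sum_{j\in F_k}\xi_j$, $E=\tfrac{\sqrt m}{\tilde n}\sum_j\xi_j$, $H=\tfrac{\sqrt m}{n}\sum_i\eta_i$, with $\mathbb{E}[\xi_j]=\mathbb{E}[\eta_i]=0$ and $n_k\asymp\tilde n\asymp n\asymp m$. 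Two independence facts are used repeatedly: $E_h\independent E_k$ for $h\ne k$ (disjoint folds), and $E,E_k\independent H$ (the two samples are independent). The quantitative inputs are $\|\Sigma_{IX}\|_{\mathrm{op}},\|\Sigma_{IY}\|_{\mathrm{op}}=O(1/m)$, hence $\tr(\Sigma_{IX})\le md\,\|\Sigma_{IX}\|_{\mathrm{op}}=O(1)$ and $\mathbb{E}\|\xi\|_F^2=\tr(\Sigma_{IX})=O(1)$, so $\mathbb{E}\|E_k\|_F^2=\tfrac{m}{n_k}\tr(\Sigma_{IX})=O(1)$; and $\|c\|_F^2=\tfrac1m\tr(\bar c^{\top}\bar c)=\tfrac1m\tr(Q)+o(\tfrac1m)=O(1/m)$. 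Finally, writing the columns of $\xi$ as $\xi^{(1)},\dots,\xi^{(d)}\in\mathbb{R}^m$, the matrix $M\coloneqq\mathbb{E}[\xi\xi^{\top}]=\sum_{\ell=1}^d\mathbb{E}[\xi^{(\ell)}\xi^{(\ell)\top}]$ is a sum of $d$ principal $m\times m$ blocks of $\Sigma_{IX}$, so $\|M\|_{\mathrm{op}}\le d\,\|\Sigma_{IX}\|_{\mathrm{op}}=O(1/m)$; this last point is what prevents losing a factor $m$.

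Parts~(i)--(iii) follow one uniform template. For (i), $\bar c^{\top}E_k=\tfrac m{n_k}\sum_{j\in F_k}c^{\top}\xi_j$ is a sum of i.i.d.\ mean--zero matrices, so $\mathbb{E}\|\bar c^{\top}E_k\|_F^2=\tfrac{m^2}{n_k}\mathbb{E}\|c^{\top}\xi\|_F^2$; vectorizing gives $\mathbb{E}\|c^{\top}\xi\|_F^2=\tr\!\big(\Sigma_{IX}(I_d\otimes cc^{\top})\big)\le\|\Sigma_{IX}\|_{\mathrm{op}}\,d\,\|c\|_F^2=O(1/m^2)$, so $\mathbb{E}\|\bar c^{\top}E_k\|_F^2=O(1/m)$, and the same for $E$; Markov plus $\|\cdot\|_{\mathrm{op}}\le\|\cdot\|_F$ finishes it. For (ii), conditioning on $E_h$, $\mathbb{E}\big[\|E_h^{\top}E_k\|_F^2\mid E_h\big]=\tr\!\big(E_h^{\top}\mathbb{E}[E_kE_k^{\top}]E_h\big)\le\|\mathbb{E}[E_kE_k^{\top}]\|_{\mathrm{op}}\|E_h\|_F^2$, where $\mathbb{E}[E_kE_k^{\top}]=\tfrac m{n_k}M$ has operator norm $O(1/m)$; taking expectations over $E_h$ and using $\mathbb{E}\|E_h\|_F^2=O(1)$ yields $\mathbb{E}\|E_h^{\top}E_k\|_F^2=O(1/m)$. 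Part~(iii) is identical with $H$ replacing the second $E_k$: $\mathbb{E}[HH^{\top}]=\tfrac m n\Sigma_{IY}$ has operator norm $O(1/m)$, so conditioning on $E_k$ (resp.\ bounding $\|\bar c\|_F^2=\tr(Q)+o(1)=O(1)$ directly) gives $\mathbb{E}\|E_k^{\top}H\|_2^2,\ \mathbb{E}\|E^{\top}H\|_2^2,\ \mathbb{E}\|\bar c^{\top}H\|_2^2=O(1/m)$, and Markov concludes.

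Part~(iv) is the only convergence--in--probability claim and needs a variance bound. Here $d=1$, $\xi_j=\tilde I_j\tilde X_j-c\in\mathbb{R}^m$, and first $\mathbb{E}\|E\|_2^2=\tfrac m{\tilde n}\tr(\Sigma_{IX})\to b/\tilde r$ by $\tr(\Sigma_{IX})\to b$ and $m/\tilde n\to1/\tilde r$. Decompose $\|E\|_2^2=\tfrac m{\tilde n^2}\big(\sum_j\|\xi_j\|_2^2+\sum_{j\ne j'}\xi_j^{\top}\xi_{j'}\big)$; the two parts are uncorrelated (by independence of the $\xi_j$ and the mean--zero structure of the off--diagonal term), so their variances add. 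The diagonal part has variance $\le\tfrac{m^2}{\tilde n^3}\mathbb{E}\|\xi\|_2^4=O(1/m)$, where boundedness of $\mathbb{E}\|\xi\|_2^4$ is exactly the assumed uniform fourth--moment bound. For the off--diagonal (degenerate, mean--zero) part, independence gives $\mathbb{E}\big[(\sum_{j\ne j'}\xi_j^{\top}\xi_{j'})^2\big]=2\tilde n(\tilde n-1)\tr(\Sigma_{IX}^2)$, and $\tr(\Sigma_{IX}^2)\le\|\Sigma_{IX}\|_{\mathrm{op}}\tr(\Sigma_{IX})=O(1/m)$, so its contribution to $\var(\|E\|_2^2)$ is $O\!\big(\tfrac{m^2}{\tilde n^4}\cdot\tilde n^2\cdot\tfrac1m\big)=O(1/m)$. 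Chebyshev then gives $\|E\|_2^2\overset{p}{\to}b/\tilde r$.

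I expect the main obstacle not to be any single inequality but two bookkeeping points. The first is threading the $O(1/m)$ operator--norm scaling correctly through each conditional second--moment computation --- in particular the identity $\mathbb{E}[E_kE_k^{\top}]=\tfrac m{n_k}\sum_\ell(\text{diagonal block of }\Sigma_{IX})$ and the bound $\|\mathbb{E}[E_kE_k^{\top}]\|_{\mathrm{op}}\le\tfrac m{n_k}d\,\|\Sigma_{IX}\|_{\mathrm{op}}$: replacing this by the trivial trace bound would cost a factor $m$ and destroy the rates. The second is the variance of the degenerate $U$--statistic in (iv), the only place where the fourth--moment hypothesis and the operator--norm bound must be combined rather than invoked separately. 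Beyond this, one should verify once that the sample--mean centering correction is $O_p(m^{-1/2})$ in the relevant norms and $o_p(1)$ inside $\|E\|_2^2$, after which it can be dropped.
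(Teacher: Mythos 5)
Your proposal is correct and follows essentially the same route as the paper's proof: the same error decomposition into $E_k$, $E$, $H$ with $\operatorname{Var}(\mathrm{vec}(E_k))=\tfrac{m}{\ti n_k}\Sigma_{IX}$, the same use of the $O(1/m)$ operator-norm bounds, and the identical $A_m+B_m$ (diagonal plus degenerate U-statistic) split with $\operatorname{Var}(S)=2\ti n(\ti n-1)\tr(\Sigma_{IX}^2)$ in part (iv). The only difference is presentational --- you bound expected squared Frobenius norms via conditioning and $\tr(AB)\le\|A\|_{\mathrm{op}}\tr(B)$, whereas the paper fixes unit vectors $u,v$ and bounds variances of the scalar forms $v^{\top}\bar c^{\top}E_k u$ and $(E_h u)^{\top}(E_k v)$ --- which is equivalent since $d$ is fixed; your explicit handling of the sample-mean centering correction is a point the paper glosses over.
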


\begin{proof}
    Throughout, $d$ and $K$ are fixed and $\ti n_k=\ti n/K\asymp m$.
    By construction, $\widehat{\cov}_k(\ti I,\ti X)$ is the average of $\ti n_k$ i.i.d.\ copies of $\ti I \ti X$, hence
    \begin{equation*}
        \mathbb E[E_k]=0,
        \qquad
        \var(\mathrm{vec}(E_k))=\frac{m}{\ti n_k}\Sigma_{IX}.
    \end{equation*}
    Similarly,
    \begin{equation*}
        \mathbb E[E]=0,
        \qquad
        \var(\mathrm{vec}(E))=\frac{m}{\ti n}\Sigma_{IX}.
    \end{equation*}
    Moreover, since $\widehat{\cov}(I,Y)$ averages $n$ i.i.d.\ copies of $IY$,
    \begin{equation*}
        \mathbb E[H]=0,
        \qquad
        \var(H)=\frac{m}{n}\Sigma_{IY}.
    \end{equation*}
    Finally, under \Cref{ass:hd-weak}~\ref{ass:hd-weak:23} we have $s=c\beta^*$ and thus $\bar s=\bar c \beta^*$.
    
    \paragraph{(i).}
    Fix unit vectors $u,v\in\mathbb R^d$. Then
    \begin{equation*}
        v^\top \bar c^\top E_k u
        =
        (u\otimes \bar c v)^\top \mathrm{vec}(E_k),
    \end{equation*}
    and therefore
    \begin{align*}
        \var\big(v^\top \bar c^\top E_k u\big)
        &=
        (u\otimes \bar c v)^\top \var(\mathrm{vec}(E_k))(u\otimes \bar c v)\\
        &\le
        \|\var(\mathrm{vec}(E_k))\|_{\mathrm{op}}\|u\otimes \bar c v\|_2^2\\
        &=
        \frac{m}{\ti n_k}\|\Sigma_{IX}\|_{\mathrm{op}}\|\bar c v\|_2^2\\
        &\le
        \frac{m}{\ti n_k}\frac{C}{m}\|\bar c\|_{\mathrm{op}}^2
        =
        O(m^{-1}),
    \end{align*}
    where we used $m\|\Sigma_{IX}\|_{\mathrm{op}}\le C$ and $\ti n_k\asymp m$. Hence
    $v^\top \bar c^\top E_k u=O_p(m^{-1/2})$ for each fixed $u,v$.
    Since $d$ is fixed, $\|\bar c^\top E_k\|_{\mathrm{op}}=O_p(m^{-1/2})$.
    The same argument yields $\|\bar c^\top E\|_{\mathrm{op}}=O_p(m^{-1/2})$.
    
    \paragraph{(ii).}
    Fix unit vectors $u,v\in\mathbb R^d$ and $h\ne k$. Because folds are independent,
    $\mathbb E[u^\top(E_h^\top E_k)v]=0$ and
    \begin{equation*}
        u^\top(E_h^\top E_k)v
        =
        (E_h u)^\top(E_k v).
    \end{equation*}
    For independent mean-zero vectors $A,B\in\mathbb R^m$,
    $\var(A^\top B)=\tr(\var(A)\var(B))$, so
    \begin{equation*}
        \var\big(u^\top(E_h^\top E_k)v\big)
        =
        \tr\big(\var(E_h u)\var(E_k v)\big)
        \le
        m\|\var(E_h u)\|_{\mathrm{op}}\|\var(E_k v)\|_{\mathrm{op}}.
    \end{equation*}
    Now $\var(\mathrm{vec}(E_h))=\frac{m}{\ti n_h}\Sigma_{IX}$ implies
    $\|\var(E_h u)\|_{\mathrm{op}}\le \frac{m}{\ti n_h}\|\Sigma_{IX}\|_{\mathrm{op}}$ and similarly for $E_k v$, hence
    \begin{equation*}
        \var\big(u^\top(E_h^\top E_k)v\big)
        \le
        m\Big(\frac{m}{\ti n_h}\|\Sigma_{IX}\|_{\mathrm{op}}\Big)\Big(\frac{m}{\ti n_k}\|\Sigma_{IX}\|_{\mathrm{op}}\Big)
        \le
        m\Big(\frac{m}{\Theta(m)}\frac{C}{m}\Big)^2
        =
        O(m^{-1}).
    \end{equation*}
    Therefore $u^\top(E_h^\top E_k)v=O_p(m^{-1/2})$ for each fixed $u,v$,
    and since $d$ is fixed this yields $\|E_h^\top E_k\|_{\mathrm{op}}=O_p(m^{-1/2})$.
    
    \paragraph{(iii).}
    We show $\|E^\top H\|_2=O_p(m^{-1/2})$; the other statements are analogous.
    Fix a unit vector $u\in\mathbb R^d$. By independence of the $(\ti I,\ti X)$ and $(I,Y)$ samples,
    $\mathbb E[u^\top E^\top H]=0$ and conditioning on $E$ gives
    \begin{align*}
        \var(u^\top E^\top H\mid E)
        &=
        u^\top E^\top \var(H) E u
        =
        u^\top E^\top \Big(\frac{m}{n}\Sigma_{IY}\Big)E u\\
        &\le
        \frac{m}{n}\|\Sigma_{IY}\|_{\mathrm{op}}\|Eu\|_2^2
        \le
        \frac{m}{n}\frac{C}{m}\|E\|_{\mathrm{op}}^2.
    \end{align*}
    Taking expectation and using $\mathbb E\|E\|_{\mathrm{op}}^2\le \mathbb E\|E\|_F^2
    =\tr(\var(\mathrm{vec}(E)))=\frac{m}{\ti n}\tr(\Sigma_{IX})
    \le \frac{m}{\ti n}\cdot md \|\Sigma_{IX}\|_{\mathrm{op}}
    \le \frac{m}{\ti n}\cdot md\cdot \frac{C}{m}
    =O(1)$ yields
    \begin{equation*}
        \var(u^\top E^\top H)=O(m^{-1}).
    \end{equation*}
    Thus $u^\top E^\top H=O_p(m^{-1/2})$ for each fixed $u$, and since $d$ is fixed,
    $\|E^\top H\|_2=O_p(m^{-1/2})$.
    The bound for $\|E_k^\top H\|_2$ follows by the same argument with $\ti n$ replaced by $\ti n_k$.
    Finally, $\|\bar c^\top H\|_2=O_p(m^{-1/2})$ follows from
    \begin{equation*}
        \var(\bar c^\top H)=\bar c^\top \var(H)\bar c
        =
        \bar c^\top\Big(\frac{m}{n}\Sigma_{IY}\Big)\bar c
        \le
        \frac{m}{n}\|\Sigma_{IY}\|_{\mathrm{op}}\|\bar c\|_F^2
        \le
        \frac{m}{\Theta(m)}\frac{C}{m}\|\bar c\|_F^2
        =
        O(m^{-1}),
    \end{equation*}
    using $\|\bar c\|_2^2=m\|c\|_2^2=\tr(m c^\top c)=O(1)$.
    
    \paragraph{(iv).}
    Assume $d=1$ and write $Z_j\coloneqq IX-c\in\mathbb R^m$ for the centered summands in the first-stage covariance.
    Then
    \begin{equation*}
        E
        =
        \sqrt m\Big(\frac{1}{\ti n}\sum_{j=1}^{\ti n} Z_j\Big),
        \qquad
        \|E\|_2^2
        =
        \frac{m}{\ti n^2}\sum_{j=1}^{\ti n}\|Z_j\|_2^2
        +
        \frac{m}{\ti n^2}\sum_{i\ne j} Z_i^\top Z_j
        \eqqcolon A_m + B_m.
    \end{equation*}
    We have
    \begin{equation*}
        \mathbb E[A_m]
        =
        \frac{m}{\ti n}\mathbb E\|Z_1\|_2^2
        =
        \frac{m}{\ti n}\tr(\Sigma_{IX})
        \to
        \frac{b}{\ti r}.
    \end{equation*}
    Moreover, under uniformly bounded fourth moments ($\sup_m \mathbb{E} [\|IX - c\|_2^4] \leq \infty$) of $\|Z_1\|_2$,
    $\var(A_m)=O(1/m)$, hence $A_m-\mathbb E[A_m]=o_p(1)$.
    
    Next, $\mathbb E[B_m]=0$ by independence and centering.
    For the variance of $B_m$, write
    \begin{equation*}
        S \coloneqq \sum_{i\ne j} Z_i^\top Z_j = 2\sum_{1\le i<j\le \ti n} Z_i^\top Z_j.
    \end{equation*}
    We claim that the summands $Z_i^\top Z_j$ are pairwise uncorrelated across distinct unordered pairs.
    Indeed, if $(i,j)\ne (k,\ell)$ as unordered pairs, then either the pairs are disjoint, in which case independence gives zero covariance, or they share exactly one index, say $i=k$ and $j\ne \ell$, in which case
    \begin{equation*}
        \mathbb E[(Z_i^\top Z_j)(Z_i^\top Z_\ell)]
        = \mathbb E\!\left[\,Z_i^\top \,\mathbb E[Z_j]\, Z_i^\top \,\mathbb E[Z_\ell]\,\right]=0
    \end{equation*}
    since $Z_j,Z_\ell$ are independent of $Z_i$ and mean zero. Hence
    \begin{equation*}
        \operatorname{Var}\Big(\sum_{i<j} Z_i^\top Z_j\Big)
        = \binom{\ti n}{2}\operatorname{Var}(Z_1^\top Z_2).
    \end{equation*}
    Moreover, by independence and centering,
    \begin{equation*}
        \operatorname{Var}(Z_1^\top Z_2)=\mathbb E[(Z_1^\top Z_2)^2]
        = \operatorname{tr}(\Sigma_{IX}^2).
    \end{equation*}
    Therefore
    \begin{equation*}
        \operatorname{Var}(S)=4\binom{\ti n}{2}\operatorname{tr}(\Sigma_{IX}^2)
        =2\ti n(\ti n-1)\operatorname{tr}(\Sigma_{IX}^2),
    \end{equation*}
    and thus
    \begin{equation*}
        \operatorname{Var}(B_m)
        =\Big(\frac{m}{\ti n^2}\Big)^2 \operatorname{Var}(S)
        =2\frac{m^2(\ti n-1)}{\ti n^3}\operatorname{tr}(\Sigma_{IX}^2)
        =O\Big(\frac{m^2}{\ti n^2}\operatorname{tr}(\Sigma_{IX}^2)\Big).
    \end{equation*}
    Since $\ti n\asymp m$, this is $O(\operatorname{tr}(\Sigma_{IX}^2))$.

    Since $\tr(\Sigma_{IX}^2)\le md \|\Sigma_{IX}\|_{\mathrm{op}}^2\le md (C/m)^2=O(1/m)\to 0$, we get $B_m=o_p(1)$.
    Combining yields $\|E\|_2^2=A_m+B_m\overset{p}{\to} b/\ti r$.
\end{proof}

Recall the normalized cross-moment definitions
\begin{equation*}
    C_{XX}
    \coloneqq
    \frac{m}{K(K-1)}\sum_{h\ne k}\widehat{\cov}_h(\ti I,\ti X)^\top \widehat{\cov}_k(\ti I,\ti X),
    \qquad
    C_{XY}
    \coloneqq
    m \widehat{\cov}(\ti I,\ti X)^\top \widehat{\cov}(I,Y).
\end{equation*}
Equivalently, with $\hat c_k=\sqrt m \widehat{\cov}_k(\ti I,\ti X)$, $\hat c=\sqrt m \widehat{\cov}(\ti I,\ti X)$ and $\hat s=\sqrt m \widehat{\cov}(I,Y)$,
\begin{equation*}
    C_{XX}
    =
    \frac{1}{K(K-1)}\sum_{h\ne k}\hat c_h^\top \hat c_k,
    \qquad
    C_{XY}
    =
    \hat c^\top \hat s.
\end{equation*}
Let $\bar c=\sqrt m c$ and $\bar s=\sqrt m s$. Under \Cref{ass:1}, $s=c\beta^*$ and thus $\bar s=\bar c \beta^*$.
Write $\hat c_k=\bar c+E_k$, $\hat c=\bar c+E$, and $\hat s=\bar s+H$ as in Lemma~\ref{lem:scalar-rates-scaled}.

Expand
\begin{align*}
    C_{XX}
    &=
    \frac{1}{K(K-1)}\sum_{h\ne k}(\bar c+E_h)^\top(\bar c+E_k)\\
    &=
    \bar c^\top \bar c
    +
    \frac{1}{K(K-1)}\sum_{h\ne k}\bar c^\top E_k
    +
    \frac{1}{K(K-1)}\sum_{h\ne k}E_h^\top \bar c
    +
    \frac{1}{K(K-1)}\sum_{h\ne k}E_h^\top E_k.
\end{align*}
By Lemma~\ref{lem:scalar-rates-scaled}(i) and (ii), each of the last three terms is $o_p(1)$ since $K$ is fixed.
Therefore
\begin{equation*}
    C_{XX}
    =
    \bar c^\top \bar c + o_p(1)
    =
    m c^\top c + o_p(1)
    \overset{p}{\to} Q.
\end{equation*}
Expand
\begin{align*}
    C_{XY}
    &=
    (\bar c+E)^\top(\bar s+H)
    =
    \bar c^\top \bar s
    +
    \bar c^\top H
    +
    E^\top \bar s
    +
    E^\top H.
\end{align*}
Since $\bar s=\bar c \beta^*$,
\begin{equation*}
    \bar c^\top \bar s
    =
    \bar c^\top \bar c \beta^*
    =
    m c^\top c \beta^*
    \to
    Q\beta^*.
\end{equation*}
Moreover, Lemma~\ref{lem:scalar-rates-scaled}(iii) gives $\bar c^\top H=o_p(1)$ and $E^\top H=o_p(1)$, and Lemma~\ref{lem:scalar-rates-scaled}(i) gives $E^\top \bar s=(\bar c^\top E)^\top\beta^*=o_p(1)$.
Hence $C_{XY}\overset{p}{\to}Q\beta^*$.

By assumption $W_N\overset{p}{\to}W_0\succ 0$, and we have shown that $C_{XX}\overset{p}{\to}Q\succ 0$.
Therefore
\begin{equation*}
(C_{XX}^\top W_N^{-1}C_{XX})^{-1}C_{XX}^\top W_N^{-1}
\overset{p}{\to}
(Q^\top W_0^{-1}Q)^{-1}Q^\top W_0^{-1}.
\end{equation*}
    Multiplying by $C_{XY}\overset{p}{\to}Q\beta^*$ yields
\begin{equation*}
    \hat\beta^{\mathrm{UP},\mathrm{HD}}_{\mathrm{GMM}}(W_N)
    =
    (C_{XX}^\top W_N^{-1}C_{XX})^{-1}C_{XX}^\top W_N^{-1}C_{XY}
    \overset{p}{\to}\beta^*.
\end{equation*}

\subsection[k]{Proof of \Cref{thm:rates-large-m-sparse-weak}}
\label{sec:rates-large-m-sparse-weak}
This proof follows the same ideas as the proof of \Cref{thm:ratesSparse}.

\begin{lemma}
\label{lem:scorebound-large-m}
    Under Assumption~\ref{ass:RE-large-m-weak},
    \begin{equation*}
        \|C_{XY}-C_{XX}\beta^*\|_2 = O_p(m^{-1/2}),
    \end{equation*}
    and consequently
    \begin{equation*}
        \big\|C_{XX}^\top W_m\big(C_{XY}-C_{XX}\beta^*\big)\big\|_\infty = O_p(m^{-1/2}).
    \end{equation*}
\end{lemma}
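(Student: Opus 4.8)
The plan is to follow the structure of \Cref{lem:scorebound-finite-m}, reducing the bound to the foldwise rates already established in \Cref{lem:scalar-rates-scaled}. I would reuse the notation of the proof of \Cref{thm:cf-gmm-weak}: write $\hat c_k=\bar c+E_k$, $\hat c=\bar c+E$, $\hat s=\bar s+H$, where $\bar c=\sqrt m\cov(I,X)$, $\bar s=\sqrt m\cov(I,Y)$, and recall that \Cref{ass:1} forces $\bar s=\bar c\beta^*$. Substituting $\hat s=\bar c\beta^*+H$ into $C_{XY}=\hat c^\top\hat s$, and $\hat c_h=\bar c+E_h$, $\hat c_k=\bar c+E_k$ into $C_{XX}=\frac{1}{K(K-1)}\sum_{h\ne k}\hat c_h^\top\hat c_k$, the leading term $\bar c^\top\bar c\beta^*$ occurs in both $C_{XY}$ and $C_{XX}\beta^*$ and cancels, leaving
\begin{align*}
    C_{XY}-C_{XX}\beta^*
    &= (E^\top\bar c)\beta^* + \bar c^\top H + E^\top H \\
    &\quad - \frac{1}{K(K-1)}\sum_{h\ne k}\bigl(\bar c^\top E_k + E_h^\top\bar c + E_h^\top E_k\bigr)\beta^*.
\end{align*}

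Next I would bound each of the (finitely many, since $K$ is fixed) summands using \Cref{lem:scalar-rates-scaled}: part~(i) gives $\|\bar c^\top E\|_{\mathrm{op}}=O_p(m^{-1/2})$ and $\|\bar c^\top E_k\|_{\mathrm{op}}=O_p(m^{-1/2})$ for each $k$, so by taking transposes $\|E^\top\bar c\|_{\mathrm{op}}$ and $\|E_h^\top\bar c\|_{\mathrm{op}}$ are $O_p(m^{-1/2})$ as well; part~(iii) gives $\|\bar c^\top H\|_2=O_p(m^{-1/2})$ and $\|E^\top H\|_2=O_p(m^{-1/2})$; and part~(ii) gives $\|E_h^\top E_k\|_{\mathrm{op}}=O_p(m^{-1/2})$ for $h\ne k$. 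Combining these with $\|\beta^*\|_2=O(1)$ and the triangle inequality yields $\|C_{XY}-C_{XX}\beta^*\|_2=O_p(m^{-1/2})$. The conceptual point is that the troublesome measurement-error term $E^\top E$ of the naive estimator (cf.\ \Cref{lem:inconst}) has here been replaced by off-diagonal products $E_h^\top E_k$ with $h\ne k$, which are small precisely because the folds are independent. For the second claim I would note that the same expansion (or the argument in the proof of \Cref{thm:cf-gmm-weak}) gives $C_{XX}=\bar c^\top\bar c+o_p(1)\overset{p}{\to}Q$, hence $\|C_{XX}\|_{\mathrm{op}}=O_p(1)$, and that $W_m\overset{p}{\to}W_0$ gives $\|W_m\|_{\mathrm{op}}=O_p(1)$, so
\begin{equation*}
    \bigl\|C_{XX}^\top W_m\bigl(C_{XY}-C_{XX}\beta^*\bigr)\bigr\|_\infty
    \le \|C_{XX}\|_{\mathrm{op}}\,\|W_m\|_{\mathrm{op}}\,\|C_{XY}-C_{XX}\beta^*\|_2
    = O_p(m^{-1/2}).
\end{equation*}

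I do not expect a genuine obstacle: the work is entirely contained in \Cref{lem:scalar-rates-scaled}, and what remains is careful bookkeeping of the expansion above and verifying that no term is of larger order. The only point needing mild care is that \Cref{lem:scalar-rates-scaled} is stated under \Cref{ass:hd-weak}, whereas \Cref{thm:rates-large-m-sparse-weak} assumes \Cref{ass:RE-large-m-weak}; its conclusions~(i)--(iii), however, only use the operator-norm bounds of \Cref{ass:hd-weak}~\ref{ass:hd-weak:3} (supplied by \Cref{ass:RE-large-m-weak}~\ref{ass:RE-large-m-weak:4}) and the boundedness $\|\bar c\|_F^2=\tr(m\cov(I,X)^\top\cov(I,X))\to\tr(Q)$ (supplied by \Cref{ass:RE-large-m-weak}~\ref{ass:RE-large-m-weak:1}), so they carry over unchanged.
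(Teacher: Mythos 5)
Your proposal is correct and follows essentially the same route as the paper's proof: the identical expansion of $C_{XY}-C_{XX}\beta^*$ into $\bar c^\top H+(E^\top\bar c)\beta^*+E^\top H$ minus the remainder $\frac{1}{K(K-1)}\sum_{h\ne k}(\bar c^\top E_k+E_h^\top\bar c+E_h^\top E_k)\beta^*$, termwise bounds from Lemma~\ref{lem:scalar-rates-scaled}(i)--(iii), and the same $\|C_{XX}\|_{\mathrm{op}}\|W_m\|_{\mathrm{op}}$ argument for the second display. Your closing remark about why Lemma~\ref{lem:scalar-rates-scaled} remains applicable under Assumption~\ref{ass:RE-large-m-weak} is a point the paper leaves implicit, and it is correctly resolved.
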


\begin{proof}
    Write $c\coloneqq \cov(I,X)\in\mathbb R^{m\times d}$ and $s\coloneqq \cov(I,Y)\in\mathbb R^{m}$.
    Set $\bar c\coloneqq \sqrt m\,c$ and $\bar s\coloneqq \sqrt m\,s$. Under Assumption~\ref{ass:1},
    $s=c\beta^*$ and thus $\bar s=\bar c\,\beta^*$.
    
    Let $\hat c \coloneqq \sqrt m\,\widehat{\cov}(\ti I,\ti X)$ and $\hat s \coloneqq \sqrt m\,\widehat{\cov}(I,Y)$.
    For fold $k$, let $\hat c_k \coloneqq \sqrt m\,\widehat{\cov}_k(\ti I,\ti X)$.
    Write the centered errors as
    \begin{equation*}
        \hat c=\bar c+E,
        \qquad
        \hat s=\bar s+H,
        \qquad
        \hat c_k=\bar c+E_k.
    \end{equation*}
    Then $C_{XY}=\hat c^\top \hat s$ and $C_{XX}=\frac{1}{K(K-1)}\sum_{h\ne k}\hat c_h^\top \hat c_k$.
    Expanding,
    \begin{equation*}
        C_{XY}
        =(\bar c+E)^\top(\bar s+H)
        =\bar c^\top\bar c\,\beta^* + \bar c^\top H + (E^\top\bar c)\beta^* + E^\top H,
    \end{equation*}
    and
    \begin{equation*}
        C_{XX}
        =\bar c^\top\bar c + R_m,
        \qquad
        R_m \coloneqq \frac{1}{K(K-1)}\sum_{h\ne k}\Big(\bar c^\top E_k + E_h^\top\bar c + E_h^\top E_k\Big).
    \end{equation*}
    Therefore,
    \begin{equation*}
        C_{XY}-C_{XX}\beta^*
        =
        \bar c^\top H + (E^\top\bar c)\beta^* + E^\top H - R_m\beta^*.
    \end{equation*}
    By Lemma~\ref{lem:scalar-rates-scaled}(iii), $\|\bar c^\top H\|_2=O_p(m^{-1/2})$ and $\|E^\top H\|_2=O_p(m^{-1/2})$.
    By Lemma~\ref{lem:scalar-rates-scaled}(i), $\|\bar c^\top E\|_{\mathrm{op}}=O_p(m^{-1/2})$, hence
    \begin{equation*}
        \|(E^\top\bar c)\beta^*\|_2
        \le \|\bar c^\top E\|_{\mathrm{op}}\|\beta^*\|_2
        = O_p(m^{-1/2})
    \end{equation*}
    since $\|\beta^*\|_2=O(1)$.
    Finally, by Lemma~\ref{lem:scalar-rates-scaled}(i) and (ii), each summand in $R_m$ is $O_p(m^{-1/2})$ in operator norm, and since $K$ is fixed, the finite average satisfies $\|R_m\|_{\mathrm{op}}=O_p(m^{-1/2})$, hence $\|R_m\beta^*\|_2=O_p(m^{-1/2})$.
    By the triangle inequality,
    \begin{equation*}
        \|C_{XY}-C_{XX}\beta^*\|_2 = O_p(m^{-1/2}).
    \end{equation*}
    
    Next, since $C_{XX}\overset{p}{\to}Q$ and $W_m\overset{p}{\to}W_0$, we have $\|C_{XX}\|_{\mathrm{op}}=O_p(1)$ and $\|W_m\|_{\mathrm{op}}=O_p(1)$.
    Therefore,
    \begin{equation*}
        \big\|C_{XX}^\top W_m(C_{XY}-C_{XX}\beta^*)\big\|_\infty
        \le \big\|C_{XX}^\top W_m(C_{XY}-C_{XX}\beta^*)\big\|_2
        \le \|C_{XX}\|_{\mathrm{op}}\|W_m\|_{\mathrm{op}}\|C_{XY}-C_{XX}\beta^*\|_2
        = O_p(m^{-1/2}).
    \end{equation*}
\end{proof}

Let $H_m\coloneqq C_{XX}^{\top} W_m C_{XX}$ and $b_m\coloneqq C_{XX}^{\top}W_m C_{XY}$. The objective equals $\frac12\beta^{\top}H_m\beta-b_m^{\top}\beta+\frac12\|W_m^{1/2}C_{XY}\|_2^2+\lambda_m\|\beta\|_1$. The basic inequality gives
\begin{equation*}
    \tfrac12\Delta^{\top}H_m\Delta \le \big\|C_{XX}^{\top}W_m(C_{XY}-C_{XX}\beta^*)\big\|_{\infty}\|\Delta\|_1+\lambda_m(\|\beta^*\|_1-\|\beta^*+\Delta\|_1),
\end{equation*}
for $\Delta\coloneqq \hat\beta^{\mathrm{UP}, \mathrm{HD}}_{\mathrm{GMM}, \ell_1}-\beta^*$. By \Cref{lem:scorebound-large-m} and the choice of $\lambda_m$, with probability approaching one the score term is at most $\lambda_m/2$, yielding the cone constraint $\|\Delta_{S^{*c}}\|_1\le 3\|\Delta_{S^*}\|_1$.
On this cone, $H_m\to Q ^{\top}W_0 Q $ (by \Cref{ass:RE-large-m-weak}~\ref{ass:RE-large-m-weak:4} and fixed $d$). \Cref{ass:RE-large-m-weak}~\ref{ass:RE-large-m-weak:2} then implies $\Delta^{\top}H_m\Delta\gtrsim\kappa\|\Delta\|_2^2$ for large enough $m$. Therefore $\|\Delta\|_2\lesssim\sqrt{s^*}\lambda_m$ and $\|\Delta\|_1\lesssim s^*\lambda_m$, proving the rates. The beta–min condition yields support recovery.

\end{document}